\def\eqref#1{equation~\ref{#1}}
\def\1{\bm{1}}
\DeclareMathAlphabet{\mathsfit}{\encodingdefault}{\sfdefault}{m}{sl}
\SetMathAlphabet{\mathsfit}{bold}{\encodingdefault}{\sfdefault}{bx}{n}
\newlength{\maxwidth}
\newcommand{\algalign}[2]%
{\makebox[\maxwidth][r]{$#1{}$}${}#2$}
\DeclareDocumentCommand{\jingAlgo}{ m O{=\ } }{%
	{\rlap{$#1$} \hphantom{text}$#2$}%
}
\tiny\color{gray},                       
\footnotesize\color{darkgray},           
\it\color[RGB]{0,96,96},                
\slshape\color[RGB]{128,0,0},   
\definecolor{blue}{rgb}{0, 0, .7}
\definecolor{darkred}{rgb}{.5, 0, 0}
\definecolor{darkgreen}{rgb}{0, .5, 0}
\definecolor{blue}{rgb}{0, 0, .7}
\definecolor{darkred}{rgb}{.5, 0, 0}
\definecolor{darkgreen}{rgb}{0, .5, 0}
\DeclareMathAlphabet{\mathcal}{OMS}{cmsy}{m}{n}
\SetMathAlphabet{\mathcal}{bold}{OMS}{cmsy}{b}{n}
\newcommand{\bW}{\boldsymbol{W}}
\newcommand{\bw}{\boldsymbol{w}}
\newcommand{\bX}{\boldsymbol{X}}
\newcommand{\bY}{\boldsymbol{Y}}
\newcommand{\bU}{\boldsymbol{U}}
\newcommand{\bI}{\boldsymbol{I}}
\newcommand{\bx}{\boldsymbol{x}}
\newcommand{\by}{\boldsymbol{y}}
\newcommand{\bz}{\boldsymbol{z}}
\newcommand{\bu}{\boldsymbol{u}}
\newcommand{\bv}{\boldsymbol{v}}
\newcommand{\bq}{\boldsymbol{q}}
\newcommand{\be}{\boldsymbol{e}}
\newcommand{\ba}{\boldsymbol{a}}
\newcommand{\bb}{\boldsymbol{b}}
\newcommand{\bp}{\boldsymbol{p}}
\newcommand{\bK}{\boldsymbol{K}}
\newcommand{\bV}{\boldsymbol{V}}
\newcommand{\bP}{\boldsymbol{P}}
\newcommand{\bA}{\boldsymbol{A}}
\newcommand{\bB}{\boldsymbol{B}}
\newcommand{\bC}{\boldsymbol{C}}
\newcommand{\bJ}{\boldsymbol{J}}
\newcommand{\bM}{\boldsymbol{M}}
\newcommand{\bG}{\boldsymbol{G}}
\newcommand{\cU}{\boldsymbol{\mathcal{U}}}
\newcommand{\cV}{\boldsymbol{\mathcal{V}}}
\newcommand{\fu}{\boldsymbol{\mathfrak{u}}}
\newcommand{\fv}{\boldsymbol{\mathfrak{v}}}
\newcommand{\widesim}[2][1.5]{
  \mathrel{\overset{#2}{\scalebox{#1}[1]{$\sim$}}}
}
\definecolor{commentcolor}{RGB}{110,154,155}   
\newcommand{\ie}{\textit{i}.\textit{e}.}
\newcommand{\eg}{\textit{e}.\textit{g}.}
\definecolor{green2}{rgb}{0.21,0.74,0.49}
\itshape\color{green2},
\definecolor{rebuttal}{RGB}{134, 1, 175}
\theoremstyle{break}
\newtheorem{theorem}{Theorem}
\newtheorem{principle}{Principle}
\newtheorem{mrule}{Rule}
\newtheorem{definition}{Definition}
\newtheorem{proposition}{Proposition}
\newcommand{\myparagraph}[1]{\noindent\textbf{#1.}}
\title{\centering Taming Transformer \\ without using learning rate warmup}
\author{
Xianbiao Qi$^{1}$, Yelin He$^{1}$, Jiaquan Ye$^{1}$, Chun-Guang Li$^{2\dagger}$, Bojia Zi$^{3}$, Xili Dai$^{4}$, Qin Zou$^{5}$,
\textbf{Rong Xiao$^{1\dagger}$} \\
\textsuperscript{1}Intellifusion Inc. \quad 
\textsuperscript{2}BUPT \quad 
\textsuperscript{3}CUHK \quad 
\textsuperscript{4}HKUST (GZ) \quad 
\textsuperscript{5}WHU
}
\begin{document}


\maketitle
\renewcommand{\thefootnote}{} 
\footnotetext{$\dagger$ Corresponding authors}

\begin{abstract}
Scaling Transformer to a large scale without using some technical tricks such as learning rate warump and using an obviously lower learning rate is an extremely challenging task, and is increasingly gaining more attention. In this paper, we provide a theoretical analysis for the process of training Transformer and reveal the rationale behind the model crash phenomenon in the training process, termed \textit{spectral energy concentration} of ${\bW_q}^{\top} \bW_k$, which is the reason for a malignant entropy collapse, where ${\bW_q}$ and $\bW_k$ are the projection matrices for the query and the key in Transformer, respectively. 
To remedy this problem, motivated by \textit{Weyl's Inequality}, we present a novel optimization strategy, \ie, making the weight updating in successive steps smooth---if the ratio $\frac{\sigma_{1}(\nabla \bW_t)}{\sigma_{1}(\bW_{t-1})}$ is larger than a threshold, we will automatically bound the learning rate to a weighted multiple of $\frac{\sigma_{1}(\bW_{t-1})}{\sigma_{1}(\nabla \bW_t)}$, where $\nabla \bW_t$ is the updating quantity in step $t$. Such an optimization strategy can prevent spectral energy concentration to only a few directions, and thus can avoid malignant entropy collapse which will trigger the model crash. We conduct extensive experiments using ViT, Swin-Transformer and GPT, showing that our optimization strategy can effectively and stably train these Transformers without using learning rate warmup.
\end{abstract}

\epigraph{\textit{``{\large N}othing in life is to be feared. It is only to be understood.''}}{--- \textit{Marie Curie}}

\section{Introduction}
\label{sec:introduction}

Transformer~\citep{transformer_vaswani2017attention} has revolutionized various domains of artificial intelligence, including natural language processing~\citep{gpt1_radford2018improving, gpt2_radford2019language, gpt3_brown2020language, palm_chowdhery2023palm, llama2_touvron2023llama, llama3_dubey2024llama} and computer vision~\citep{vit_dosovitskiy2020image, liu2021swin} and many more applications~\citep{clip_radford2021learning, dalle1_ramesh2021zero, dit_peebles2023scalable}, owning 
to their ability to capture long-range dependencies through self-attention mechanisms. 
However, despite their widespread application and empirical success, training deep Transformer models remains quite challenging. Practitioners 
frequently encounter variant issues, such as gradient explosion~\citep{qi2023understanding}, rank collapse~\citep{rank_collapse_dong2021attention}, entropy collapse~\citep{stabilizing_transformer_zhai2023stabilizing} and general training instability~\citep{lipschitz_constant_kim2021lipschitz, qi2023understanding}, 
especially during the initial training stage. 

To address these challenges, researchers have proposed various modifications to the original Transformer architecture, including altering the placement of Layer Normalization~\citep{prenorm_wang2019learning, on_layer_norm_prenorm_xiong2020layer} (\eg, pre-LN vs. post-LN schemes), carefully conditioning the residual connections~\citep{rezero_bachlechner2021rezero}, and QKNorm~\citep{qk_norm_henry2020query, scaling22b_dehghani2023scaling} for the self-attention module. 
Similarly, DeepNet~\citep{deepnorm_wang2022deepnet} introduces a new normalization function to modify the residual connection in Transformer. ReZero~\citep{rezero_bachlechner2021rezero} 
introduces a learnable residual scalar parameter for the residual shortcut, and requires initiating it to 0 at the start stage of training.
More recent approaches~\citep{lipschitz_constant_kim2021lipschitz, lipsformer_qilipsformer} have focused on examining and enforcing Lipschitz continuity properties of Transformer components, which can provide insights into the network behavior and the training stability. 
\emph{Although there are a few 
works~\citep{rezero_bachlechner2021rezero, lipsformer_qilipsformer} that can avoid using learning rate warmup to train Transformer successfully, all of them 
require significant modifications of the network structure. 
}

\textit{Learning rate warmup}~\citep{warmup_loshchilov2016sgdr} seems to be a must-have technology for standard optimizers~\citep{sgd_robbins1951stochastic, adagrad_duchi2011adaptive, adam_kingma2014adam, adamw_IlyaLoshchilov2018FixingWD} in some popular large Transformer models~\citep{gpt1_radford2018improving, gpt2_radford2019language, gpt3_brown2020language, palm_chowdhery2023palm, llama2_touvron2023llama}. Without the learning rate warmup stage, the Transformer training will   
prone to diverge.

Although it is usual to train a Transformer by 
modifying the network structure as mentioned above or using the learning rate warmup, 
two natural and interesting questions remain: 
\begin{enumerate}[leftmargin=*]
\item \textit{What are 
the training dynamics of a Transformer model when its training fails or succeeds? }    
\item \textit{Can we successfully tame  
a Transformer without changing the network structure or without using learning rate warmup?}
\end{enumerate}

This paper aims to answer these questions. 
To answer the first question, we examine  
the training processes of three types of Transformers, 
by visualizing the changing trajectories along with the training process of 15 (or 13) quantities about the parameters, activations, and attention maps. 
By doing so, we observe that the model crash is accompanied by a weird phenomenon that the entropy of the attention map is almost 0 and the spectral norm of ${\bW_q}^{\top} \bW_k$ increases to a very large value. 
By conducting mathematical analysis for the Transformer training, we identify that the 
Spectral Energy Concentration (SEC) of ${\bW_q}^{\top} \bW_k$ is the key problem leading to the model crash. 
To answer the second question, 
motivated by Weyl' Inequality, we present a novel optimization strategy, \ie, making weight updating smooth, and verify empirically that our optimization strategy can prevent 
spectral energy concentration and thus achieving a stable convergence in training. 

\myparagraph{Paper Contributions} The contributions of the paper are highlighted as follows. 
\begin{itemize}[leftmargin=*]
\item We visualize the training dynamics of Transformers that train successfully or unsuccessfully and summarize two important observations from unsuccessful training that: 
a) the rank of the attention map matrix tends to very low and the entropy of attention probability matrix tends to 0; and 
b) $\sigma_1({\bW_q}^{\top} \bW_k)$ increases rapidly to a very large value. 

\item We present theoretical analysis for the Transformer training, 
finding that the Jacobian matrix $\frac{\partial \operatorname{vec}(\bP)}{\partial \operatorname{vec}({\bW_q}^{\top}{\bW_k} )} =  \bX^{\top} \otimes \bX^{\top}$, where $\bP = \bX^{\top} {\bW_q}^{\top} {\bW_k} {\bX}$. It implies that the gradient of ${\bW_q}^{\top}{\bW_k}$ is largely 
dominated by the rank of $\bX^{\top} \otimes \bX^{\top}$.

\item We reveal that 
SEC of $\boldsymbol{W}_q^{\top} \boldsymbol{W}_k$ makes the attention map matrix to be sparse yet low-rank and it is the 
reason leading to model crash.

\item Motivated by the Weyl's inequality, we introduce a novel strategy to address the problem of  
SEC of ${\bW_q}^{\top} \bW_k$ by controlling the rapid growth of singular values, and verify that our strategy leads to a stable training process. 

\end{itemize}

\section{Preliminaries}
\label{sec:preliminary}

\textbf{Matrix norm.} Given a matrix $\bW$, its 
$\ell_p$-norm 
is defined as: $\|\bW\|_p = \sup_{\bx \neq 0} \frac{\|\bW\bx\|_p}{\|\bx\|_p}$. 
When \( p = 2 \), the induced matrix norm is the \textit{spectral norm}, 
which is defined as the largest singular value of \( \bW \) and 
is also expressed as the square root of the largest eigenvalue of the Gram matrix  \( \bW^{\top}\bW \). 
The spectral norm of a matrix \( \bW \) can be calculated as: $\|\bW\|_2 = \max _{\bx \in S^{n-1}}\|\bA \bx\|_{2} = \sqrt{\lambda_{\max} (\bW^{\top}\bW)} = \sigma_{1} (\bW)$,
where \( \sigma_{1} (\bW) \) denotes the largest singular value of matrix \( \bW \) and $\lambda_{\max} (\bW^{\top}\bW)$ denotes the largest eigenvalue of $\bW^{\top}\bW$, 
and $S^{n-1}$ denotes a unit sphere in $\mathcal{R}^n$. 

\textbf{Power iteration to compute matrix spectral norm.} The power iteration algorithm starts with a vector \( \bx_0 \) of unit $\ell_2$-norm.  
The entire iteration process is as follows: 
$\bx_{k+1} = \frac{\bW\bx_k}{\|\bW\bx_k\|_2}$ for $k=0,\cdots,K-1$.
At every iteration, \( \bx_k \) is multiplied by matrix \( \bW \) and normalized. 
After $K$ iterations, 
$\|\bx_K\|_2$ is
used as the 
estimated spectral norm. 
Usually, it takes 3 to 5 iterations to converge, and thus the computation cost is cheap.

\textbf{Adam Optimizer.} Adam
~\citep{adam_kingma2014adam} is currently the most widely used optimizer for training neural networks, owing to its efficiency and effectiveness. Adam can be simply defined as: 
${\bM}_t = \beta_1  {\bM}_{t-1} +  (1 - \beta_1) {\bG}_t, \quad 
{\bV}_t = \beta_2 {\bV}_{t-1} +    (1 - \beta_2) {\bG}_t^2, \quad 
{\bW}_{t} = {\bW}_{t-1} - \alpha_t  {\bM}_t \oslash \sqrt{{{\bV}_{t}} + \epsilon }, $
where ${\bG}_t$ is the gradient at 
step $t$, ${\bG}_t^2$ is the element-wise square of ${\bG}_t$, and $\oslash$ denotes element-wise division, $\alpha_t$ is the learning rate at 
step $t$, and $\beta_1, \beta_2$ are the first-order and the second-order momentum factors, respectively.

\begin{figure}[htbp]
	\centering
	\begin{subfigure}{0.33\linewidth}
	\centering
	\includegraphics[width=4.5cm,height=3.8cm]{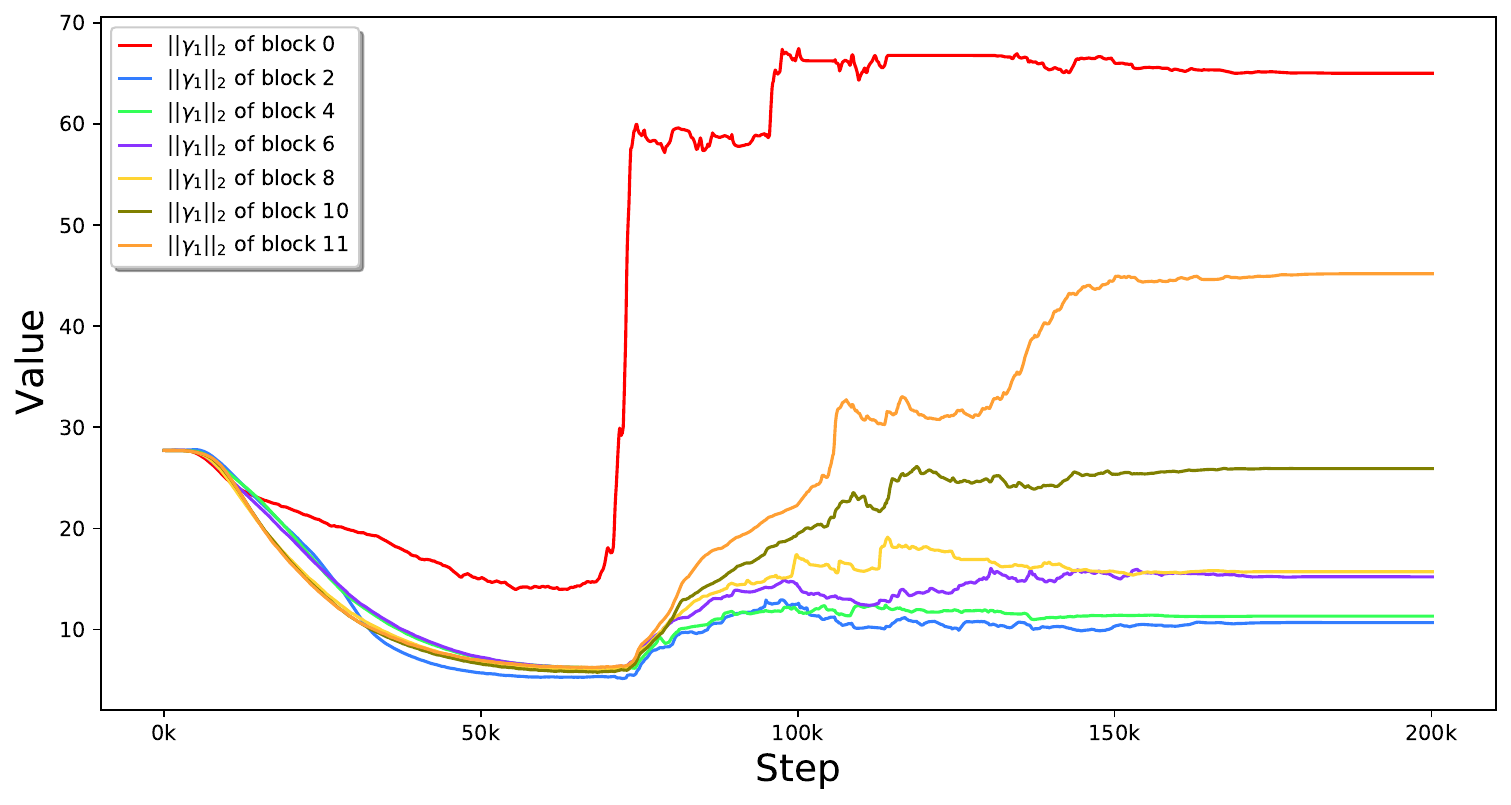}
	\caption*{(a) $\|\boldsymbol{\gamma_1}\|_2$}
	\end{subfigure} 
	\begin{subfigure}{0.33\linewidth}
		\centering
		\includegraphics[width=4.5cm,height=3.8cm]{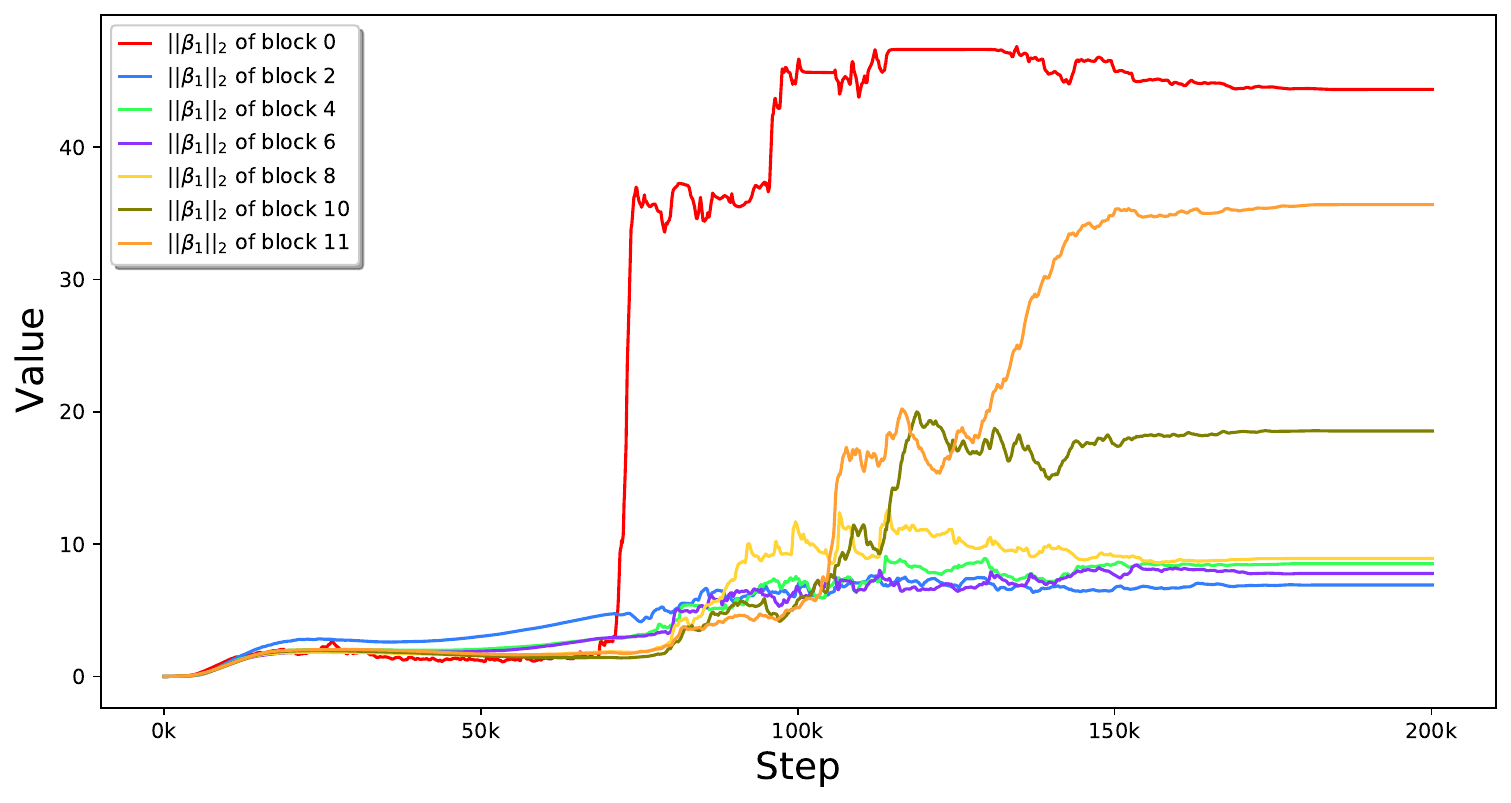}
		\caption*{(b) $\|\boldsymbol{\beta_1}\|_2$}		
	\end{subfigure}
	\begin{subfigure}{0.33\linewidth}
		\centering
		\includegraphics[width=4.5cm,height=3.8cm]{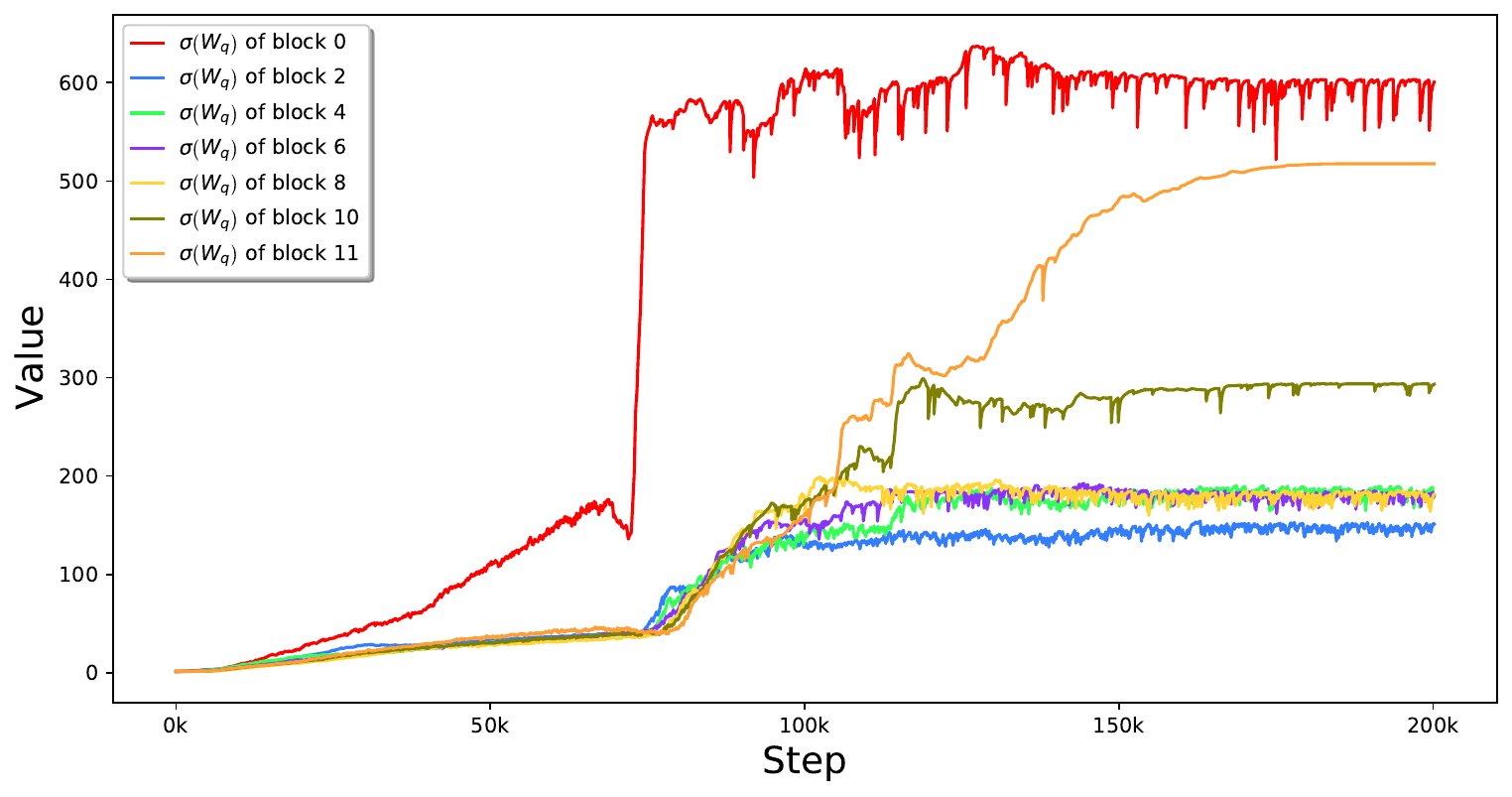}
		\caption*{ (c) $\sigma_1 \left({\bW_q}\right)$}		
	\end{subfigure}
        \begin{subfigure}{0.33\linewidth}
		\centering
		\includegraphics[width=4.5cm,height=3.8cm]{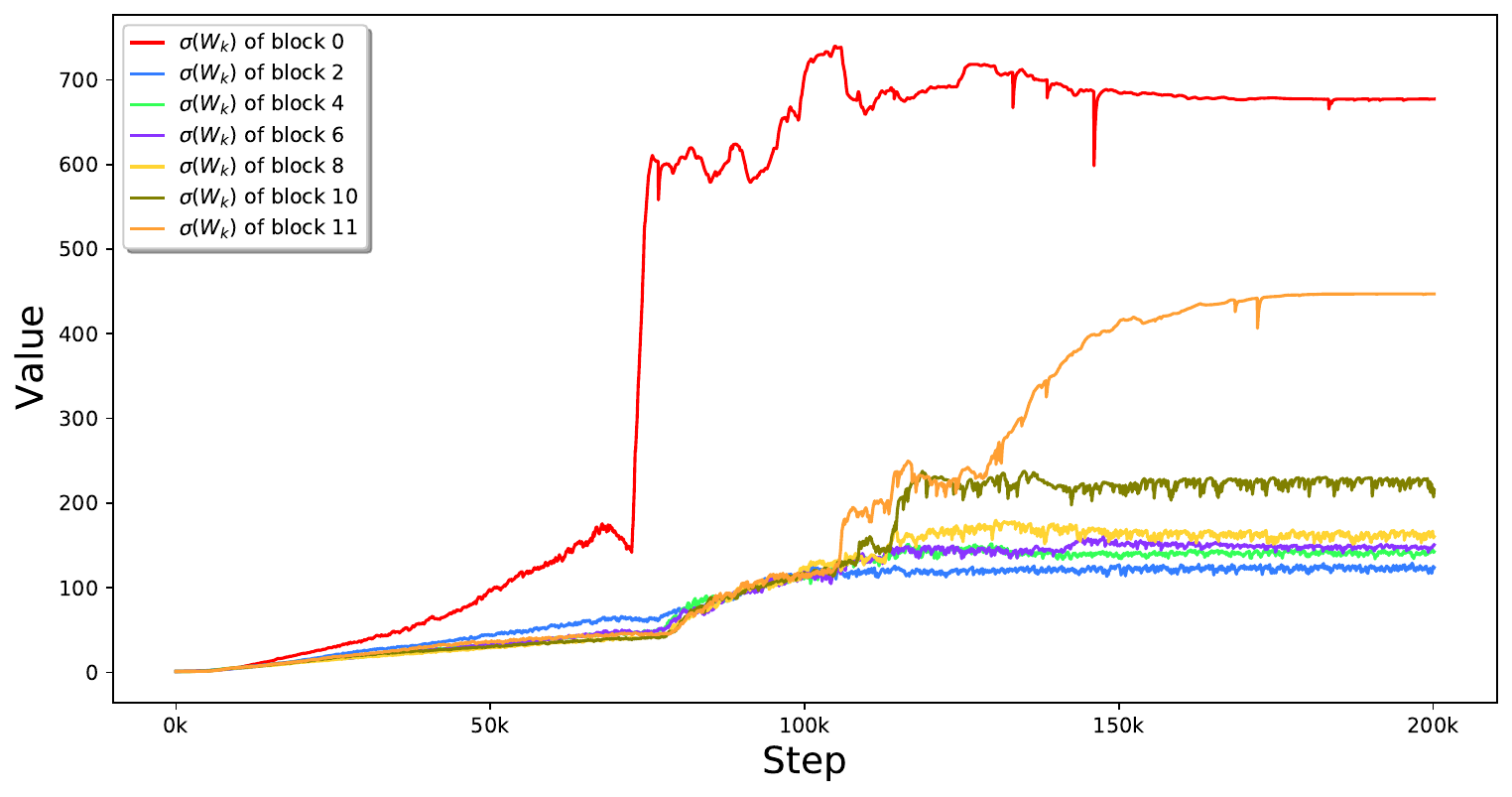}
		\caption*{(d) $\sigma_1\left({\bW_k}\right)$}		
	\end{subfigure}
	\begin{subfigure}{0.33\linewidth}
		\centering
		\includegraphics[width=4.5cm,height=3.8cm]{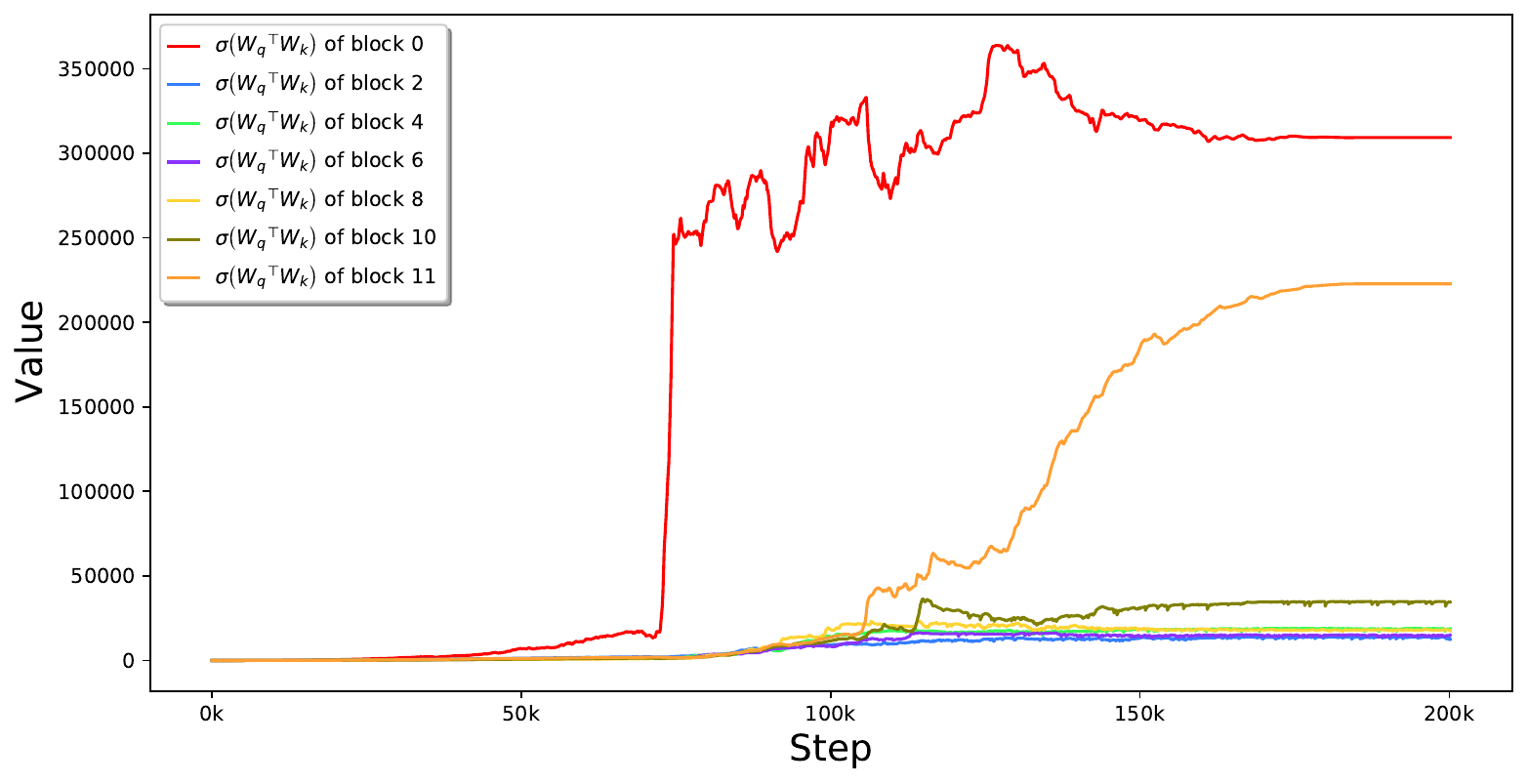}
		\caption*{(e) $\sigma_1\left({{\bW_q}^{\top}{\bW_k}}\right)$}
		\label{fig:vit_sub_success_wqwk}
	\end{subfigure}
	\begin{subfigure}{0.33\linewidth}
		\centering
		\includegraphics[width=4.5cm,height=3.8cm]{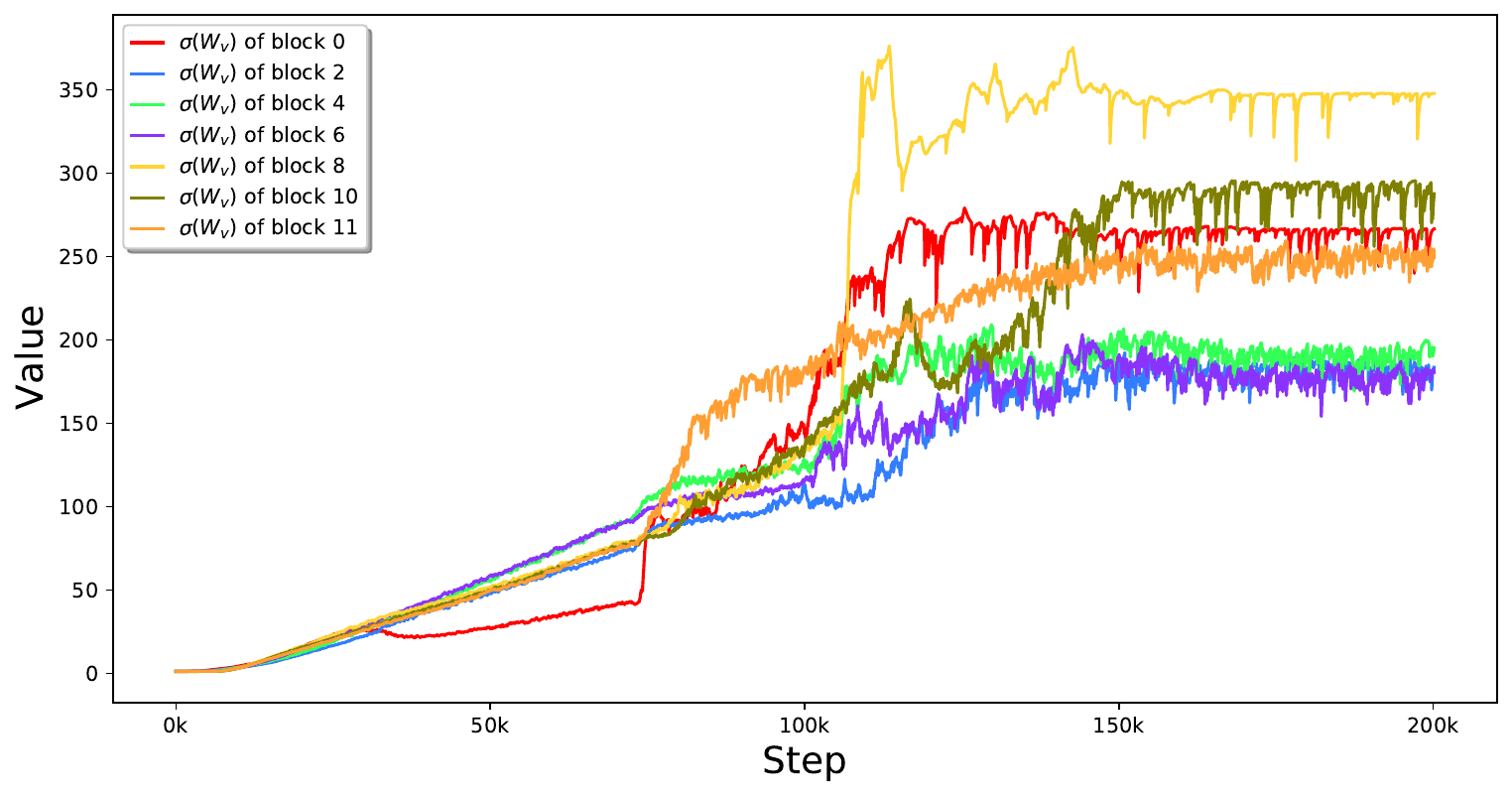}
		\caption*{(f) $\sigma_1\left({\bW_v}\right)$}		
	\end{subfigure}

        \begin{subfigure}{0.33\linewidth}
		\centering
		\includegraphics[width=4.5cm,height=3.8cm]{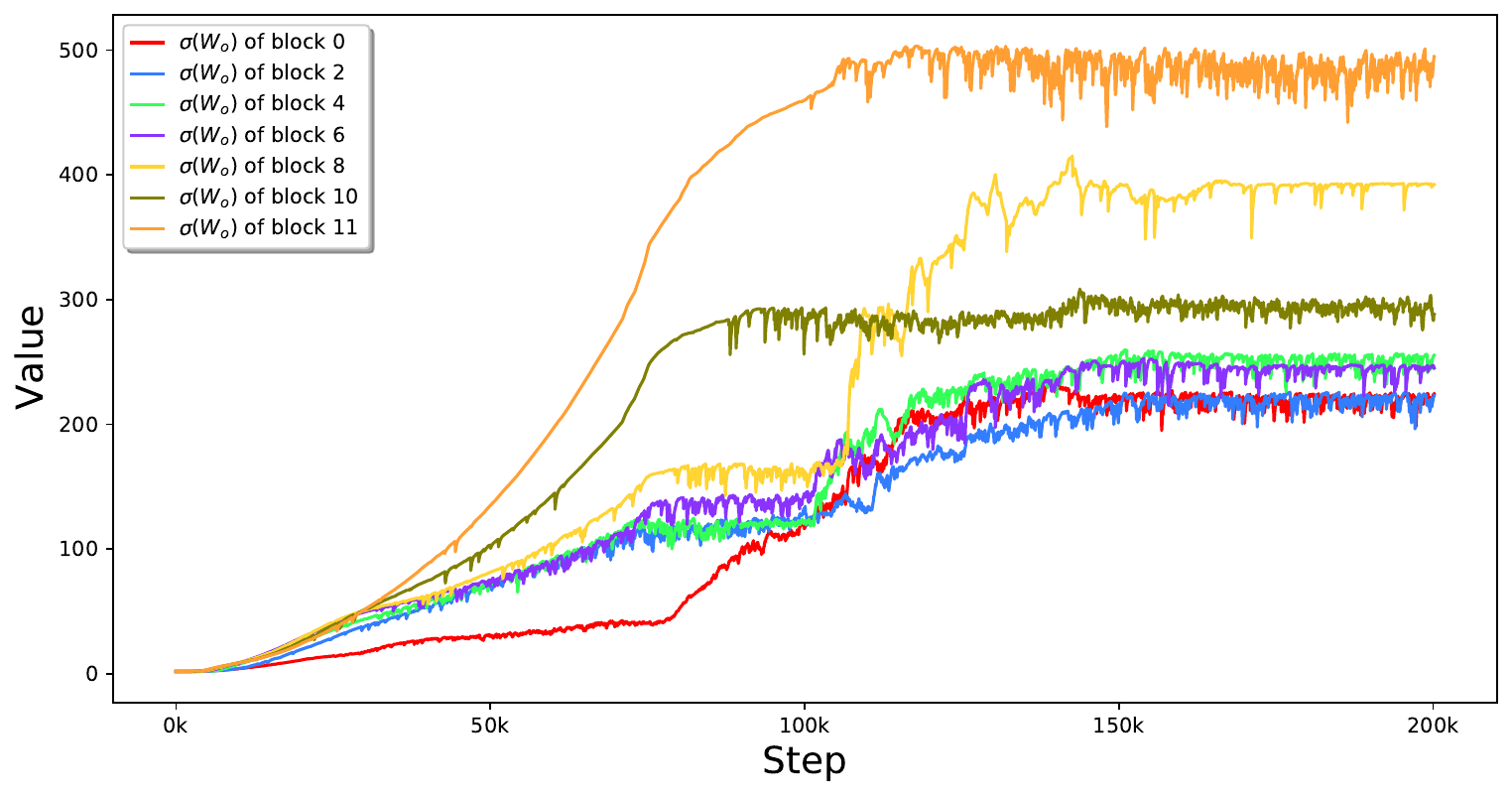}
		\caption*{(g) $\sigma_1\left({\bW_o}\right)$}
		
	\end{subfigure}
	\begin{subfigure}{0.33\linewidth}
		\centering
		\includegraphics[width=4.5cm,height=3.8cm]{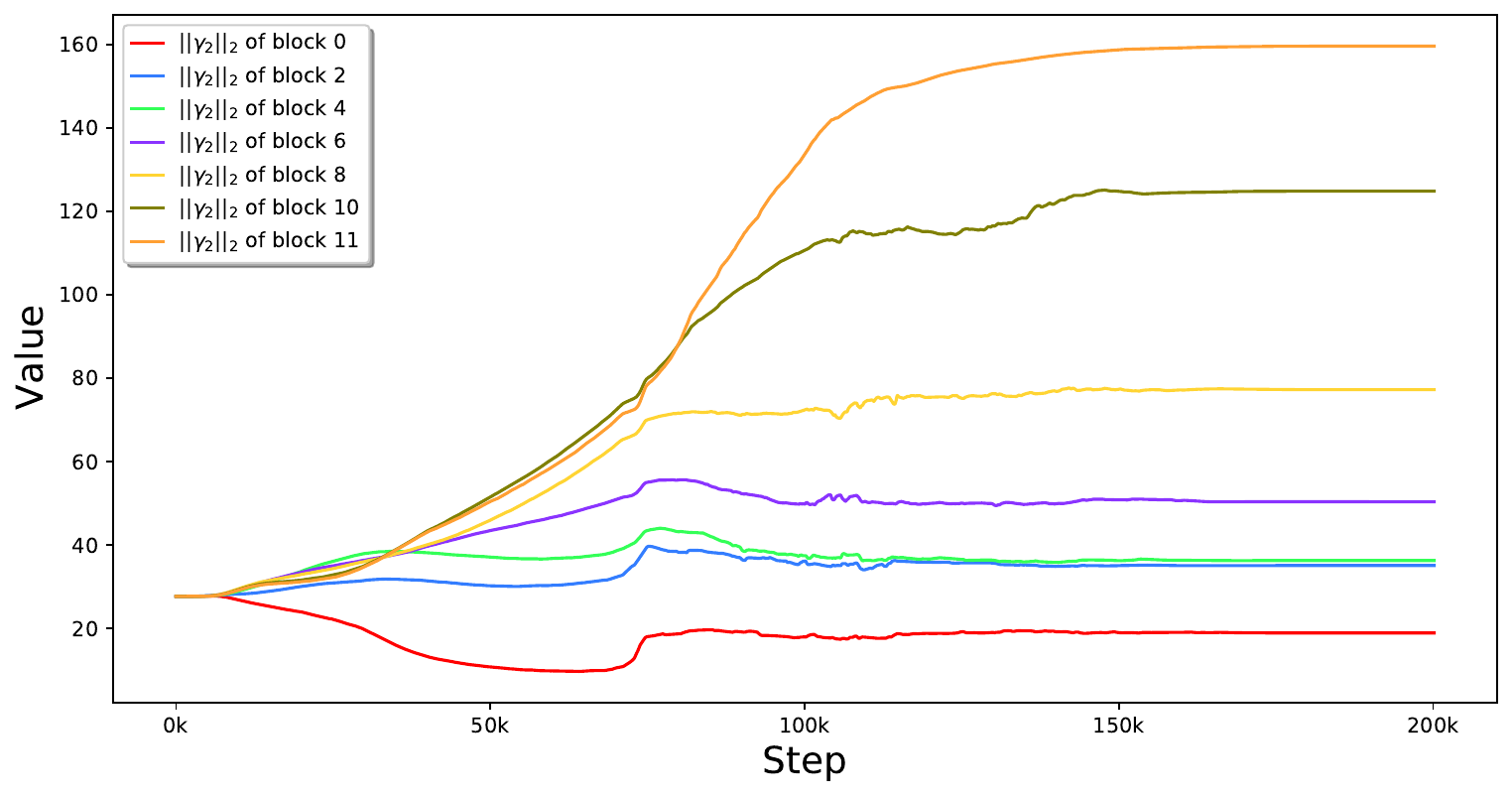}
		\caption*{(h) $\|\boldsymbol{\gamma_2}\|_2$}
		
	\end{subfigure}
	\begin{subfigure}{0.33\linewidth}
		\centering
		\includegraphics[width=4.5cm,height=3.8cm]{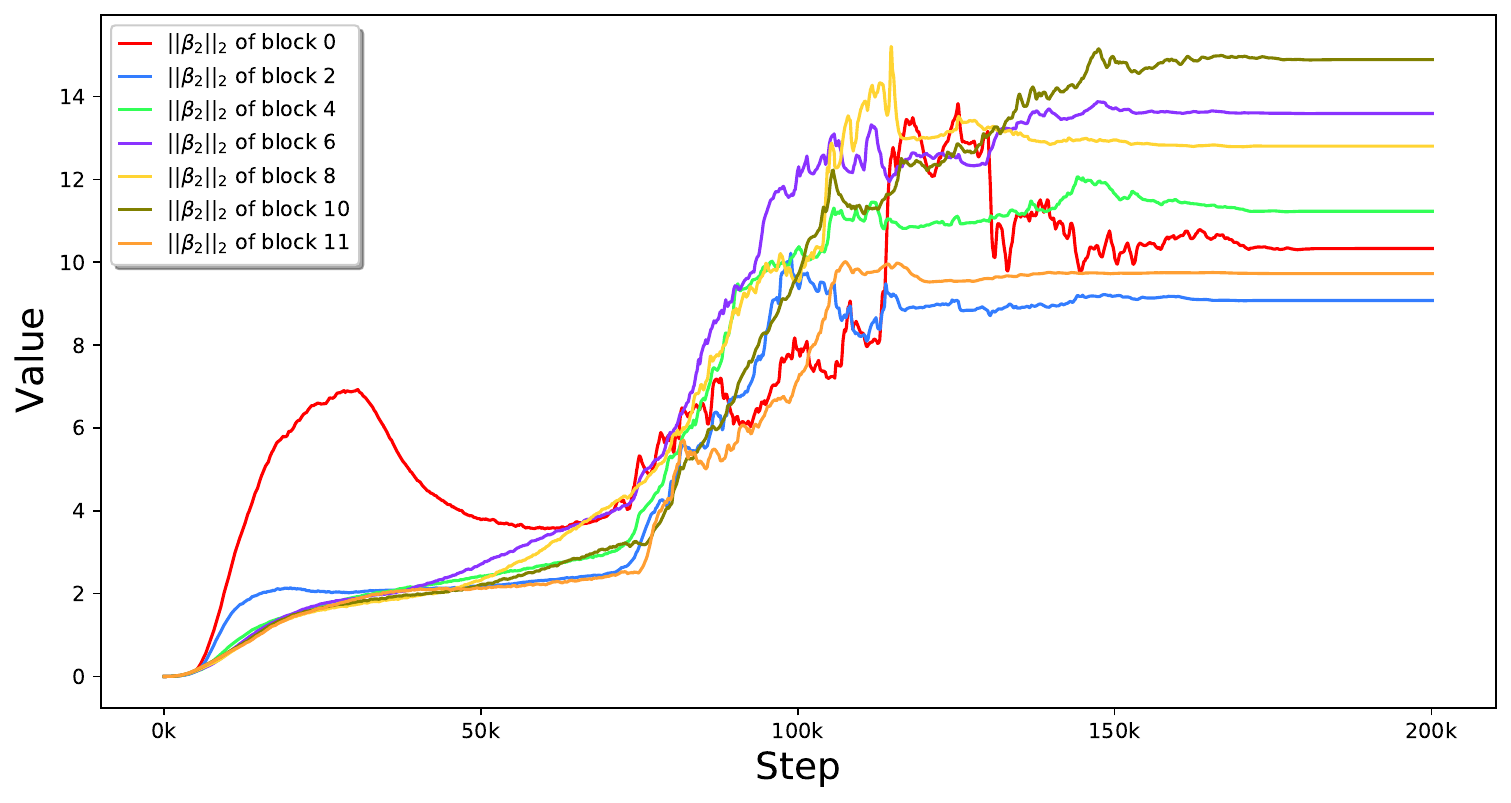}
		\caption*{(i) $\|\boldsymbol{\beta_2}\|_2$}
		
	\end{subfigure}
	
	\begin{subfigure}{0.33\linewidth}
		\centering
		\includegraphics[width=4.5cm,height=3.8cm]{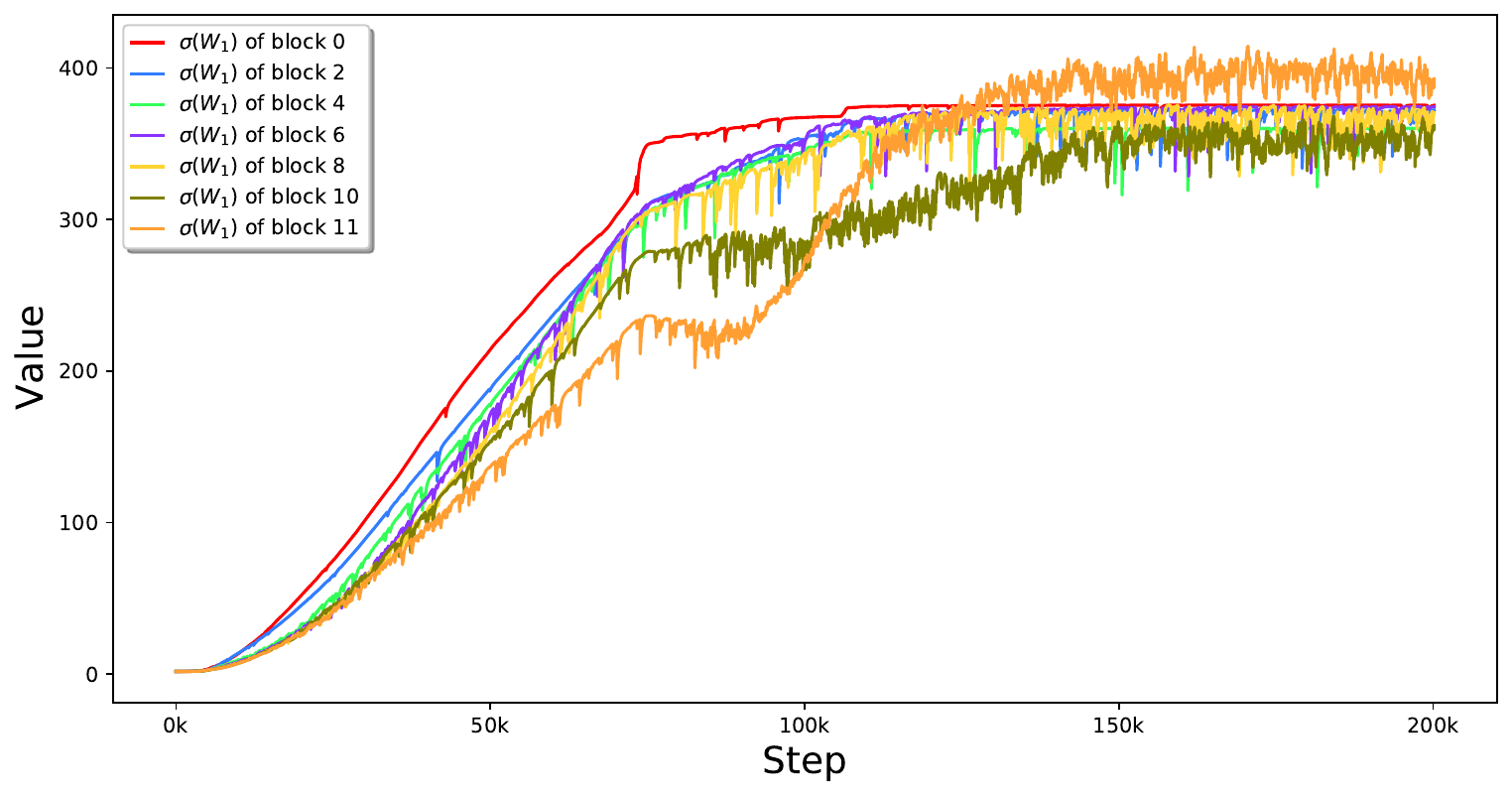}
		\caption*{(j) $\sigma_1\left({\bW_1}\right)$}
		
	\end{subfigure}
	\begin{subfigure}{0.33\linewidth}
		\centering
            \includegraphics[width=4.5cm,height=3.8cm]{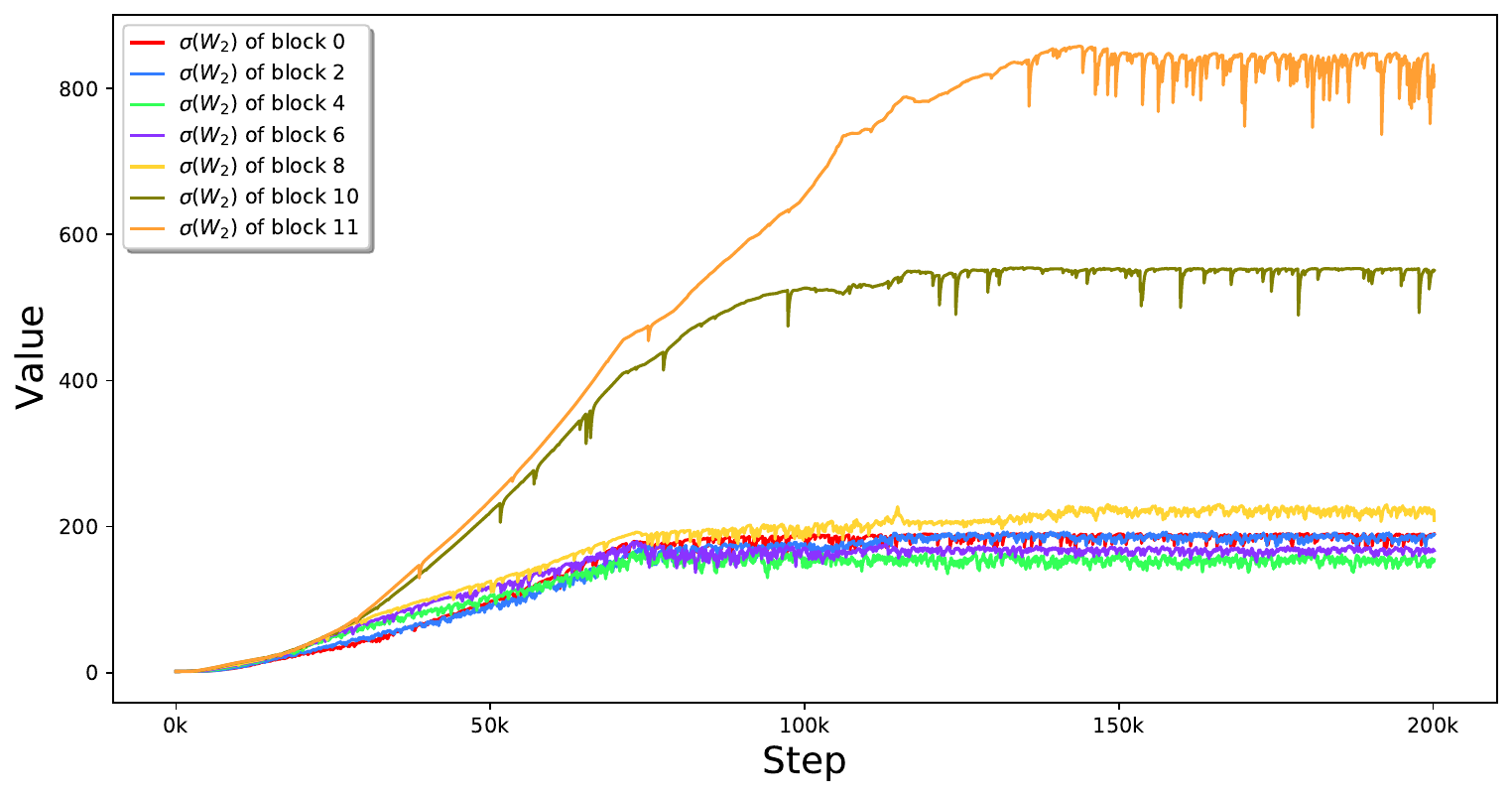}
		\caption*{(k) $\sigma_1\left({\bW_2}\right)$}
		
	\end{subfigure}
        \begin{subfigure}{0.33\linewidth}
		\centering
            \includegraphics[width=4.5cm,height=3.8cm]{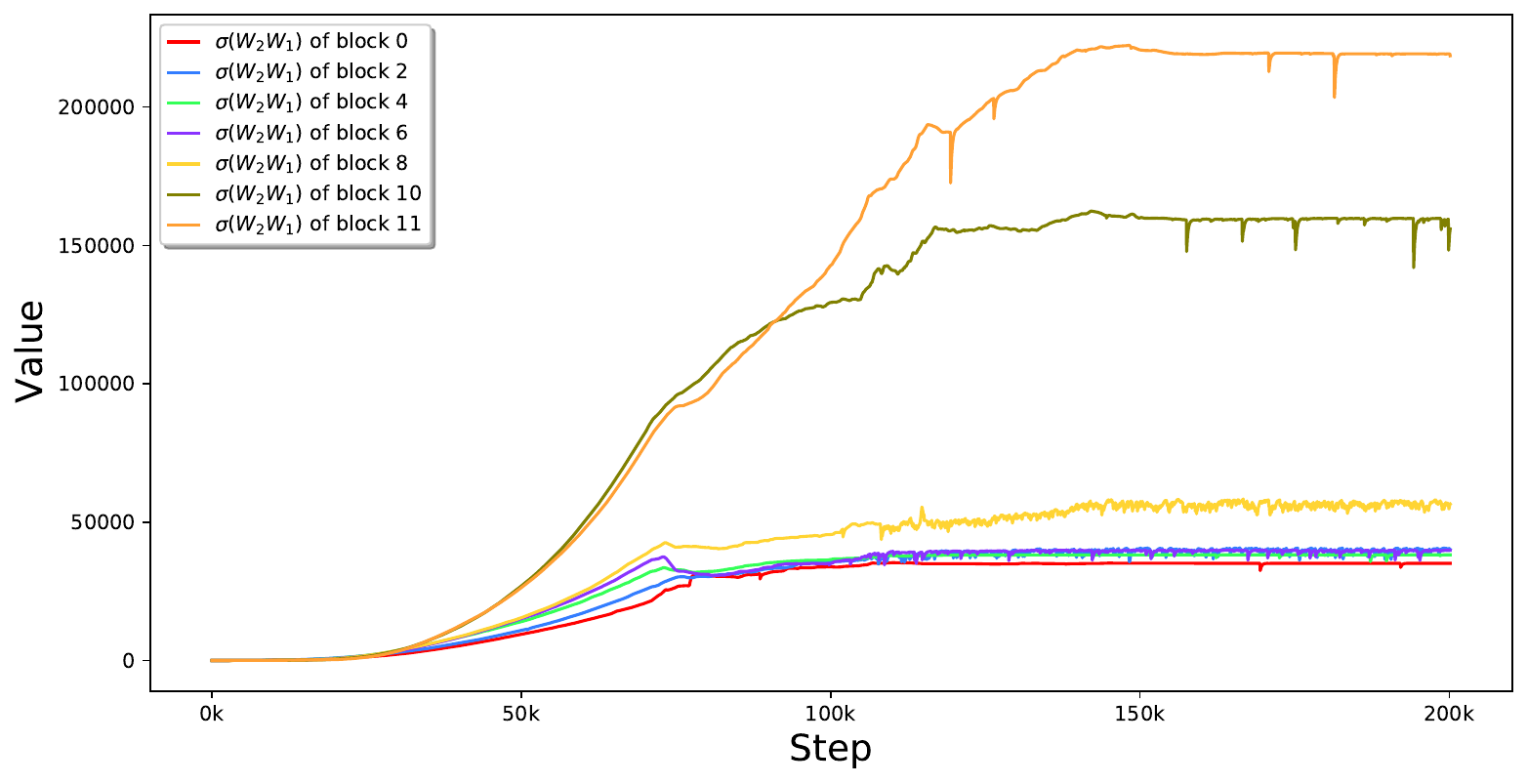}
  
		\caption*{(l) $\sigma_1\left({\bW_2\bW_1}\right)$}
		
	\end{subfigure}
        \begin{subfigure}{0.33\linewidth}
		\centering
            \includegraphics[width=4.5cm,height=3.8cm]{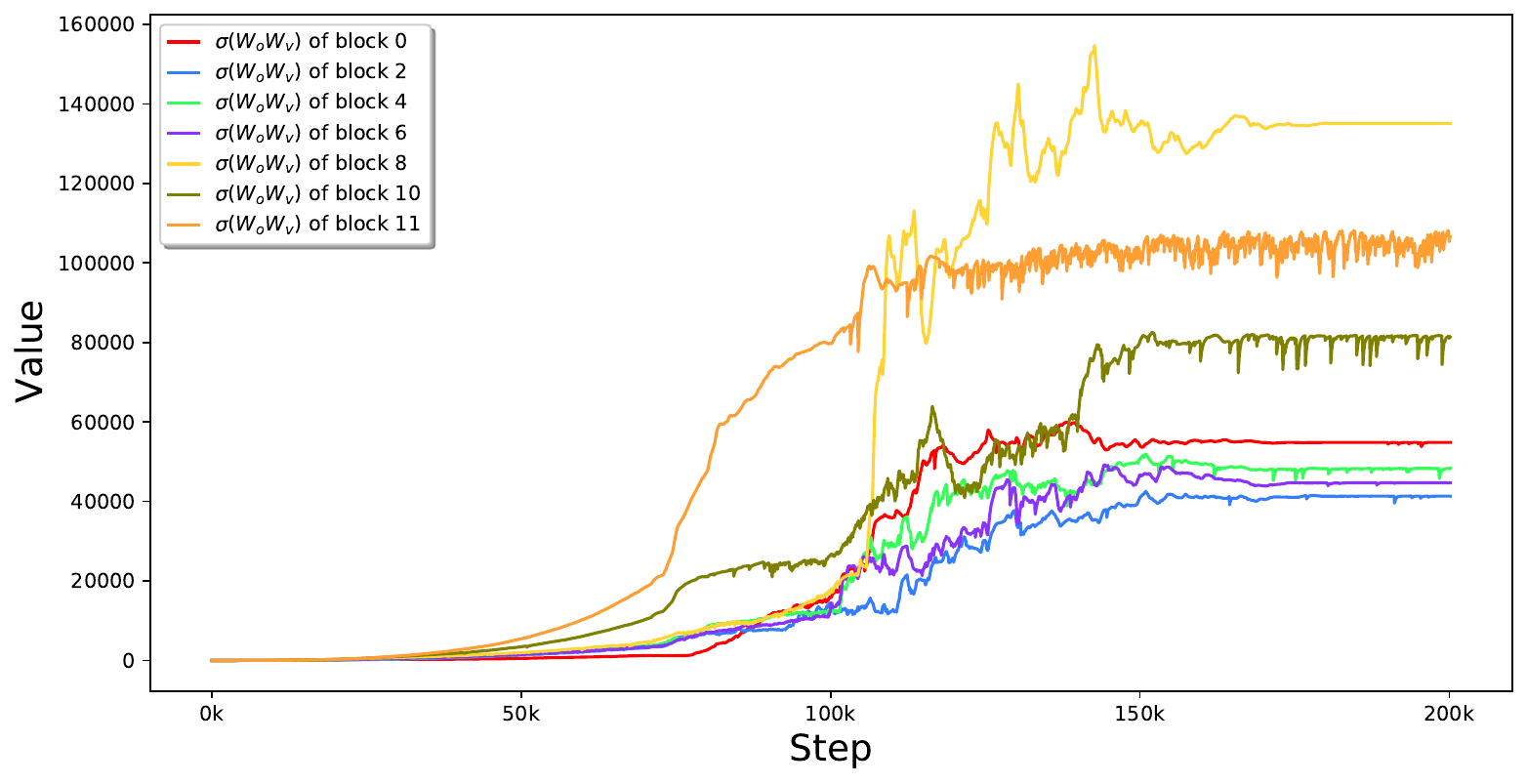}
		\caption*{(m) $\sigma_1\left({\bW_o\bW_v}\right)$}
		
	\end{subfigure}
        \begin{subfigure}{0.33\linewidth}
		\centering
            \includegraphics[width=4.5cm,height=3.8cm]{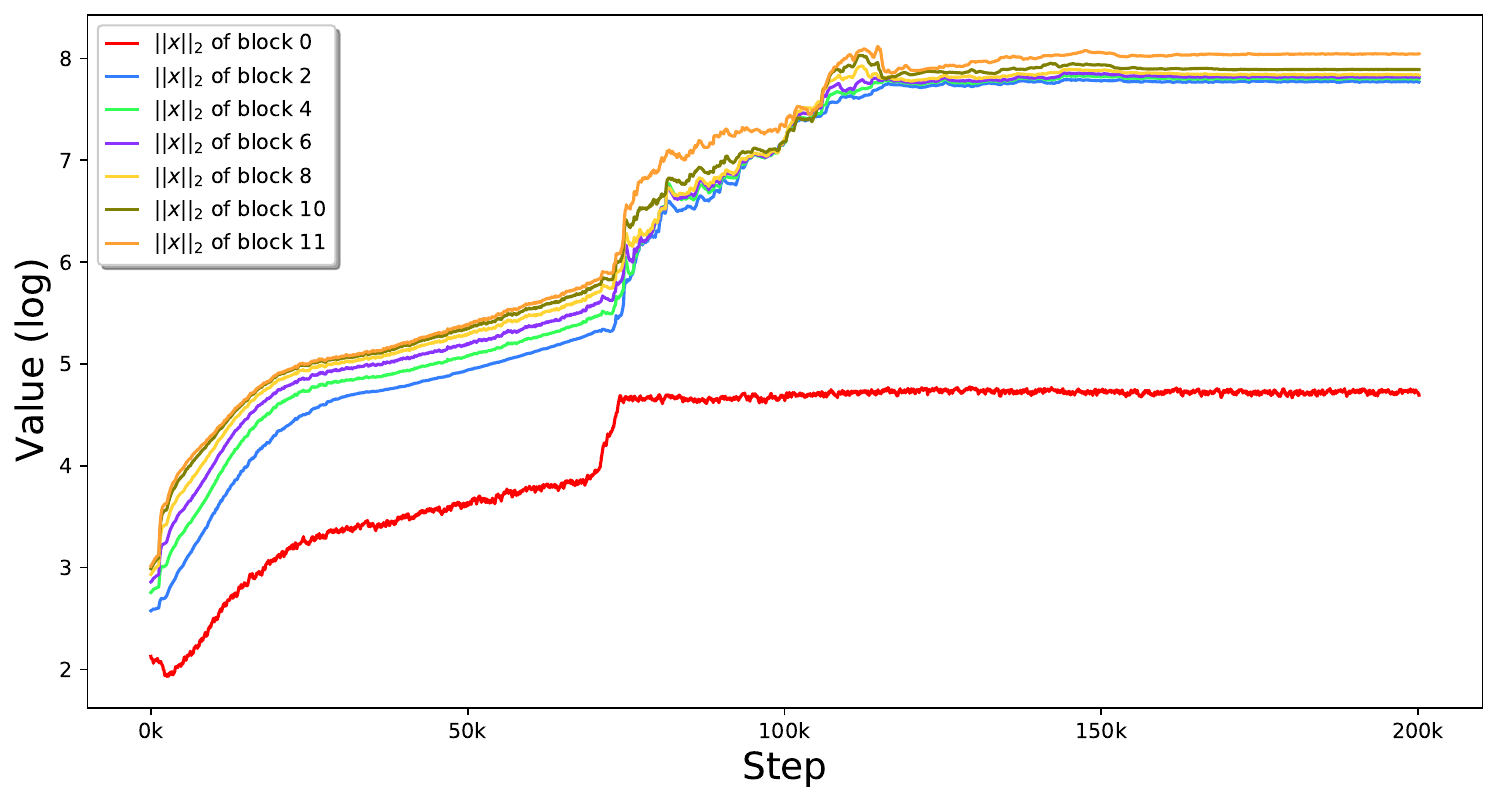}
		\caption*{(n) $\left\|{\bx}\right\|_2$}
		
	\end{subfigure}
        \begin{subfigure}{0.33\linewidth}
		\centering
            \includegraphics[width=4.5cm,height=3.8cm]{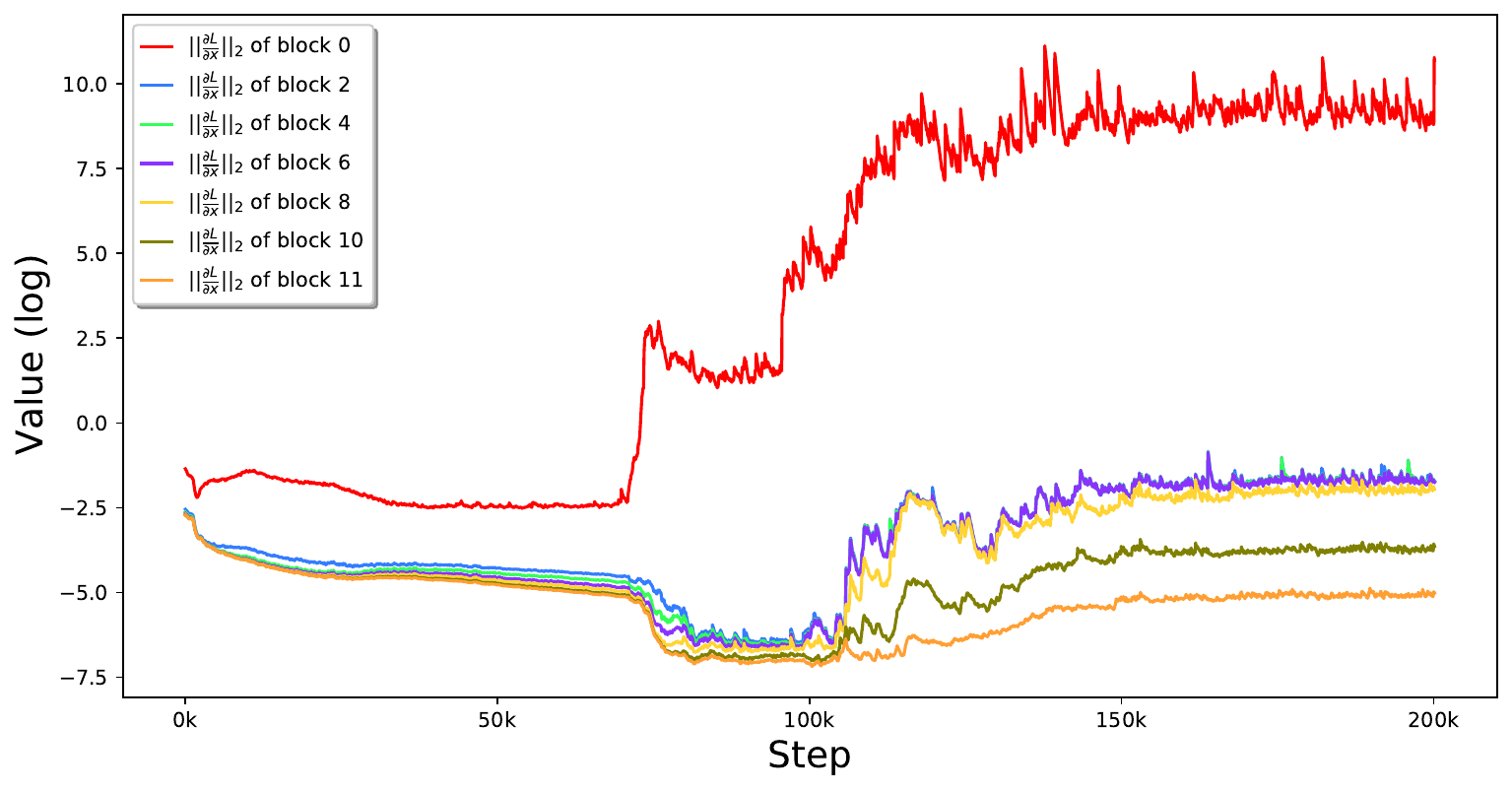}
            \caption*{(o) $\|{\frac{\partial L}{\partial \bx}}\|_2$}
			
	\end{subfigure}
 \caption{Training dynamics of a failure ViT. This figure shows how 
 15 items of quantities as defined 
 in Equation~(\ref{eq:15_terms}) change during the training period. 
 Please pay more attention to subfigures (a)-(e).} 
 \label{fig:why_vit_fail}
\end{figure}

\section{Taming Transformer Requires to Revisit the Training Dynamics}
\label{seq:taming_transformer_requires_revisiting}

To begin with, we first give some basic notions in a Transformer, which includes an attention module, an FFN module and two normalization modules that 
are used before the attention module and the FFN module. 
For the attention module, we usually use a multi-head attention which 
allows the model to jointly attend to information from different representations 
with different heads. Here, for the convenience, 
without losing generality, we only use a single-head attention.
To be precise, we define each of them as follows: 
\begin{align*}
    \operatorname{Attn}(\bX;\bW_q, \bW_k, \bW_v, \bW_o) \quad  &= \quad  \bW_o {\bW_v} \bX \operatorname{softmax}\left( \frac{\bX^{\top} {\bW_q}^{\top} {\bW_k} {\bX}  }{\sqrt{d_q}} \right), \\
    \operatorname{FFN}(\bx;\bW_{1}, \bW_{2} ) \quad  &= \quad \bW_{2} \operatorname{ReLU}(\bW_{1} \bx ), \\
    \operatorname{LN}(\bx)\quad  &= \quad \boldsymbol{\gamma} \odot \bz+\boldsymbol{\beta}, \quad \text{where } \boldsymbol{z} = \frac{\by}{\operatorname{std}(\by)} \text{ and} \quad \by=\left(\bI-\frac{1}{D} \mathbf{1 1}^{\top}\right) \bx,
\end{align*}
where $\bX \in \mathcal{R}^{d \times n}$, $\bW_q \in \mathcal{R}^{d_q \times d}$, $\bW_k \in \mathcal{R}^{d_q \times d}$, $\bW_v \in \mathcal{R}^{d_v \times d}$, $\bW_o \in \mathcal{R}^{d \times d}$, $\bW_1 \in \mathcal{R}^{4d \times d}$, $\bW_2 \in \mathcal{R}^{d \times 4d}$, $\boldsymbol{\gamma} \in \mathcal{R}^{d}$ and $ \boldsymbol{\beta} \in \mathcal{R}^{d}$.
Note that only in a single-head definition, $\bW_o$ can be put before $\bW_v$; otherwise, it should be after a concatenation operator. For the 
convenience of  
discussion, we define:  
\begin{equation*}
    \bA = \operatorname{softmax}(\frac{\bP}{\sqrt{d_q}}) \quad \text{and } \quad  \bP = \bX^{\top} {\bW_q}^{\top} {\bW_k} {\bX},  
\end{equation*}
where $\bA$ is called the attention map and $\frac{\bP}{\sqrt{d_q}}$ is usually called the logit.

\subsection{Visualization: What happens when a Transformer training fails or succeeds}
\label{sec:what_happends_when}
Visualizations are commonly used as an effective means 
to help us examine 
whether the neural network's training succeeds or fails. 
In particular, what happens when the training of a Transformer tends to crash? 
And what happens when the training succeeds?

One of the most important aspects of understanding the training of neural networks is the observation of changes in parameters and activations. Since the parameters or activations and their gradients are 
matrices or vectors, 
the norm is the best way to observe 
these quantities. In this paper, for a Transformer training, we summarize the following {\bf{15 terms}} to watch:
\begin{equation}
\begin{aligned} 
& \sigma_{1}(\bW_q), \quad \sigma_{1}(\bW_k), \quad \sigma_{1}(\bW_v), \quad \sigma_{1}(\bW_o), \quad \sigma_{1}(\bW_{1}), \quad \sigma_{1}(\bW_{2}), \quad
\sigma_{1}({\bW_q}^{\top}{\bW_k}),\\   
& \sigma_{1}({\bW_o}{\bW_v}), \quad \|\boldsymbol{\gamma}_1\|_2, \quad  \|\boldsymbol{\beta}_1\|_2, \quad  \|\boldsymbol{\gamma}_2\|_2, \quad  \|\boldsymbol{\beta}_2\|_2,\quad  
 \sigma_{1}({\bW_2}{\bW_1}), \quad \|\bx\|_2, \quad \left\|\frac{\partial L}{\partial \bx}\right\|_2,
 \end{aligned}
 \label{eq:15_terms}
 \end{equation}
 where 
 $\boldsymbol{\beta_1}$ and $\boldsymbol{\gamma_1}$ denote the parameters of the first LayerNorm, and $\boldsymbol{\beta_2}$ and $\boldsymbol{\gamma_2}$ denote the parameters of the second LayerNorm. 
 When $\operatorname{RMSNorm}$~\citep{rmsnorm_zhang2019root} is used, there are only 13 terms that will be analyzed since it does not have $\boldsymbol{\beta}$. For the weight matrix, we use the spectral norm. 
 For a vector, we use its $\ell_2$ norm.

To ensure 
that the phenomena we observed can generalize well, we visualized them on both ViT~\citep{vit_dosovitskiy2020image} and GPT~\citep{gpt1_radford2018improving}. 
ViT is a pure encoder architectures whereas GPT is a pure decoder architecture.
 
\begin{wrapfigure}{l}{0.70\textwidth}
  \vspace{0pt}
  \centering
\begin{subfigure}{0.23\textwidth}
      \animategraphics[poster=last, width=1\textwidth]{3}{final_figures/baseline_short/block0_head6/image_baseline_}{1}{15}
      \subcaption*{(a) \emph{\footnotesize Block 0 (successful).}}     
\end{subfigure}
\begin{subfigure}{0.23\textwidth}
      \animategraphics[poster=last, width=1\textwidth]{3}{final_figures/baseline_short/block6_head6/image_baseline_}{1}{15}
      \subcaption*{(b) \emph{\footnotesize Block 6 (successful).}}
\end{subfigure}
\begin{subfigure}{0.23\textwidth}
      \animategraphics[poster=last, width=1\textwidth]{3}{final_figures/baseline_short/block11_head6/image_baseline_}{1}{15}
      \subcaption*{(c) \emph{\footnotesize Block 11 (successful).}}
\end{subfigure}
\begin{subfigure}{0.23\textwidth}
      \animategraphics[poster=last, width=1\textwidth]{3}{final_figures/collapse_short/block0_head6/image_collapse_}{1}{15}
      \subcaption*{(d) \emph{\footnotesize Block 0 (crashed).}}     
\end{subfigure}
\begin{subfigure}{0.23\textwidth}
      \animategraphics[poster=last, width=1\textwidth]{3}{final_figures/collapse_short/block6_head6/image_collapse_}{1}{15}
      \subcaption*{(e) \emph{\footnotesize Block 6 in (crashed).}}
\end{subfigure}
\begin{subfigure}{0.23\textwidth}
      \animategraphics[poster=last, width=1\textwidth]{3}{final_figures/collapse_short/block11_head6/image_collapse_}{1}{15}
      \subcaption*{(f) \emph{\footnotesize Block 11 (crashed).}}
\end{subfigure}
\caption{Visualization of the dynamics process of attention map in different training steps for a successful and a crashed ViT-base model, respectively. \emph{{Please click the images to play the flash.} Best viewed with Acrobat Reader.}}
\label{fig:attention_map_gif}
\end{wrapfigure}

In this section, due to the space limitation, 
we will only visualize ViT-base, and put more visualization results into the Appendix~\ref{appendix:visualization_gpt}. 
For the ViT implementation, we use Timm~\cite{timm_rw2019timm}, in which ``timm'' library provides rich model architectures of many pre-trained image models in PyTorch. 
For the GPT implementation, we use nanoGPT, which uses LayerNorm without a bias term, and 
thus only watch 
13 terms, rather than 
15 terms in ViT.

To achieve a successful ViT training, we use a long learning rate warmup. For instance, we use 60 epochs of warmup and the whole training process takes 150 epochs. To obtain the dynamics of a failure training of ViT, we do not use warmup.

Figure~\ref{fig:why_vit_fail} visualizes a failure training process of a ViT-base model.
The model includes 12 blocks with index from 0 to 11. In Figure~\ref{fig:why_vit_fail}, we visualize the weight matrices in blocks $\{0, 2, 4, 6, 8, 10, 11\}$, where 
the index of the last block is 11. For the features $\bx$ and the gradients $\frac{\partial L}{\partial \bx}$, we hook the input features that enter into the corresponding blocks. 
Figure~\ref{fig:why_vit_succusss} in the Appendix~\ref{sec:appendix_vit_gpt} visualizes a successful training process of a ViT-L model. Meanwhile, in Figure~\ref{fig:attention_map_gif}, we visualize the dynamic processes of attention maps during the training period about a successful case and a failure case of the ViT-base model, respectively. 
We visualize the attention map of the same image at different steps.

We observe the following phenomena from Figures~\ref{fig:why_vit_fail} and \ref{fig:attention_map_gif}. 
\begin{itemize}[leftmargin=*]
    \item As shown in Figure~\ref{fig:why_vit_fail}, at the beginning, the maximum singular value 
    $\sigma_1({\bW_q}^{\top}{\bW_k})$ gradually increases, and at a certain point, the maximum singular value suddenly and rapidly increases to a very large value (\eg, around 200,000), at where the loss divergence emerges. 
    However, for a successful training process, $\sigma_1({\bW_q}^{\top}{\bW_k})$ gradually increases to a medium value as shown in Figure~\ref{fig:why_vit_succusss} and then vibrates around that value.
    
    \item As shown in Figure~\ref{fig:attention_map_gif}, in a failure training process of Transformer, the attention maps gradually become sparse and low-rank. Finally, 
    the entropy of the attention map is 0; whereas 
    in a successful training process, the attention map is not too sparse and of a medium rank.
    
    \item The normalization layers exhibit a huge difference: $\boldsymbol{\gamma_1}$ and $\boldsymbol{\beta_1}$ in a successful ViT training process are very smooth, but $\|\boldsymbol{\gamma_1}\|_2$ and $\|\boldsymbol{\beta_1}\|_2$ 
    suddenly increase dramatically in an unsuccessful case.
    
    \item The ranges of the activation and the gradients are very large in a crashed model, and the gradients in different blocks vary much larger than that in a successful model. 
\end{itemize}

\myparagraph{Remark} We summarize that the successful training and the unsuccessful training of a Transformer exhibit significant differences among their $\sigma_1({\bW_q}^{\top}{\bW_k})$, their normalization parameters $\boldsymbol{\gamma}$ and $\boldsymbol{\beta}$, and their activations $\bx$ and gradients $\frac{\partial L}{ \partial \bx}$.

\subsection{Theoretical Analysis: Matrix Calculus of Transformer}
\label{sec:matrix_calculus_transformer}
To understand the training dynamics of Transformer, we should investigate the process of back-propagation~\citep{bp_rumelhart1986learning, efficient_bp_lecun2002efficient, lecun1989backpropagation, lecun1998gradient}. 
In the attention mechanism, however, the input and the output are both 
matrices, we cannot directly use vector calculus. Instead, we need to use \textit{Vectorization}~\footnote{\url{https://en.wikipedia.org/wiki/Vectorization_(mathematics)}}~\citep{kronecker_product_and_matrix_calculus_graham2018kronecker, matrix_cookbook_petersen2008matrix} and \textit{Kronecker Product}~\footnote{\url{https://en.wikipedia.org/wiki/Kronecker_product}}~\citep{kronecker_product_and_matrix_calculus_graham2018kronecker, matrix_cookbook_petersen2008matrix}. 
In matrix calculus, the vectorization of a matrix is a linear transformation 
that converts a matrix into a vector. 
Specifically, the vectorization of a matrix $\bM \in \mathcal{R}^{m\times n}$, denoted as $\text{vec}(\bM)$, is a column vector, obtained by an ordered stacking of the columns of the matrix $\bM$, \ie, $\text{vec}(\bM) \in \mathcal{R}^{mn}$. 
For example, for a $2\times3$ matrix $\bM = \begin{bmatrix} a & b & c\\ d & e & f \end{bmatrix}$, the vectorization of $\bM$ is $\text{vec}(\bM) = [a\ d\ b\ e\ c\ f]^{\top}$.
For the attention module of Transformer, we have the following proposition about the Jacobian matrix of the output $\bP$ with respect to the input $\bX$ and the parameters.
\begin{proposition}[Matrix Calculus for Self-Attention]
\label{proposition_selfattention}
Let $\bP = {\bX^{\top} {\bW_q}^{\top} {\bW_k} {\bX}  }$, where $\bX \in \mathcal{R}^{d\times n}, \bW_q \in \mathcal{R}^{d_q\times d}, \bW_k \in \mathcal{R}^{d_q\times d}$, according to vectorization and matrix calculus, we have the following derivations:
\begin{align}
 \frac{\partial \operatorname{vec}(\bP)}{\partial \operatorname{vec}({\bW_q}^{\top}{\bW_k} )}   &=  \bX^{\top} \otimes \bX^{\top}, \quad
 \frac{\partial \operatorname{vec}(\bP)}{\partial \operatorname{vec}(\bX)}   =  (\bX^{\top}{\bW_k}^{\top}{\bW_q} \otimes \bI_n)\bK + (\bI_n \otimes \bX^{\top}{\bW_q}^{\top}{\bW_k}), \label{eq:dp_dwqk_dx} \\
\frac{\partial \operatorname{vec}(\bP)}{\partial \operatorname{vec}(\bW_q^{\top})}    &=   (\bW_k\bX)^{\top} \otimes \bX^{\top}, \quad \frac{\partial \operatorname{vec}(\bP)}{\partial \operatorname{vec}(\bW_k)}    =  \bX^{\top} \otimes (\bW_q \bX)^{\top}, \label{eq:dp_dwk}
\end{align}
\noindent where $\otimes$ denotes the Kronecker product, $\bI_n \in \mathcal{R}^{n\times n}$ denotes an 
identity matrix with shape $n\times n$, $\bK$ is the commutation matrix, which depends on the dimensions of $\bX$. Since $\bX \in \mathcal{R}^{d\times n}$, then we know $\bK \in \mathcal{R}^{nd\times nd}$. The commutation matrix $\bK$ has the property that $\operatorname{vec}(\bX^\top) = \bK \operatorname{vec}(\bX)$ for any matrix $\bX$.
\end{proposition}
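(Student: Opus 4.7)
The plan is to rely on the vec-trick identity $\operatorname{vec}(\bA\bB\bC)=(\bC^{\top}\otimes \bA)\operatorname{vec}(\bB)$ together with the total differential, handling each of the four Jacobians in turn. The identities are standard; the work lies in choosing the right way to group factors so that the variable being differentiated appears as the ``middle matrix'' $\bB$, after which the Jacobian can be read off directly.

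For the Jacobian with respect to $\bW_q^{\top}\bW_k$, I would introduce $\bM := \bW_q^{\top}\bW_k$ and write $\bP = \bX^{\top}\bM\bX$. A single application of the vec-trick yields $\operatorname{vec}(\bP) = (\bX^{\top}\otimes \bX^{\top})\operatorname{vec}(\bM)$, which is linear in $\operatorname{vec}(\bM)$, so the Jacobian is exactly $\bX^{\top}\otimes \bX^{\top}$. The two Jacobians with respect to $\bW_q^{\top}$ and $\bW_k$ follow the same pattern: regrouping as $\bP = \bX^{\top}\bW_q^{\top}(\bW_k\bX)$ gives $\operatorname{vec}(\bP) = ((\bW_k\bX)^{\top}\otimes \bX^{\top})\operatorname{vec}(\bW_q^{\top})$, while regrouping as $\bP = (\bX^{\top}\bW_q^{\top})\bW_k\bX$ gives $\operatorname{vec}(\bP) = (\bX^{\top}\otimes \bX^{\top}\bW_q^{\top})\operatorname{vec}(\bW_k) = (\bX^{\top}\otimes (\bW_q\bX)^{\top})\operatorname{vec}(\bW_k)$. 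In each case the Jacobian is the prefactor.

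The Jacobian with respect to $\bX$ is the only genuinely subtle one, because $\bX$ occurs both as $\bX^{\top}$ on the left and as $\bX$ on the right. I would apply the product rule at the level of differentials:
\begin{equation*}
d\bP \;=\; d\bX^{\top}\,\bW_q^{\top}\bW_k\bX \;+\; \bX^{\top}\bW_q^{\top}\bW_k\,d\bX.
\end{equation*}
Vectorizing each term via the vec-trick gives
\begin{equation*}
\operatorname{vec}(d\bP) = \bigl((\bW_q^{\top}\bW_k\bX)^{\top}\otimes \bI_n\bigr)\operatorname{vec}(d\bX^{\top}) + (\bI_n\otimes \bX^{\top}\bW_q^{\top}\bW_k)\operatorname{vec}(d\bX).
\end{equation*}
The commutation matrix $\bK$, defined by $\operatorname{vec}(\bX^{\top})=\bK\operatorname{vec}(\bX)$, then converts $\operatorname{vec}(d\bX^{\top})$ into $\bK\operatorname{vec}(d\bX)$, and using $(\bW_q^{\top}\bW_k\bX)^{\top} = \bX^{\top}\bW_k^{\top}\bW_q$ yields the claimed expression.

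The main obstacle is essentially bookkeeping: verifying that the dimensions in the vec-trick line up (e.g.\ in the $\bX$-derivative, the first Kronecker factor ends up $n\times n$ on the right and $d\times d$ on the left before applying $\bK$), and tracking the transposes so that $\bW_q^{\top}\bW_k$ in the first term reappears as $\bW_k^{\top}\bW_q$ after taking the outer transpose. No deeper ingredient beyond the vec-trick and the defining property of the commutation matrix is needed.
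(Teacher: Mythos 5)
Your proof is correct and uses exactly the machinery the paper provides in Appendix~\ref{appendix:kronecker_product} (the identity $\operatorname{vec}(\bA\bB\bC)=(\bC^{\top}\otimes\bA)\operatorname{vec}(\bB)$ together with the product rule at the level of differentials and the commutation matrix for the $\bX$-derivative); the paper does not spell out the derivation of Proposition~\ref{proposition_selfattention} beyond presenting these same tools, so your argument is essentially the intended one. The only minor blemish is the aside about the Kronecker factor being ``$d\times d$ on the left'': the left factor $\bX^{\top}\bW_k^{\top}\bW_q$ is $n\times d$, not $d\times d$, though this does not affect the correctness of the final expression.
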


In Appendices~\ref{appendix:kronecker_product} and \ref{appendix:single_head_attention}, we supply some elementary 
information for the vectorization and the Kronecker product and the derivations of the Jacobian matrix for a single-head attention.

We have the following observations from Proposition~\ref{proposition_selfattention}. 
\begin{itemize}[leftmargin=*]

\item  We have $\frac{\partial \operatorname{vec}(\bP)}{\partial \operatorname{vec}({\bW_q}^{\top}{\bW_k} )} = \bX^{\top} \otimes \bX^{\top}$ in Equation~\ref{eq:dp_dwqk_dx}, and we know 
about the Kronecker product that $\operatorname{rank}(\bX^{\top} \otimes \bX^{\top}) = {\operatorname{rank}(\bX^{\top})}^2$, which 
implies that if the rank of $\bX$ 
is very low, then the rank of $ \frac{\partial \operatorname{vec}(\bP)}{\partial \operatorname{vec}({\bW_q}^{\top}{\bW_k} )}$ will also be very low. 
Note that $\bX$ being low rank means that the features across different timestep are highly correlated or coherent. 
If all $\bx_i$ in $\bX$ collapses to a single 
point, then ${\bX^{\top} \otimes \bX^{\top}}$ will only have a large singular value, and the rest are 0.

\item The Jacobian matrix $\frac{\partial \operatorname{vec}(\bP)}{\partial \operatorname{vec}(\bX)}$ in Equation~\ref{eq:dp_dwqk_dx}, is in direct proportion to $\bX$ and ${\bW_q}^{\top} \bW_k$. If the spectral norm $\sigma_1({\bW_q}^{\top} \bW_k)$ is very large, it 
implies that the gradient $\frac{\partial L}{\partial \bX}$ will more likely 
to be magnified a lot. 
     
\item Equation~\ref{eq:dp_dwk} suggests that changes in the query weights $\bW_q$ are related to both the input $\bX$ and the key representation $\bW_k \bX$. Equation~\ref{eq:dp_dwk} suggests that changes in the key weights $\bW_k$ are related to both the input $\bX$ and the query representation $\bW_q \bX$.
     
\item All these relationships are interconnected, with changes in one variable potentially affecting the others. 
For instance, if $\bW_k$ increases fast, then according to Equation~\ref{eq:dp_dwk}, $\frac{\partial \operatorname{vec}(\bP)}{\partial \operatorname{vec}(\bW_q^{\top})}$ will 
more likely to be very large. In this way, $\bW_q$ will likely 
increase very fast.
\end{itemize}

\subsection{Rationale in Model Crash: Spectral Energy Concentration}
\label{subsection:energy_concerntration}
Before we reveal the rationale 
in model crash, let us first discuss a bit on the entropy collapse. 
In~\citep{stabilizing_transformer_zhai2023stabilizing}, the entropy of an attention map $\bA$ is defined as $\mathcal{E}(\bA) =- \frac{1}{n}\sum_{j=1}^{n} \sum_{i=1}^{n} A_{i, j} \log \left(A_{i, j}\right)$.
The so-called {\emph{entropy collapse} refers to the fact that the entropy $\mathcal{E}(\bA)$ of the attention probability matrix is almost 0.}

\textbf{Two Entropy Collapse Modes.} {In experiments, we observe two types of attention entropy collapse modes.  Note that when attention collapse happens, the attention map tends to a sparse matrix (\ie, there are a few dominate nonzero coefficients in attention map), and thus the entropy of the attention map is vanishing. To be more specific, when the attention map is sparse but not low-rank, we call it a \emph{benign collapse}; whereas if the attention map is sparse and simultaneously low-rank,\footnote{For a sparse and simultaneously low-rank matrix, we refer the reader to a recent textbook~\citep{wright2022high}.} we call it a \emph{malignant collapse}.
When benign collapse occurs, the attention map is shown in the right panel of Figure~\ref{fig:three_attention_maps} that, there 
is almost an identity matrix. In this way, the diagonal elements are almost 1, and the non-diagonal elements have values around 0.
Unfortunately, when malignant collapse happens, the attention 
map 
becomes a sparse and simultaneously low-rank matrix as shown in the middle panel of 
Figure~\ref{fig:three_attention_maps}. 
Furthermore, we observe that the distribution of the spectral energy of ${\bW_q}^{\top} \bW_k$ for the benign collapse is relatively uniform; whereas the spectral energy of the attention matrix for the malignant collapse tends to concentrate on a few dominate singular values.}
\begin{wrapfigure}{l}{0.68\textwidth}
  \vspace{0pt}
  \centering
	\begin{subfigure}{0.32\linewidth}
		\includegraphics[width=1\textwidth]{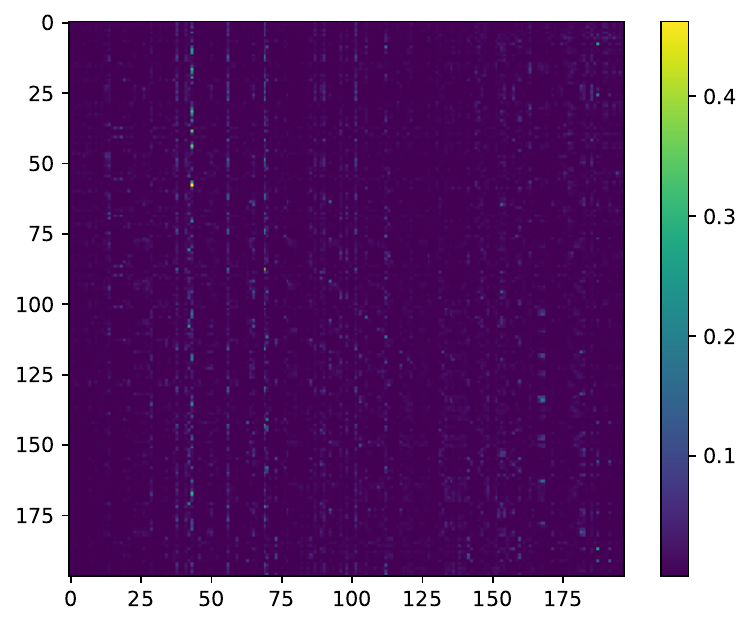}
            \subcaption*{\emph{\scriptsize (a) Normal attention}}     
		\label{chutian1}
	\end{subfigure}
	\begin{subfigure}{0.32\linewidth}
		\includegraphics[width=1\textwidth]{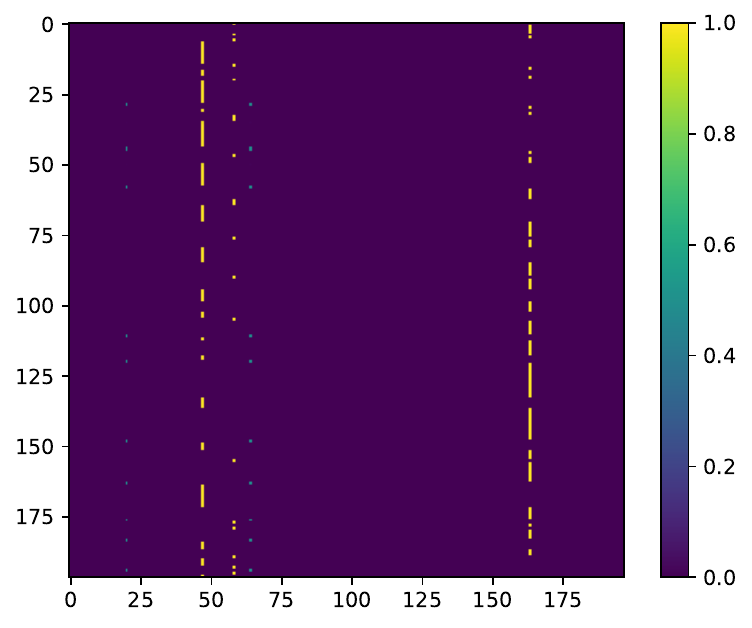}
            \subcaption*{\emph{\scriptsize (b) Malignant collapse}}
		
	\end{subfigure}
	\begin{subfigure}{0.32\linewidth}
		\includegraphics[width=1\textwidth]{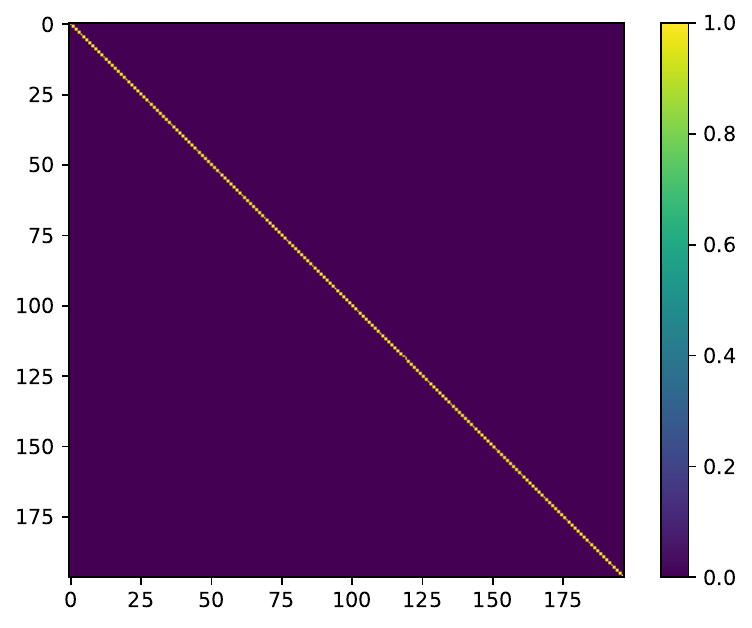}
            \subcaption*{\emph{\scriptsize (c) Benign collapse}}
		
	\end{subfigure}
 \caption{Three modes of attention maps. The left panel shows a normal attention map. The middle panel shows a classical attention map when model crash occurs, for which the entropy is almost 0. The right panel shows an attention map from a normal model training but its entropy is almost 0.} 
  \vspace{-20pt}
   \label{fig:three_attention_maps}
\end{wrapfigure}

By analyzing the matrix ${\bW_q}^{\top} \bW_k$ in the benign collapse when it happens in the experiments, we find that it has the following property: ${\bW_q}^{\top} \bW_k$ is usually a non-symmetric positive quasi-definite square matrix (see Appendix~\ref{appendix:non_symmetric} for details), 
and the self-attention layer degenerates into a linear projection layer because $ \bY = \bW_v \bX \bA \approx \bW_v \bX \bI = \bW_v \bX$. We give an intuitive analysis in Appendix~\ref{appendix:benign_entropy_collapse}.

When the malignant collapse happens, the model will usually crash. We identify that the 
rationale behind the model crash is a phenomenon called \textit{spectral energy concentration} (\textit{SEC}). 
Before we present our theorem about SEC, let us first introduce an index to 
quantify it. 
Recall that $\bW_q \in \mathcal{R}^{d_q \times d}$ and $\bW_k \in \mathcal{R}^{d_q \times d}$, where $d_q < d$. We have that ${\bW_q}^{\top} \bW_k \in \mathcal{R}^{d \times d}$, but its rank is less than or equal to $d_q$.  
Precisely, we define a \textit{SEC index} as follows:
\begin{equation}
\operatorname{SEC}(d_q, s) = \frac{\sum_{i=1}^s { \sigma_i^2({{\bW_q}^{\top} \bW_k})}}{\sum_{i=1}^{d_q} {\sigma_i^2({{\bW_q}^{\top} \bW_k})}},
\end{equation}
where $d_q$ is the head dimension and $s \le d_q$. 
For instance, if we have $d_q = 64$ and $s=4$, and if at this time, $\operatorname{SEC}(64, 4) > 99\%$, we could say the spectral energy of ${\bW_q}^{\top} \bW_k$ highly concentrates on only four dominant singular values.

{
To be precise, we have the following theorem for the reason to cause a malignant collapse. 
\begin{theorem}[Malignant Entropy Collapse]
\label{theorem:sec_leads_to_crash}
Let
$\bP = \bX^T \bW \bX \quad \text{and} \quad \bA = \operatorname{softmax}\left(\frac{\bP}{\sqrt{d_q}}\right), \quad \bW = \bW_q^T \bW_k \in \mathbb{R}^{d \times d}.$
Suppose that the following two conditions are simultaneously satisfied:
\begin{itemize}[leftmargin=*]
    \item  \( \bX \) is a low-rank matrix;
    \item  \( \bW \) is a low-rank matrix with only a few dominant singular values (e.g., the singular values are greater than \( C_0 \cdot \sqrt{d_q} \) where \( C_0 \gg 1 \) is a constant).
\end{itemize}
Then, the attention map \( \bA \) will be sparse 
and simultaneously low-rank in high probability. 
\end{theorem}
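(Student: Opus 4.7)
The plan is to trace the low-rank, large-magnitude structure of $\bW$ through the bilinear form $\bP=\bX^{\top}\bW\bX$ and then through the row-wise softmax, using only submultiplicativity of rank and the well-known sharpening behaviour of softmax at large logits.

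\emph{Step 1: low rank of the logits.} First I would use $\operatorname{rank}(\bP)\le\min\{\operatorname{rank}(\bX),\operatorname{rank}(\bW)\}$ to conclude that $\bP$ has small rank $r$. Writing the truncated SVD $\bW\approx\sum_{i=1}^{s}\sigma_i u_i v_i^{\top}$ with every $\sigma_i>C_0\sqrt{d_q}$ gives
\[
\frac{\bP}{\sqrt{d_q}}\;\approx\;\sum_{i=1}^{s}\frac{\sigma_i}{\sqrt{d_q}}\,(\bX^{\top}u_i)(v_i^{\top}\bX),\qquad \frac{\sigma_i}{\sqrt{d_q}}>C_0\gg 1,
\]
so the logit matrix is a sum of a few rank-one terms scaled by a factor $\gg 1$, hence simultaneously of low rank and of large entrywise magnitude.

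\emph{Step 2: gap inflation and softmax sparsification.} Next I would argue that, under a mild non-degeneracy of $\bX$ (distinct rows, or a weak density/general-position assumption on the feature distribution, which is the source of the ``high probability'' clause), the entries within each row of $\bP$ are almost surely distinct, and the multiplicative blow-up by $\sigma_i/\sqrt{d_q}>C_0$ inflates the per-row gap $\Delta_j$ between the top and second logits to $\Omega(C_0)$. A one-line softmax estimate then yields, with $j^{\star}=\argmax_{\ell}P_{j,\ell}$,
\[
1-A_{j,j^{\star}}\;\le\;(n-1)\,e^{-\Delta_j},\qquad \max_{\ell\ne j^{\star}}A_{j,\ell}\;\le\;e^{-\Delta_j},
\]
so every row of $\bA$ is exponentially close (in $C_0$) to the one-hot vector $\mathbf{e}_{j^{\star}}$, which already delivers the sparsity part of the conclusion.

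\emph{Step 3: low rank of the attention map.} To get the low-rank claim I would combine Step 1 with the near one-hot structure from Step 2. Because $\bP$ has rank at most $r$, its $n$ rows lie in an $r$-dimensional subspace of $\R^{n}$, and the map $j\mapsto j^{\star}$ factors through that subspace; it therefore takes at most as many values as there are top cells in the upper envelope of $n$ linear functionals on $\R^{r}$, a combinatorially small set when $r\ll n$. Consequently only a few distinct columns $\ell_{1},\dots,\ell_{k}$ are ever selected, the attention matrix $\bA$ is $\varepsilon$-close to a matrix whose support lies in those $k$ columns, and $\operatorname{rank}(\bA)\lesssim k\ll n$, so $\bA$ is simultaneously sparse and low rank.

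\emph{Main obstacle.} The hardest part is not the softmax estimate, which is routine, but making ``with high probability'' rigorous: this requires pinning down either a distributional model for $\bX$ (e.g.\ post-LayerNorm features with a mild density assumption) or a quantitative notion of genericity, and then converting the qualitative geometric fact that the upper envelope of a few linear functionals has few cells into an explicit bound $k=k(r,s,n)$ on the number of surviving argmax columns. I would expect a clean quantitative statement only after sharpening the hypotheses (e.g.\ $\sigma_i>C_0\sqrt{d_q}\log n$ and a stable-rank / incoherence condition on $\bX$) so that both the gap $\Delta_j$ and the cell count $k$ can be controlled explicitly.
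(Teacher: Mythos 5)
Your Steps 1 and 2 are essentially the paper's own argument (its Appendix proof of this theorem) made quantitative: the paper likewise deduces that $\bP$ is low-rank with a few large singular values and then appeals to the softmax being ``sharply peaked'' to obtain sparsity, so your explicit per-row gap $\Delta_j=\Omega(C_0)$ and the tail bound $1-A_{j,j^{\star}}\le (n-1)e^{-\Delta_j}$ are a cleaner rendering of the same step. The distributional model you say is needed to make ``high probability'' precise (post-normalization features with i.i.d.\ standard Gaussian entries) is exactly the model the paper already uses in its benign-collapse analysis, so that part of your obstacle is easy to discharge in the paper's own terms.

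The genuine gap is your Step 3, and it is a real one. From $\operatorname{rank}(\bP)\le r$ together with near one-hot rows you cannot conclude that only a few columns are ever selected: writing $\bP=\bB\bC$ with $\bB\in\R^{n\times r}$, $\bC\in\R^{r\times n}$, the argmax of row $j$ is the point among the $n$ columns of $\bC$ (points in $\R^{r}$) that is extremal in the direction $\bb_j$, and all $n$ points can be in convex position already for $r=2$ (points on a circle). In that case up to $n$ distinct columns are selected, $\bA$ is close to a permutation-type selection matrix, and you get sparsity with \emph{full} rank --- precisely the paper's ``benign'' pattern --- so low-rankness does not follow from low rank of $\bP$ plus large gaps alone, and your claim that the cell count is ``combinatorially small when $r\ll n$'' is false without extra structure. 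To be fair, the paper's own proof does not close this either: it simply assumes that ``the softmax operation does not significantly change the rank,'' which is an assertion, not a proof, and the permutation example shows it can fail. What actually drives low rank in the malignant regime is that the spectral energy sits on very few directions of $\bW$ with huge magnitude: for $s=1$, $\bP\approx\sigma_1(\bX^{\top}\bu_1)(\bv_1^{\top}\bX)$, so every column (or row) whose coefficient $\bv_1^{\top}\bx_i$ has the same sign shares the same argmax, giving at most two selected positions; an honest proof should run this kind of argument (few dominant directions of $\bW$, not merely low rank of $\bP$, bound the number of argmax cells, combined with the Gaussian model for $\bX$), which neither your proposal nor the paper currently carries out in full.
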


When the malignant entropy collapse happens, the training process will crash. We provide a proof for Theorem~\ref{theorem:sec_leads_to_crash} in Appendix~\ref{appendix:malignant_entropy_collapse}.

According to Theorem~\ref{theorem:sec_leads_to_crash}, we know that the model crash is caused by the spectral energy concentration. 
In Figure~\ref{fig:spectral_energy_concentration}, we evaluate and compare the curves of $\operatorname{SEC}(d_q, s)$ of three different blocks under a successfully trained model and a crashed model. In a successfully trained model, the spectral energy distributes in all directions.
However, the spectral energy of a crashed model only concentrates on a few directions. 
As shown in Figure~\ref{fig:spectral_energy_concentration}, in a crashed model, the SEC collapses into less than 10 directions.
\begin{figure}[htbp]
	\centering
	\begin{subfigure}{0.33\linewidth}
		\centering
		\includegraphics[width=1.0\linewidth]{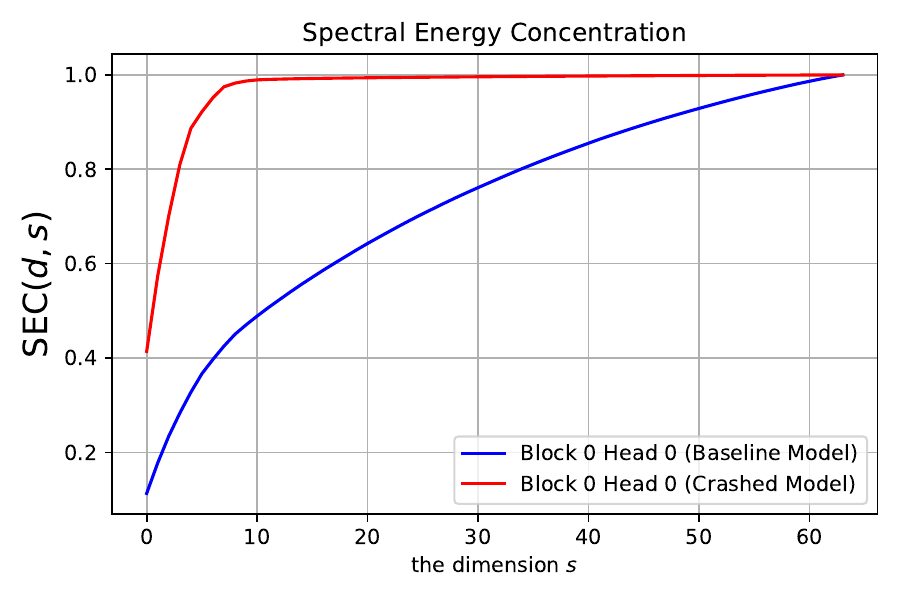}
            \subcaption*{(a) \emph{Block 0}}     
		\label{chutian1}
	\end{subfigure}
	\begin{subfigure}{0.33\linewidth}
		\centering
		\includegraphics[width=1.0\linewidth]{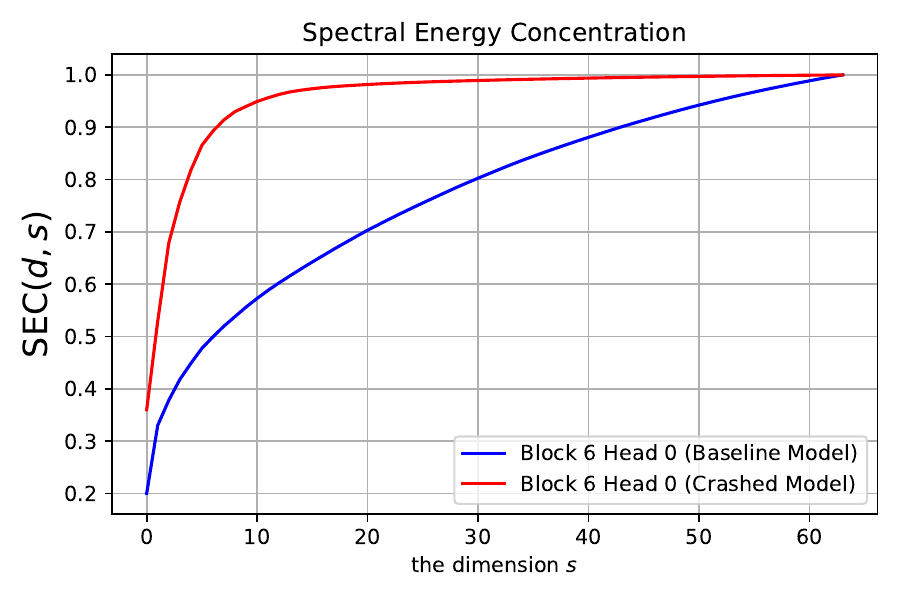}
            \subcaption*{(b) \emph{Block 6}}
		
	\end{subfigure}
	\begin{subfigure}{0.33\linewidth}
		\centering
		\includegraphics[width=1.0\linewidth]{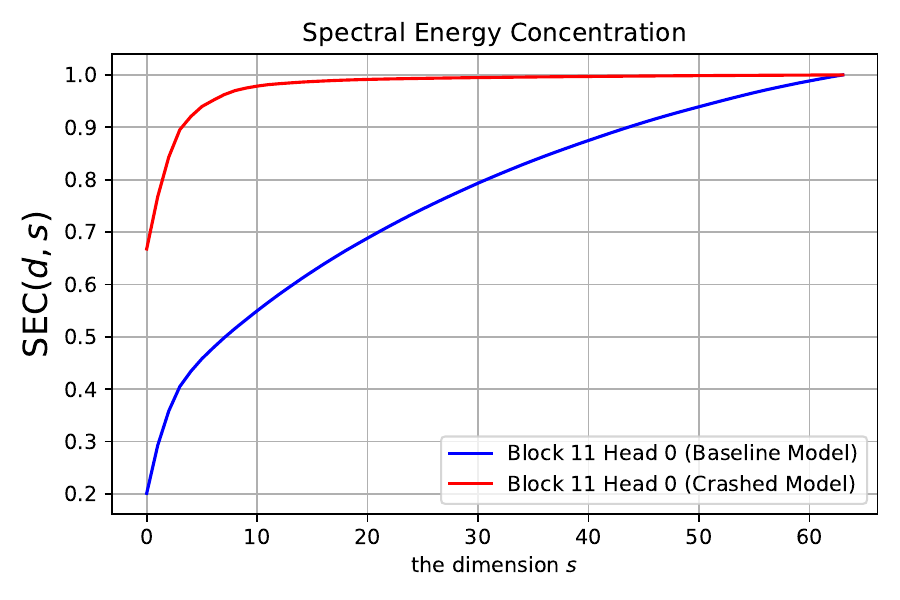}
            \subcaption*{(c) \emph{Block 11}}
		
	\end{subfigure}
 \caption{Comparison of spectral energy concentration index SEC between a successfully trained model and a crashed model. 
 The attention maps 
 of three different blocks are shown. The spectral energy distributes in all directions in a successful training case; whereas the spectral energy only concentrates on a few directions in a crashed model.
 } 
\label{fig:spectral_energy_concentration}
\end{figure}
\begin{figure}[h]
\scriptsize
\centering
\vspace{-8pt}
\begin{tikzpicture}[
node distance = 0.40cm and 0.65cm,
    arrow/.style = {-Stealth, thick},
    every node/.style = {font=\normalsize}
]
\node (X) {${\bX^l}_{\big\downarrow}$};
\node[right=1cm of X] (XX) {${ \bX^{\top} \otimes \bX^{\top} }_{\big\downarrow}$};
\node[right=1cm of XX] (delta_WqWk) {${\frac{\partial \operatorname{vec}(\bP)}{\partial \operatorname{vec}({\bW_q}^{\top}{\bW_k} )}}_{\big\downarrow}$};
\node[right=1cm of delta_WqWk] (WqWk) {${ {\bW_q}^{\top}{\bW_k} }_{\big\downarrow}$};
\node[right=1cm of WqWk] (AA) {$\bA_{\big\Downarrow}$};
\node[right=1cm of AA] (X2) {${\bX^{l+1}}_{\big\downarrow}$};
\draw[arrow] (X) -- (XX);
\draw[arrow] (XX) -- (delta_WqWk);
\draw[arrow] (delta_WqWk) -- (WqWk);
\draw[arrow] (WqWk) -- (AA);
\draw[arrow] (AA) -- (X2);
\end{tikzpicture}
\caption{Attribution flow chart of attention collapse.}
\label{fig:attribution_flow_chart}
\end{figure}
{
Figure~\ref{fig:attribution_flow_chart} reveals how the attention collapse propagates through each term, 
illustrating the entire attribution chain 
from the input $\bX^l$ to the output $\bX^{l+1}$ in the attention module. Note that $\big\downarrow$ indicates being low-rank and $\Downarrow$ means being sparse and simultaneously low-rank. If $\boldsymbol{X}$ is low-rank, then $\boldsymbol{X}^{\top} \otimes \boldsymbol{X}^{\top}$ is also low-rank because 
$
\operatorname{rank}(\boldsymbol{X} \otimes \boldsymbol{X}) = \operatorname{rank}(\boldsymbol{X}) \cdot \operatorname{rank}(\boldsymbol{X}).
$
According to the gradient computation, we have
$
\frac{\partial \operatorname{vec}(\boldsymbol{P})}{\partial \operatorname{vec}(\boldsymbol{W}_q^{\top} \boldsymbol{W}_k)} = \boldsymbol{X}^{\top} \otimes \boldsymbol{X}^{\top},
$
thus we know $\frac{\partial \operatorname{vec}(\boldsymbol{P})}{\partial \operatorname{vec}(\boldsymbol{W}_q^{\top} \boldsymbol{W}_k)}$ is also low-rank. 
Meanwhile, it should be noted that the spectral energy of $\frac{\partial \operatorname{vec}(\boldsymbol{P})}{\partial \operatorname{vec}(\boldsymbol{W}_q^{\top} \boldsymbol{W}_k)}$ is over-concentrated, which means that the gradient update will dramatically 
change $\boldsymbol{W}_q^{\top} \boldsymbol{W}_k$, thus $\boldsymbol{W}_q^{\top} \boldsymbol{W}_k$ will have large probability to be low-rank. In the paper, we have proved that being low-rank and  the leading singular values of $\boldsymbol{W}_q^{\top} \boldsymbol{W}_k$ are very large will lead to the attention map $\boldsymbol{A}$ over-concentrated (see Appendix~\ref{appendix:benign_entropy_collapse} for the proof), and become to a sparse and simultaneously low-rank matrix. 
Finally, an over-concentrated $\boldsymbol{A}$ will lead to $\boldsymbol{X}^{l+1}$ to be low-rank. We provide a more detailed analysis of Figure~\ref{fig:attribution_flow_chart} in Appendix~\ref{sec:illustration_of_figure5}.
}

\subsection{Our Solution: Taming Transformer via Weyl's Inequality}
\label{sec:without_using_lr_warmup}
The analysis above reveals that spectral energy concentration is the key reason 
leading to unstable training. 
One manifestation of spectral energy concentration is the rapid growth of the singular values of of the weight matrices. 
Thus, our 
basic idea to prevent malignant collapse is to suppress  
the fast growth of the singular values. 
Fortunately, Weyl's inequality provides us a simple but effective tool.

\begin{theorem}[Weyl's Inequality on Singular Values]
\label{theorem:weyl_inequality_on_singular_value}
Let \( \bW_1, \bW_2 \in \mathcal{R}^{m\times n} \) where $m \geq n$, $\sigma_1 (\bW_1) \geq \sigma_2 (\bW_1) \geq ... \geq \sigma_n (\bW_1)$ be the ordered singular values of $\bW_1$, and $\sigma_1 (\bW_2 \geq \sigma_2 (\bW_2) \geq ... \geq \sigma_n (\bW_2)$ be the ordered singular values of $\bW_2$. 
Then we have:
\begin{align*}
\sigma_{i+j-1}(\bW_1 + \bW_2) \leq \sigma_{i}(\bW_1) + \sigma_{j}(\bW_2).  
\end{align*}
\end{theorem}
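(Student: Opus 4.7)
The plan is to prove Weyl's inequality for singular values by invoking the Courant--Fischer--Weyl min--max characterization and then intersecting two carefully chosen subspaces of right singular vectors. Concretely, for any $\bW \in \mathcal{R}^{m \times n}$ with $m \geq n$, I will use the identity
\begin{equation*}
\sigma_k(\bW) \;=\; \min_{\substack{V \subseteq \mathcal{R}^n \\ \dim V = n - k + 1}} \; \max_{\substack{\bx \in V \\ \|\bx\|_2 = 1}} \; \|\bW \bx\|_2,
\end{equation*}
which follows from the spectral theorem applied to $\bW^{\top}\bW$, combined with $\|\bW\bx\|_2^2 = \bx^{\top}\bW^{\top}\bW\bx$. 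This reduces the inequality on singular values to a statement about quadratic forms on a well-chosen subspace.

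First, I would fix indices $i, j$ with $i + j - 1 \leq n$ and let $\bv_1, \ldots, \bv_n$ and $\bu_1, \ldots, \bu_n$ denote orthonormal systems of right singular vectors of $\bW_1$ and $\bW_2$ respectively, ordered by decreasing singular value. Define
\begin{equation*}
V_1 \;=\; \operatorname{span}\{\bv_i, \bv_{i+1}, \ldots, \bv_n\}, \qquad V_2 \;=\; \operatorname{span}\{\bu_j, \bu_{j+1}, \ldots, \bu_n\},
\end{equation*}
so that $\dim V_1 = n - i + 1$ and $\dim V_2 = n - j + 1$. By construction, every unit $\bx \in V_1$ satisfies $\|\bW_1 \bx\|_2 \leq \sigma_i(\bW_1)$, and every unit $\by \in V_2$ satisfies $\|\bW_2 \by\|_2 \leq \sigma_j(\bW_2)$.

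Next I would invoke the standard dimension count $\dim(V_1 \cap V_2) \geq \dim V_1 + \dim V_2 - n = n - (i+j-1) + 1$, so $V_1 \cap V_2$ contains some subspace $V$ of dimension exactly $n - (i+j-1) + 1$. For any unit $\bx \in V$, the triangle inequality gives
\begin{equation*}
\|(\bW_1 + \bW_2)\bx\|_2 \;\leq\; \|\bW_1 \bx\|_2 + \|\bW_2 \bx\|_2 \;\leq\; \sigma_i(\bW_1) + \sigma_j(\bW_2).
\end{equation*}
Plugging $V$ into the min--max formula applied to $\bW_1 + \bW_2$ at index $k = i + j - 1$ then yields $\sigma_{i+j-1}(\bW_1 + \bW_2) \leq \sigma_i(\bW_1) + \sigma_j(\bW_2)$, completing the proof.

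The only delicate step is the dimension-count lemma for the intersection $V_1 \cap V_2$ and the realization that one needs the \emph{min}-max form (rather than the max--min form) to turn a specific witness subspace into an upper bound; the rest is purely mechanical. A mild bookkeeping obstacle is handling the edge case $i + j - 1 > n$, for which the inequality is either vacuous or follows by extending singular values as $0$, so I would treat it as a trivial preliminary remark.
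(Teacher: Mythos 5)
Your proof is correct and follows essentially the same route as the paper's: both invoke the Courant--Fischer min--max characterization, span the tail right singular vectors of $\bW_1$ and $\bW_2$ to form subspaces $V_1, V_2$, use the dimension-count lower bound on $\dim(V_1\cap V_2)$, and close with the triangle inequality. The only cosmetic difference is bookkeeping at the end: you pick a subspace of $V_1\cap V_2$ of exactly dimension $n-(i+j-1)+1$ and plug it directly into the min--max formula at index $i+j-1$, while the paper instead applies the formula at index $n - \dim(V_1\cap V_2) + 1$ and then uses monotonicity of the ordered singular values to conclude; these are the same argument.
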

The proof for Theorem~\ref{theorem:weyl_inequality_on_singular_value} is provided in Appendix~\ref{appendix_weyl_inequality_proof}. 
From Theorem~\ref{theorem:weyl_inequality_on_singular_value}, it is easy to see that $\sigma_1(\bW_1 + \bW_2) \leq \sigma_{1}(\bW_1) + \sigma_{1}(\bW_2)$. 
Let $\bW_t$ be 
the weight matrix at time step $t$, $\nabla \bW_t$ 
be the quantity computed from the gradients and their 
derivations (where $\nabla \bW_t$ can be obtained by SGD~\citep{sgd_robbins1951stochastic}, Adagrad~\citep{adagrad_duchi2011adaptive}, or Adam~\citep{adam_kingma2014adam}), 
$\alpha_t$ is the learning rate at time step $t$. Usually, our update equation is $\bW_{t} = \bW_{t-1} - \alpha_t \nabla \bW_t$.
According to Weyl's Inequality, we have, 
\begin{equation}
\label{eq:weight_update}
     \sigma_{1}(\bW_t) = \sigma_{1}(\bW_{t-1} - \alpha_t \nabla \bW_t) \leq \sigma_{1}(\bW_{t-1})+ \alpha_t \sigma_{1}(\nabla \bW_t).
\end{equation}
An important observation from Equation~\ref{eq:weight_update} is that if $\sigma_1(\nabla \bW)$ is very large, then $\bW_t$ will be 
significantly different from $\bW_{t-1}$. It means that the successive updates of $\bW_t$ at time step $t$ from time step $t-1$ would ``{\emph{jump}'' too much. A ``{\emph{smoother}'' updating should satisfy the following rule.

\begin{mrule}[Steady Weights Updating Rule] 
\label{principle:smoothing}
Given weights matrix $\bW_{t-1}$ and the updating quantity $\nabla \bW_t$ at step $t$, with the learning rate $\alpha_t$, a 
steady weights should satisfy the following inequality:
$   \| \bW_{t-1} - \alpha_t \nabla \bW_t \|_2  \leq (1+\tau) \| \bW_{t-1}  \|_2, $  where $\tau > 0$ is a small factor.
\end{mrule}

{To meet Rule \ref{principle:smoothing}, what we need is that $\sigma_{1}(\bW_{t-1})+ \alpha_t \sigma_{1}(\nabla \bW_t) \leq  (1+\tau) \| \bW_{t-1}  \|_2 = (1+\tau)  \sigma_{1}(\bW_{t-1})$. It is easy to see that: }
\begin{equation}
\label{eq:learning_rate_bound_by_tau}
    \alpha_t \leq \tau \frac{\sigma_{1}(\bW_{t-1})}{\sigma_{1}(\nabla \bW_t)}.
\end{equation}
This inequality 
tell us that the learning rate $\alpha_t$ should be bounded by a ratio of singular values 
${\sigma_{1}(\bW_{t-1})}$ and ${\sigma_{1}(\nabla \bW_t)}$. 
Generally, $\tau$ is a small value, \eg, 0.004 or 0.005.
The intuition behind 
is that if the spectral norm of $\nabla \bW_t$ is 
significantly larger than that of $\bW_t$, then the model is potentially undergoing rapid changes. In such cases, a large learning rate could lead to training instability. Therefore, our 
strategy is that if 
$\alpha_t > \tau \frac{\sigma_{1}(\bW_{t-1})}{\sigma_{1}(\nabla \bW_t)} $
then we truncate $\alpha_t$ 
to $\tau \frac{\sigma_{1}(\bW_{t-1})}{\sigma_{1}(\nabla \bW_t)}$. Since our 
base optimizer is AdamW~\citep{adamw_IlyaLoshchilov2018FixingWD} and our algorithm is motivated by Weyl's Inequality, 
we term our algorithm as AdamW$^2$.

\begin{algorithm}[t]
\small
\caption{$\text{AdamW}^2$: Taming Transformer via Weyl' Inequality without learning rate warmup.} 
    \hspace*{\algorithmicindent} {Input: learning rate scheduler $\alpha_t$, weight decay $\lambda$, and first-order and second-order momentum $\beta_1$, $\beta_2$} \\
    \hspace*{\algorithmicindent} Output: updated weight $\bw_T$
	\begin{algorithmic}[1]
		\For {$t=1,2,\ldots, T$}
		    \State $\boldsymbol{G}_{t}=\nabla_{\boldsymbol{W_{t-1}}} \mathcal{L}$

		            \State ${{{{{\boldsymbol{M}}_{t}}}} = \beta_{1} \boldsymbol{M}_{t-1}+\left(1-\beta_{1}\right) \boldsymbol{G}_{t}}, \quad  {{{{\boldsymbol{V}}_{t}}}} = \beta_{2} \boldsymbol{V}_{t-1}+\left(1-\beta_{2}\right) \boldsymbol{G}_{t}^{2}$

		            \State ${{{{\hat{\boldsymbol{M}}_{t}}}} = \boldsymbol{M}_{t} /\left(1-\beta_{1}^{t}\right)}, \quad {{{{\hat{\boldsymbol{V}}_{t}}}} = \boldsymbol{V}_{t} /\left(1-\beta_{2}^t\right)}$

		            \State ${{{{\nabla \boldsymbol{W}_{t}}}}} = {\hat{\boldsymbol{M}}_{t}} \oslash \sqrt{{\hat{\boldsymbol{V}}_{t}} + \epsilon }$ 
                                \Comment{ ${{\nabla \boldsymbol{W}_{t}}}$ is the final update quantity.}

                        \State {\textcolor{blue}{$\hat \delta_1 = \operatorname{PowerIter}(\nabla \bW_t),\quad \hat \sigma_1 = \operatorname{PowerIter}(\bW_{t-1})$}}  
                        \Comment{Power iteration to compute spectral norm.}

                    \If {\textcolor{blue}{$\frac{\alpha_t  \hat \delta_1}{\hat \sigma_1}>\tau$}} \Comment{If Rule~\ref{principle:smoothing} does not meet, 
                    adjust the learning rate $\alpha_t$.}
			        \State {\textcolor{blue}{$\alpha_{t} =   \frac{\tau \hat \sigma_1}{\hat \delta_1} $}}
                    \EndIf

			    \If {Weight Decay is Yes} 
			        \State {\jingAlgo{\boldsymbol{W}_{t}} $\boldsymbol{W}_{t-1} - \alpha_{t}{{\nabla \boldsymbol{W}_{t}}} - \alpha_{t} \lambda_{t} \boldsymbol{W}_{t-1}$}
                    \Else 
                         \State {\jingAlgo{\boldsymbol{W}_{t}} $\boldsymbol{W}_{t-1} - \alpha_{t} {{\nabla \boldsymbol{W}_{t}}}$}
			    \EndIf
		            
		\EndFor
	\end{algorithmic} 
\label{algo:adamww}
\end{algorithm}

For clarity, we summarize our AdamW$^2$ in Algorithm~\ref{algo:adamww}, 
where lines 6-9 highlight 
our improvements over 
the base optimizer, the other codes are same as AdamW.
According to line 6 in Algorithm~\ref{algo:adamww}, 
$\sigma_{1}(\nabla \bW_t)$ and $\sigma_{1}(\bW_{t-1})$ are 
computed via a fast power iteration method. In practice, we set the maximum iterations in power iteration to 3. 
Actually, we find that two iterations are enough to estimate the spectral norm of matrices. According to Equation~\ref{eq:learning_rate_bound_by_tau}, if $\alpha_t > \tau \frac{\sigma_{1}(\bW_{t-1})}{\sigma_{1}(\nabla \bW_t)}$, then the learning rate $\alpha_t$ will be truncated to $\tau \frac{\sigma_{1}(\bW_{t-1})}{\sigma_{1}(\nabla \bW_t)}$, or the algorithm will adjust $\alpha_t$ and use the default learning rate set by the learning rate schedule.

\begin{table}[t]
\centering
\caption{Quantitative comparison of AdamW  and $\text{AdamW}^2$ with and without learning rate warmup. $\text{AdamW}^2$ demonstrates a very competitive performance compared to AdamW.}
\begin{tabular}{cccccc}
\toprule
Method & \multicolumn{2}{c}{\textbf{ViT (Acc. $\uparrow$)}} & \multicolumn{1}{c}{\textbf{GPT (Loss $\downarrow$)}} & \multicolumn{2}{c}{\textbf{Swin-Transformer (Acc. $\uparrow$)}} \\
Configurations &  ViT-B & ViT-L & GPT-S & Swin-S & Swin-B  \\
Parameters     &  86M & 307M   & 125M & 50M & 88M \\
\midrule
AdamW (with warmup)    &  80.22 & 81.65 & 2.848 & 83.02 & 83.48 \\
AdamW$^2$ (no warmup) &  80.58 & 81.82 & 2.840 & 83.14 & 83.44 \\
\bottomrule
\end{tabular}
\label{tab:experimental_results}
\end{table}

\section{Experiments}
\label{sec:experiments}

Compared to some previous works~\citep{rezero_bachlechner2021rezero, prenorm_wang2019learning, on_layer_norm_prenorm_xiong2020layer, deepnorm_wang2022deepnet, qi2023understanding} that focuse on improving the training stability of Transformer, our AdamW$^2$ does not need to adjust the network structure and we do not use learning rate warmup. For ViT, Swin-Transformer and GPT, we will use a warmup of 60 epochs, 20 epochs, and 2000 steps, respectively. 
In $\text{AdamW}^2$, we directly use a cosine learning rate schedule and decay the learning rate from maximum to minimum.

\begin{wrapfigure}{l}{0.76\textwidth}
\vspace{0pt}
  \centering
	\begin{subfigure}{0.48\linewidth}
		\includegraphics[width=1\textwidth]{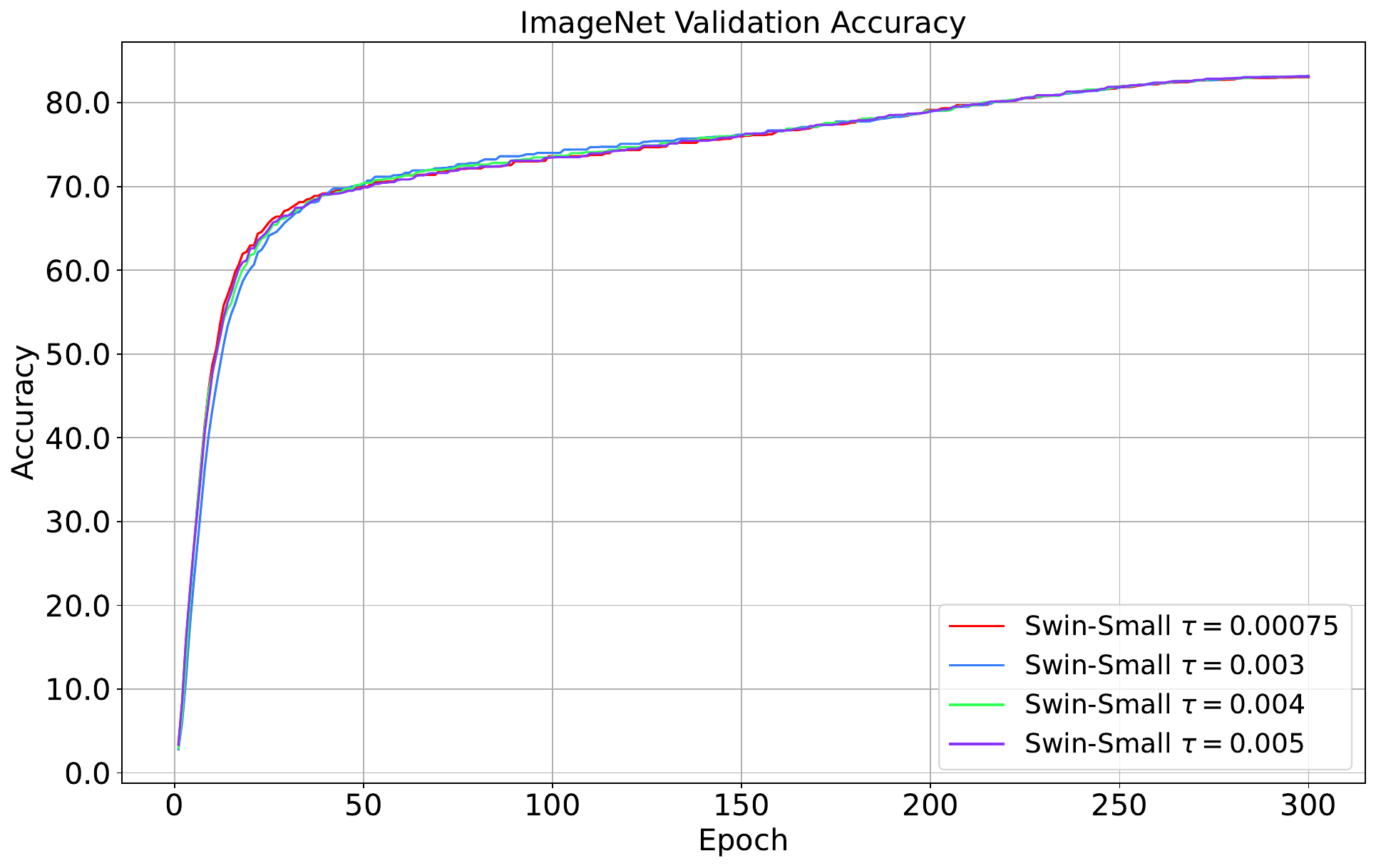}
		\caption*{(a) Swin-Transformer}
		
	\end{subfigure}
	\begin{subfigure}{0.48\linewidth}
		\includegraphics[width=1\textwidth]{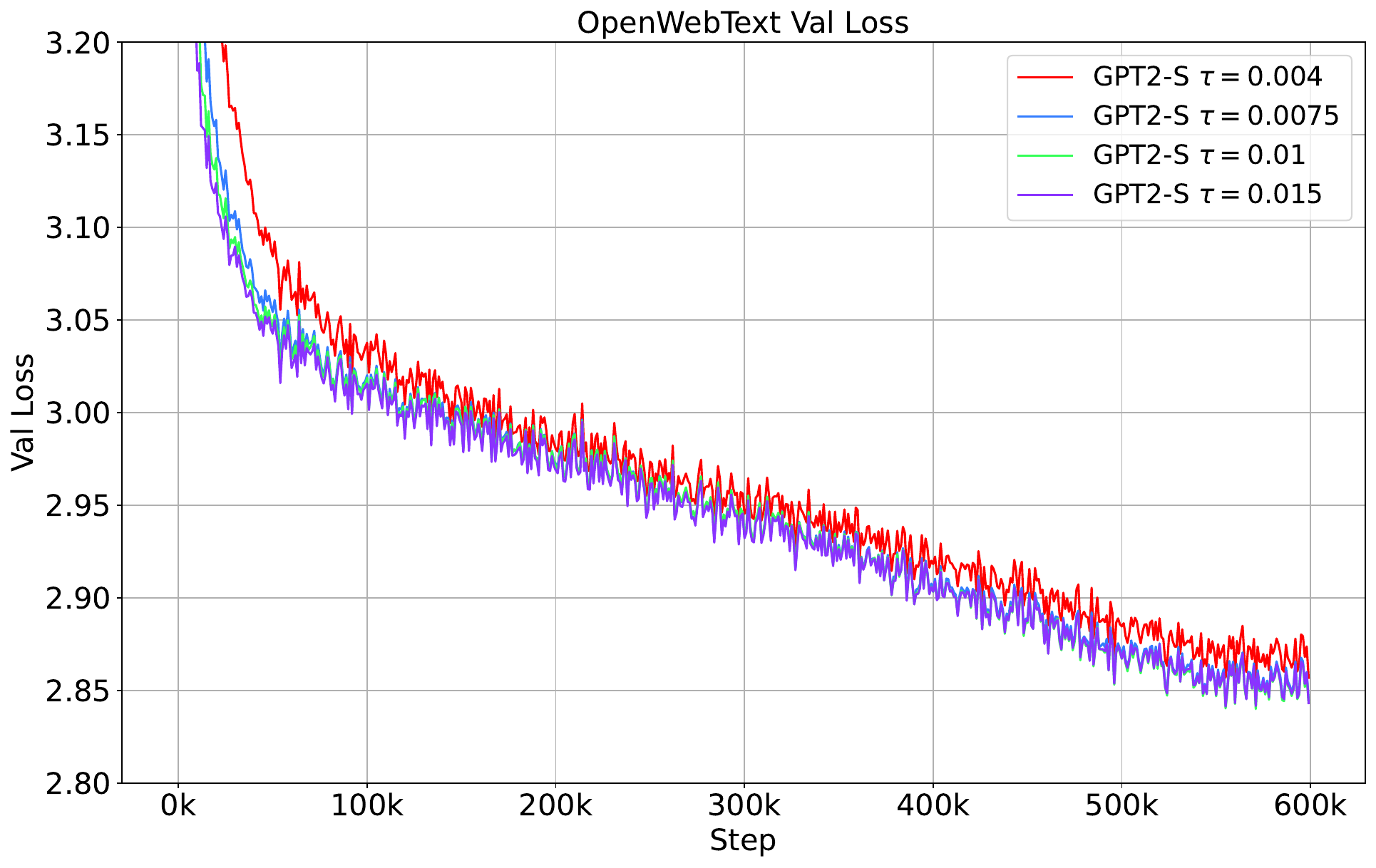}
		\caption*{(b) GPT}
		
	\end{subfigure}
 \caption{Ablation study of $\tau$ in $\text{AdamW}^2$ using Swin-S and GPT-S.} 
 \label{fig:ablation_study_tau}
\end{wrapfigure}

We conduct experiments on three  
popular Transformers, \ie, ViT~\citep{vit_dosovitskiy2020image}, GPT-2~\citep{gpt2_radford2019language} and Swin-Transformer~\citep{liu2021swin}, where ViT 
and Swin-Transformer are pure encoder architectures and GPT is a pure 
causal decoder. Note that we do not conduct any adjustments to the networks and directly use the original implementation. Our experiments include 
image classification on ImageNet~\citep{deng2009imagenet} and large language model on OpenWebText~\citep{openwebtext_Gokaslan2019OpenWeb} dataset. 
We list some training configurations in Appendix~\ref{appendix:training_configurations}.
The quantitative results are shown in Table~\ref{tab:experimental_results}. Our baseline model is the corresponding Transformer using a learning rate warmup; whereas baseline models without using learning rate warmup will crash. $\text{AdamW}^2$ demonstrates a very competitive performance compared to the baseline method. These experimental results 
verify that our  
understanding of the training dynamics of the Transformer is rational.

We also conduct an ablation study of the choice of $\tau$ 
in GPT and Swin-Transformer. The results are shown in Figure~\ref{fig:ablation_study_tau}. We can see that the performance of AdamW$^2$ varies slightly for different values of $\tau$, but overall, our approach is robust for different choices of $\tau$. The curves basically overlap in the later epochs because our steady updating rule is never broken in the later epochs.

\section{Conclusion}
\label{sec:conclusion}
In this paper, we revisited the training dynamics of the Transformers by visualizing the spectral norm of weight matrices, the activations and the attention map, presented a theoretical analysis for the Transformer training and identified two modes of attention entropy collapse, \ie, the benign collapse and the malignant collapse, in which the malignant collapse accompanies model crash. Moreover, we revealed that the \textit{spectral energy concentration} of ${\bW_q}^{\top} \bW_k$ is the reason 
behind the model crash, which causes the attention map to be sparse and simultaneously low-rank. Furthermore, we proposed a steady updating 
rule to resolve the problem of spectral energy concentration of ${\bW_q}^{\top} \bW_k$ by controlling the rapid growth of singular values, which can prevent the fast spectral energy concentration to a few directions and thus avoid the malignant entropy collapse. We conducted extensive experiments to verify the proposed strategy with ViT, Swin Transformer, and GPT, and demonstrated that the proposed strategy could effectively and stably train a model without using any learning rate warmup.

\newpage

\section*{Ethics Statement}
In this paper, we aim to provide a novel approach to train transformers without learning rate warmup. Our work does not involve any human subjects, and we have carefully ensured that it poses no potential risks or harms. Additionally, there are no conflicts of interest, sponsorship concerns, or issues related to discrimination, bias, or fairness associated with this study. We have taken steps to address privacy and security concerns, and all data used comply with legal and ethical standards. Our work fully adheres to research integrity
principles, and no ethical concerns have arisen during the course of this study.

\section*{Reproducibility Statement}
To ensure the reproducibility of our work, we provide all the details to reproduce the experiments. Theoretical proofs of the claims made in this paper, and detailed experimental settings and configurations are provided in the Appendices.

\section*{Acknowledgments}
Chun-Guang Li was partially supported by the National Natural Science Foundation of China under Grant 61876022. Qin Zou was partially funded by the National Natural Science Foundation of China under Grant 62171324.

\bibliography{iclr2025_conference}
\bibliographystyle{iclr2025_conference}

\newpage 

\appendix

\section{Kronecker Product and Vectorization}
\label{appendix:kronecker_product}
Kronecker product~\citep{kronecker_product_and_matrix_calculus_graham2018kronecker, matrix_cookbook_petersen2008matrix}, also called as matrix direct product, is an operation defined on two matrices of arbitrary size. The specific definition is as follows.

\

\begin{definition}[Kronecker Product]
Let \(\boldsymbol{A}\) be an \(n \times p\) matrix and \(\boldsymbol{B}\) an \(m \times q\)
matrix. The \(m n \times p q\) matrix
$$
\boldsymbol{A} \otimes \boldsymbol{B}=\left[\begin{array}{cccc}a_{1,1} \boldsymbol{B} & a_{1,2} \boldsymbol{B} & \cdots & a_{1, p} \boldsymbol{B} \\ a_{2,1} \boldsymbol{B} & a_{2,2} \boldsymbol{B} & \cdots & a_{2, p} \boldsymbol{B} \\ \vdots & \vdots & \vdots & \vdots \\ a_{n, 1} \boldsymbol{B} & a_{n, 2} \boldsymbol{B} & \cdots & a_{n, p} \boldsymbol{B}\end{array}\right]
$$
is called the Kronecker product of \(\boldsymbol{A}\) and \(\boldsymbol{B}\). It is also called
the direct product or the tensor product.
\end{definition}

For instance, if $\bA = \begin{bmatrix} 1 & 2 \\ 3 & 4 \end{bmatrix}$, and $\bB = \begin{bmatrix} 1 & 2 & 3 \\ 3 & 4 & 5 \end{bmatrix}$, then $\boldsymbol{A} \otimes \boldsymbol{B} = \begin{bmatrix} 1 & 2 & 3 &2 & 4 & 6 \\ 3 & 4 & 5 & 6& 8 & 10 \\ 3&6 & 9 & 4 & 8 & 12 \\ 9 &12 & 15 & 12 & 16 & 20 \end{bmatrix}$.

Some basic properties of the Kronecker product include: 
\begin{align*}
    \centering
    \boldsymbol{A} \otimes(\boldsymbol{B} \otimes \boldsymbol{C}) &= (\boldsymbol{A} \otimes \boldsymbol{B}) \otimes \boldsymbol{C}, \\
    \boldsymbol{A} \otimes(\boldsymbol{B}+\boldsymbol{C}) &= (\boldsymbol{A} \otimes \boldsymbol{B})+(\boldsymbol{A} \otimes \boldsymbol{C}), \quad 
    (\boldsymbol{A}+\boldsymbol{B}) \otimes \boldsymbol{C} = (\boldsymbol{A} \otimes \boldsymbol{C})+(\boldsymbol{B} \otimes \boldsymbol{C}), \\
    (\boldsymbol{A} \otimes \boldsymbol{B})^{T} &= \boldsymbol{A}^{T} \otimes \boldsymbol{B}^{T}.
\end{align*}

{For a matrix $\boldsymbol{A}$, the rank of $\boldsymbol{A} \otimes \boldsymbol{A}$ can be computed as,}
$$
\operatorname{rank}(\boldsymbol{A} \otimes \boldsymbol{A}) = \operatorname{rank}(\boldsymbol{A}) \cdot \operatorname{rank}(\boldsymbol{A}).
$$

{It means if the rank of the matrix $\boldsymbol{A}$ is small, then the rank of $\boldsymbol{A} \otimes \boldsymbol{A}$ will also be very small.
}

\
In mathematics, \textit{Vectorization}~\citep{kronecker_product_and_matrix_calculus_graham2018kronecker, matrix_cookbook_petersen2008matrix} is usually used together with the Kronecker product to express matrix multiplication as a linear transformation on matrices.
After vectorization, we can calculate the Jacobian matrix of the matrix product more conveniently. A property of vectorization for the matrix product is defined below.

\begin{proposition}[Property of Vectorization for Matrix Product]
     Let $\bA \in \mathcal{R}^{m \times n}, \bB\in \mathcal{R}^{n \times k}, \bC\in \mathcal{R}^{k \times l}$, then we have
\[\operatorname{vec}(\bA\bB\bC) = (\bC^{\top} \otimes \bA)\operatorname{vec}(\bB).\]
\end{proposition}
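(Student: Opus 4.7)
The plan is to establish the identity by reducing it to two more elementary vectorization identities and then combining them via the mixed-product property of the Kronecker product. Specifically, I would first prove two lemmas: (i) $\operatorname{vec}(\bA\bY) = (\bI_p \otimes \bA)\operatorname{vec}(\bY)$ for any $\bY \in \mathcal{R}^{n\times p}$, and (ii) $\operatorname{vec}(\bY\bC) = (\bC^{\top} \otimes \bI_n)\operatorname{vec}(\bY)$ for any $\bY \in \mathcal{R}^{n\times k}$. Both follow from a direct column-wise computation. For (i), the $j$-th column of $\bA\bY$ is simply $\bA\by_j$, so stacking the columns produces a block-diagonal action of $\bA$ on each column, which by definition is exactly $(\bI_p \otimes \bA)\operatorname{vec}(\bY)$. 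For (ii), the $j$-th column of $\bY\bC$ is $\sum_{i=1}^k C_{i,j}\,\by_i$, so when we stack all $l$ columns on top of one another, the coefficient $C_{i,j}$ multiplies the $i$-th column block of $\operatorname{vec}(\bY)$, and this stacking structure is precisely $(\bC^{\top} \otimes \bI_n)\operatorname{vec}(\bY)$ (noting the transpose, since the $j$-th block of the output corresponds to row $j$ of $\bC^{\top}$).

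With both lemmas in hand, I would apply them successively. First, viewing $\bA\bB\bC$ as $\bA\cdot(\bB\bC)$ with $\bB\bC \in \mathcal{R}^{n\times l}$, lemma (i) gives
\begin{equation*}
\operatorname{vec}(\bA\bB\bC) = (\bI_l \otimes \bA)\operatorname{vec}(\bB\bC).
\end{equation*}
Next, applying lemma (ii) to $\bB\bC$ yields $\operatorname{vec}(\bB\bC) = (\bC^{\top} \otimes \bI_n)\operatorname{vec}(\bB)$. Substituting and invoking the mixed-product rule $(\bA_1\otimes\bA_2)(\bB_1\otimes\bB_2) = (\bA_1\bB_1)\otimes(\bA_2\bB_2)$, we obtain
\begin{equation*}
(\bI_l \otimes \bA)(\bC^{\top} \otimes \bI_n) = (\bI_l \bC^{\top}) \otimes (\bA\bI_n) = \bC^{\top} \otimes \bA,
\end{equation*}
which yields the claimed identity $\operatorname{vec}(\bA\bB\bC) = (\bC^{\top} \otimes \bA)\operatorname{vec}(\bB)$.

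No genuine obstacle is expected; the result is classical. The only care required is purely notational---correctly tracking how the Kronecker block structure corresponds to the column-stacked ordering in $\operatorname{vec}(\cdot)$, especially the appearance of the transpose in lemma (ii). A quick dimension audit ($\bC^{\top}\otimes\bA \in \mathcal{R}^{lm \times kn}$ acting on $\operatorname{vec}(\bB)\in\mathcal{R}^{nk}$ to produce a vector in $\mathcal{R}^{lm}$, matching $\operatorname{vec}(\bA\bB\bC)$) confirms the bookkeeping. If preferred, the same identity can be derived in a single step by comparing the $((j-1)m + i)$-th entry on both sides, but the two-lemma decomposition is cleaner and makes the role of the mixed-product rule transparent.
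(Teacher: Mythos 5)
Your proof is correct, but it takes a genuinely different route from the paper's. The paper's proof expands $\bB$ as a sum of rank-one terms $\sum_{i,j} b_{ij}\,\be_i\be_j^{\top}$, so that $\bA\bB\bC = \sum_{i,j} b_{ij}\,\ba_i \bC_j$ with $\ba_i$ the $i$-th column of $\bA$ and $\bC_j$ the $j$-th row of $\bC$, and then invokes the outer-product identity $\operatorname{vec}(\bu\bv^{\top}) = \bv \otimes \bu$ termwise before reassembling the double sum into $(\bC^{\top}\otimes\bA)\operatorname{vec}(\bB)$. You instead factor the triple product as $\bA\cdot(\bB\bC)$, prove the two two-factor identities $\operatorname{vec}(\bA\bY) = (\bI \otimes \bA)\operatorname{vec}(\bY)$ and $\operatorname{vec}(\bY\bC) = (\bC^{\top}\otimes\bI)\operatorname{vec}(\bY)$ by a column-wise argument, and compose them via the Kronecker mixed-product rule $(\bI_l\otimes\bA)(\bC^{\top}\otimes\bI_n) = \bC^{\top}\otimes\bA$. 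Both are classical derivations. The paper's route needs only one auxiliary fact (vectorization of a rank-one matrix) but requires a bookkeeping-heavy double sum; your route is more modular, isolates two reusable lemmas that the paper in fact also lists afterward among the ``further properties,'' and makes the role of the mixed-product rule explicit, at the cost of depending on that rule as an additional ingredient. Your dimension audit at the end is a nice sanity check that the paper omits.
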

\begin{proof}
{
Let $\bC_i$ 
be the $i$-th row of $\bC$. Then we have: 
\begin{align*}
\operatorname{vec}(\bA \bB \bC) &= \sum_{i=1}^n \sum_{j=1}^k b_{ij}\operatorname{vec}(\ba_i\bC_j) \\[1ex]
&= \sum_{i=1}^n \sum_{j=1}^k b_{ij}(\bC_j^{\top} \otimes \ba_i) \\[1ex]
&= \sum_{j=1}^k (\bC_j^{\top} \otimes \bA)\bb_j \\[1ex]
&= (\bC^{\top} \otimes \bA)\operatorname{vec}(\bB). 
\end{align*}
}
\end{proof}

Furthermore, we have the following properties:
\begin{align*}
    \centering
    \operatorname{vec}(\bA \bB) &= (\bI_k \otimes \bA) \operatorname{vec}(\bB) = \left(\bB^{\top} \otimes \bI_m\right) \operatorname{vec}(\bA), \\
    \operatorname{vec}(\bA \bB \bC) &=\left(\bC^{\top} \bB^{\top} \otimes \bI_{m}\right) \operatorname{vec}(\bA) \\
    &= \left(\bC^{\top} \otimes \bA\right) \operatorname{vec}(\boldsymbol{B}) \\
    &= \left(\bI_{l} \otimes \bA \bB\right) \operatorname{vec}(\bC)
\end{align*}

\noindent where $\bI_k \in \mathcal{R}^{k\times k}, \bI_l \in \mathcal{R}^{l\times l}, \bI_m \in \mathcal{R}^{m\times m}$ are all identity matrices.

Together with the Kronecker product, vectorization is an effective tool to compute matrix calculus. We can see the following two examples.

Let $\bP = \bA \bB$ where $\bA \in \mathcal{R}^{m\times n}, \bB \in \mathcal{R}^{n\times k}$, we have: 
\begin{align*}
    \centering
    \frac{\partial \operatorname{vec} (\bP)}{\partial \operatorname{vec} (\bA)} = \boldsymbol{B}^{\top} \otimes \boldsymbol{I}_m, \quad
    \frac{\partial \operatorname{vec} (\bP)}{\partial \operatorname{vec} (\bB)} = \boldsymbol{I}_k \otimes \boldsymbol{A}.
\end{align*}
    
Let $\bP = \bA \bB \bC$ where $\bA \in \mathcal{R}^{m\times n}, \bB \in \mathcal{R}^{n\times k}, \bC \in \mathcal{R}^{k\times l} $, we have,
\begin{align*}
    \centering
    \frac{\partial \operatorname{vec} (\bP)}{\partial \operatorname{vec} (\bA)} = \bC^{\top} \bB^{\top} \otimes \bI_{m}, \quad
    \frac{\partial \operatorname{vec} (\bP)}{\partial \operatorname{vec} (\bB)} = \bC^{\top} \otimes \bA, \quad
    \frac{\partial \operatorname{vec} (\bP)}{\partial \operatorname{vec} (\bC)} = \bI_{l} \otimes \bA \bB.
\end{align*}

Vectorization and Kronecker product provide us a convenient way to analyze the self-attention module. We can compute the Jacobian matrix of the output with respect to the input or the weight matrix more conveniently.
For more introduction to Kronecker product and vectorization, the readers can refer to~\citep{matrix_cookbook_petersen2008matrix, kronecker_product_and_matrix_calculus_graham2018kronecker}

\ 

\ 

\

\section{Derivation of Jacobian Matrix for Single-head Self-Attention}
\label{appendix:single_head_attention}
A  single-head self-attention can be defined as
\begin{equation*}
    \bY = \bW_v \bX \bA,
\end{equation*}
\noindent where $\bA = \operatorname{softmax}(\frac{\bP}{\sqrt{d_q}})$ in which $\bP = \bX^{\top} {\bW_q}^{\top} {\bW_k} {\bX}$.
The matrix $\bA$ is called the attention matrix and $\frac{\bP}{\sqrt{d_q}}$ is called the logit,  $\bA \in \mathcal{R}^{n\times n}, \bX \in \mathcal{R}^{d\times n}, \bW_v \in \mathcal{R}^{d_v\times d}$ . Here, our goal is to calculate $\frac{\partial \text{vec}(\bY)}{\partial \text{vec}(\bX)} $.

In the main body, we have derived $\frac{\partial \operatorname{vec}(\bP)}{\partial \operatorname{vec}(\bX)}$.
Here, let us calculate the matrix calculus of $\bA = \operatorname{softmax}(\frac{\bP}{\sqrt{d_q}})$ with respect to $\bP$ using Kronecker products and vectorization. We can rewrite it as $\bA = \exp(\frac{\bP}{\sqrt{d_q}}) \oslash (\boldsymbol{1}_n\boldsymbol{1}_n^\top \exp(\frac{\bP}{\sqrt{d_q}}))$, where $\boldsymbol{1}_n$ denotes an $n$-dimensional vector of 1's
in $\mathcal{R}^{n}$. 
Note that $\bA$ is obtained by conducting a softmax operation in each column individually.

First, let us define two intermediate variables:
\begin{equation*}
    \bB = \exp(\frac{\bP}{\sqrt{d_q}}), \quad \bC  = \boldsymbol{1}\boldsymbol{1}^\top \exp(\frac{\bP}{\sqrt{d_q}})  = \boldsymbol{1}\boldsymbol{1}^\top \bB.
\end{equation*}
In this way, we can represent the attention matrix $\bA$ as $\bA = \bB \oslash \bC$.

Then, we can vectorize the equation $\bA = \bB \oslash \bC$ as follows:
\begin{equation*}
       \text{vec}(\bA) = \text{vec}(\bB \oslash \bC) = \text{vec}(\bB) \oslash \text{vec}(\bC).
\end{equation*}
According to  the chain rule, we have
\begin{equation*}
       \frac{\partial \text{vec}(\bA)}{\partial \text{vec}(\bP)} = \frac{\partial \text{vec}(\bA)}{\partial \text{vec}(\bB)} \frac{\partial \text{vec}(\bB)}{\partial \text{vec}(\bP)} + \frac{\partial \text{vec}(\bA)}{\partial \text{vec}(\bC)} \frac{\partial \text{vec}(\bC)}{\partial \text{vec}(\bP)}.
\end{equation*}

Let us calculate each individual term. We have
\begin{align*}
    \frac{\partial \text{vec}(\bA)}{\partial \operatorname{vec}(\bB)} &= \boldsymbol{1}_{n^2} \oslash \text{diag}(\text{vec}(\bC)), \\
   \frac{\partial \text{vec}(\bB)}{\partial \text{vec}(\bP)} &= \frac{\text{diag}(\text{vec}(\bB))}{\sqrt{d_q}}, \\ 
   \frac{\partial \text{vec}(\bA)}{\partial \text{vec}(\bC)} &= -\text{diag}(\text{vec}(\bB) \oslash (\text{vec}(\bC) \odot \text{vec}(\bC) )), \\
   \frac{\partial \text{vec}(\bC)}{\partial \text{vec}(\bP)} &=  \frac{(\boldsymbol{I}_n \otimes \boldsymbol{1}_n\boldsymbol{1}_n^\top) \text{diag}(\text{vec}(\bB))}{\sqrt{d_q}},
\end{align*}
\noindent where $\boldsymbol{I}_n$ is an identity matrix of $n \times n$ and $\otimes$ is the Kronecker product, $\boldsymbol{1}_{nn}$ denotes an $n^2$-dimensional vector of 1's in $\mathcal{R}^{n^2}$. 

By substituting the above four terms into the chain rule, we have
\begin{align*}
    \frac{\partial \text{vec}(\bA)}{\partial \text{vec}(\bP)} &= \frac{\left( \boldsymbol{1}_{n^2} \oslash \text{diag}(\text{vec}(\bC))\right) \text{diag}(\text{vec}(\bB)) -\text{diag}(\text{vec}(\bB) \oslash (\text{vec}(\bC) \odot \text{vec}(\bC) )) (\boldsymbol{I}_n \otimes \boldsymbol{1}_n\boldsymbol{1}_n^\top ) \text{diag}(\text{vec}(\bB)) }{\sqrt{d_q}} \\
    &=\frac{ \text{diag}(\text{vec}(\bA)) -\text{diag}(\text{vec}(\bB) \oslash (\text{vec}(\bC) \odot \text{vec}(\bC) )) (\boldsymbol{I}_n \otimes \boldsymbol{1}_n \boldsymbol{1}_n^\top ) \text{diag}(\text{vec}(\bB)) }{\sqrt{d_q}} \\
    &= \frac{ \text{blockdiag}(\text{diag}(\bA_{:,1}) - \bA_{:,1} \bA_{:,1}^\top, \dots, \text{diag}(\bA_{:,n}) - \bA_{:,n} \bA_{:,n}^\top) }{\sqrt{d_q}}.
\end{align*}

For the 
simplicity, we 
denote
$$
\bJ  = \text{blockdiag}(\text{diag}(\bA_{:,1}) - \bA_{:,1} \bA_{:,1}^\top, \dots, \text{diag}(\bA_{:,n}) - \bA_{:,n} \bA_{:,n}^\top).
$$

When $\bA$ and $\bP$ are vectorized into 
vectors, 
we use $\ba$ and $\bp$ to denote them, respectively. 
Then we see that 
\begin{equation*}
     \frac{\partial \text{vec}(\ba)}{\partial \text{vec}(\bp)} = \frac{\text{diag}(\ba) - \ba \ba^{\top}}{\sqrt{d_q}}.
\end{equation*}
If $\ba$ approaches to a unit vector $\be$, then the Jabobian matrix $\frac{\partial \text{vec}(\ba)}{\partial \text{vec}(\bp)}$ will tend to $\boldsymbol{0}$.

In Section~\ref{sec:matrix_calculus_transformer}, we have the following Jacobian matrix
\begin{equation*} 
    \frac{\partial \operatorname{vec}(\bP)}{\partial \operatorname{vec}(\bX)} = 
    (\bX^{\top}{\bW_k}^{\top}{\bW_q} \otimes \bI_n)\bK + (\bI_n \otimes \bX^{\top}{\bW_q}^{\top}{\bW_k}). 
    \label{eq:dp_dx}
\end{equation*}

By vectorization of $ \bY = \bW_v \bX \bA$, we have 
\begin{equation*}
     \partial \text{vec}(\bY) =  (\bA^\top \otimes \bW_v) \partial \text{vec}(\bX)  + (\bI_n \otimes \bW_v\bX) \partial \text{vec}(\bA).
\end{equation*}

Therefore, according to the product rule and chain rule, we can denote the Jacobian matrix of $\bY$ with respect to $\bX$ as follows: 
\begin{align*}
  \frac{\partial \text{vec}(\bY)}{\partial \text{vec}(\bX)} &= (\bA^\top \otimes \bW_v) + (\bI_n \otimes \bW_v\bX) \frac{\partial \text{vec}(\bA)}{\partial \text{vec}(\bX)}, \\
  &= (\bA^\top \otimes \bW_v) + (\bI_n \otimes \bW_v\bX) \frac{\partial \text{vec}(\bA)}{\partial \text{vec}(\bP)} \frac{\partial \text{vec}(\bP)}{\partial \text{vec}(\bX)}.
\end{align*}

Bringing in all these terms, we get the following formula: 
{\footnotesize
\begin{equation}
  \frac{\partial \text{vec}(\bY)}{\partial \text{vec}(\bX)} = (\bA^\top \otimes \bW_v) + (\bI_n \otimes \bW_v\bX)  \frac{\bJ}{\sqrt{d_q}}  \left( 
  (\bX^{\top}{\bW_k}^{\top}{\bW_q} \otimes \bI_n)\bK + (\bI_n \otimes \bX^{\top}{\bW_q}^{\top}{\bW_k}) 
 \right).
\label{eq:self_attention_bp}
\end{equation}
}

Let us analyze Equation~\ref{eq:self_attention_bp}. If a malignant entropy mode happens, $\frac{\bJ}{\sqrt{d_q}}$ will approach $\boldsymbol{0}$ because each $\ba$ in $\bA$ will be a unit vector $\be$. From the perspective of the forward process, the features $\bY$ will collapse to several directions. From the perspective of the backward process, $\frac{\bJ}{\sqrt{d_q}}$ will become $\boldsymbol{0}$, and $\frac{\partial \text{vec}(\bY)}{\partial \text{vec}(\bX)} $ will be a sparse and simultaneously low-rank matrix. Through $\bA^\top \otimes \bW_v$, most of the positions in $\bX$ will get zero gradient, and only very few columns will obtain some large noisy gradients. In a malignant entropy mode, the learned feature is invalid and useless. Similarly, if a benign entropy mode happens, the attention map $\bA$ will approach an identity matrix $\bI$ and $\frac{\bJ}{\sqrt{d_q}}  \approx 0$ when $\bA \approx \bI$. Therefore, we have $\frac{\partial \text{vec}(\bY)}{\partial \text{vec}(\bX)} \approx (\bI^\top \otimes \bW_v)$. In this way, a self-attention module degenerates to a linear layer.

\ 

\ 

\

\section{Proof of Benign Entropy Collapse}
\label{appendix:benign_entropy_collapse}
Recall that $\bA = \operatorname{softmax}(\frac{\bP}{\sqrt{d_q}})$ where $\bP = \bX^{\top} {\bW_q}^{\top} {\bW_k} {\bX}$. Here, let $\bW =  {\bW_q}^{\top} {\bW_k} $ and $\bW \in \mathcal{R}^{d\times d}$. In this way, we have that $\bA = \operatorname{softmax}({\frac{\bX^{\top} \bW {\bX}}{\sqrt{d_q}}})$. We know that $\operatorname{rank}(\bW) \leq d_q$ and $d_q < d$. 
To prove $\bA$ will always collapse to an identity matrix when $\bW$ is a non-symmetric positive quasi-definite square matrix, it is equivalent to prove $\mathbb{E}\left[{\bx_i}^{\top} \bW \bx_i\right] \gg \mathbb{E}\left[{\bx_i}^{\top} \bW \bx_j\right]$ for any $i \neq j$.
It will be very hard to prove it mathematically if $\bW$ is a form of a non-symmetric positive quasi-definite square matrix. Therefore, let us make some simplification assumptions. 
Assume $\bW$ is a real symmetric positive semi-definite square matrix and its trace is  in direct proportion to the dimension $d_q$, and any $\bx_i$ is a high-dimension random vector and each element in $x_{i,j} \widesim[2]{\text{iid}} \mathcal{N}(0, 1)$. 

We break our proof into two sub-problems. 
\begin{proposition}[Expectation of ${\bx_i}^{\top} \bW \bx_i$]
\label{pro:xi_xi}
    Let $\bW$ be a real symmetric positive semi-definite matrix, and any $\bx_i$ be a high-dimensional random vector. Then, we have $\mathbb{E}\left[{\bx_i}^{\top} \bW \bx_i\right] = \operatorname{trace} (\bW)$.
\end{proposition}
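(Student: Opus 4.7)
The plan is to compute the expectation directly via the trace trick, which is the standard tool for handling quadratic forms in Gaussian vectors. First, since $\bx_i^\top \bW \bx_i$ is a scalar, it equals its own trace, so I would write $\bx_i^\top \bW \bx_i = \operatorname{trace}(\bx_i^\top \bW \bx_i)$, and then apply the cyclic property of the trace to rearrange it as $\operatorname{trace}(\bW \bx_i \bx_i^\top)$. Taking expectation and interchanging it with the trace (which is just linearity of expectation applied to a finite sum of entries), I get $\mathbb{E}[\bx_i^\top \bW \bx_i] = \operatorname{trace}(\bW \cdot \mathbb{E}[\bx_i \bx_i^\top])$.

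The next step is to identify the covariance matrix $\mathbb{E}[\bx_i \bx_i^\top]$. Under the stated assumption that the entries $x_{i,j}$ are i.i.d.\ $\mathcal{N}(0,1)$, the $(j,k)$ entry of $\mathbb{E}[\bx_i \bx_i^\top]$ is $\mathbb{E}[x_{i,j} x_{i,k}]$, which equals $1$ when $j=k$ (the variance of a standard normal) and $0$ otherwise (by independence). Hence $\mathbb{E}[\bx_i \bx_i^\top] = \bI_d$, and plugging this back in gives $\operatorname{trace}(\bW \bI_d) = \operatorname{trace}(\bW)$, which is the claimed identity.

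There is essentially no obstacle here; the main thing worth remarking is that the symmetry and positive semi-definiteness of $\bW$ assumed in the proposition statement are not actually used in the computation, so the identity in fact holds for any square matrix $\bW$. These structural assumptions on $\bW$ will instead play a role later, when Proposition~\ref{pro:xi_xi} is combined with the analogous calculation of $\mathbb{E}[\bx_i^\top \bW \bx_j]$ for $i\neq j$ (which should vanish by independence of $\bx_i$ and $\bx_j$) to conclude the benign-collapse claim $\mathbb{E}[\bx_i^\top \bW \bx_i] \gg \mathbb{E}[\bx_i^\top \bW \bx_j]$ whenever $\operatorname{trace}(\bW)$ grows with $d_q$.
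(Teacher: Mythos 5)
Your proof is correct, and it takes a genuinely different route from the paper's. The paper diagonalizes $\bW = \bU \Sigma \bU^{\top}$ (which uses the symmetry assumption to get an orthogonal eigenbasis), sets $\bz = \bU^{\top}\bx_i$, and then evaluates $\mathbb{E}[\bz^{\top}\Sigma\bz] = \sum_j \sigma_j \,\mathbb{E}[z_j^2] = \sum_j \sigma_j = \operatorname{trace}(\bW)$, using that each $z_j = \bu_j^{\top}\bx_i$ has unit variance because $\bu_j$ is a unit vector. You instead apply the trace/cyclic trick $\mathbb{E}[\bx_i^{\top}\bW\bx_i] = \operatorname{trace}\!\left(\bW\,\mathbb{E}[\bx_i\bx_i^{\top}]\right) = \operatorname{trace}(\bW\bI) = \operatorname{trace}(\bW)$, which bypasses the spectral decomposition entirely. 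Your version is a bit more economical and, as you correctly note, it makes transparent that neither symmetry nor positive semi-definiteness of $\bW$ is actually needed for this particular identity --- those hypotheses matter only downstream, when the paper combines Propositions~\ref{pro:xi_xi} and \ref{pro:xi_xj} to argue that the diagonal of $\bP$ dominates (there $\operatorname{trace}(\bW) = \sum_j\sigma_j > 0$ relies on PSD). The paper's eigenbasis approach, on the other hand, is stylistically continuous with the proof of Proposition~\ref{pro:xi_xj} immediately after it, which also diagonalizes $\bW$; that parallelism is the only thing it buys. Both proofs are sound.
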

\begin{proof}
Let $\bW$ be a real symmetric positive semi-definite, thus it can be decomposed into $\bW = \bU \Sigma \bU^{\top}$. In this way, we have
\begin{align*}
\mathbb{E}\left[{\bx_i}^{\top} \bW \bx_i\right] &= \mathbb{E}\left[{\bx_i}^{\top} \bU \Sigma \bU^{\top} \bx_i\right] \\
&= \mathbb{E}\left[\bz^{\top} \Sigma \bz\right] \quad \text{(let $\bz = \bU^{\top} \bx_i$)} \\
&=\mathbb{E}\left[\sum_{i=1}^{d} \sigma_i z_i^{2}\right] \quad \text{($\Sigma$ is a diagonal matrix)}\\
&=\sum_{i=1}^{d_q} \sigma_i\times (0+1)  \quad \text{(by independence, mean 0)} \\
&= \sum_{i=1}^{d_q} \sigma_i  = \operatorname{trace}(\bW). 
\end{align*}
For a real symmetric positive semi-definite, all its singular values are larger or equal to 0. Thus, we know $\sum_{i=1}^{d_q} \sigma_i > 0$ considering $\bW$ is not a matrix of all zeros.
\end{proof}

\begin{proposition}[Expectation of ${\bx_i}^{\top} \bW \bx_j$ for $i \neq j$]
\label{pro:xi_xj}
    Let $\bW$ be a real symmetric positive semi-definite square matrix, and any $\bx_i$ is a high-dimension random vector, $\mathbb{E}_{i\neq j}\left[{\bx_i}^{\top} \bW \bx_j\right] = 0$.
\end{proposition}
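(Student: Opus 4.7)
The plan is to mirror the SVD-based computation used in Proposition~\ref{pro:xi_xi} and then exploit the independence between $\bx_i$ and $\bx_j$ that comes from their being (post-normalization) separate tokens with i.i.d.\ standard-normal entries. Because $\bW$ is real symmetric positive semi-definite, I can write $\bW = \bU \Sigma \bU^{\top}$ with $\bU$ orthogonal and $\Sigma$ diagonal with nonnegative entries $\sigma_1, \ldots, \sigma_d$. Setting $\bz = \bU^{\top}\bx_i$ and $\by = \bU^{\top}\bx_j$, the fact that an orthogonal transformation preserves the standard Gaussian law shows that $\bz$ and $\by$ are again i.i.d.\ standard-normal vectors, and they are independent of each other since $\bx_i$ and $\bx_j$ are.

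With that change of variables, $\bx_i^{\top} \bW \bx_j = \bz^{\top}\Sigma\by = \sum_{k=1}^{d}\sigma_k z_k y_k$, and therefore
\begin{equation*}
\mathbb{E}\!\left[\bx_i^{\top}\bW\bx_j\right] \;=\; \sum_{k=1}^{d} \sigma_k\, \mathbb{E}[z_k y_k] \;=\; \sum_{k=1}^{d} \sigma_k\, \mathbb{E}[z_k]\,\mathbb{E}[y_k] \;=\; 0,
\end{equation*}
where the second equality uses independence of $z_k$ and $y_k$, and the third uses $\mathbb{E}[z_k] = \mathbb{E}[y_k] = 0$. Combined with Proposition~\ref{pro:xi_xi}, this gives $\mathbb{E}[\bx_i^{\top}\bW\bx_i] = \operatorname{trace}(\bW)$ versus $\mathbb{E}[\bx_i^{\top}\bW\bx_j] = 0$ for $i \neq j$, which is exactly the gap that drives the benign entropy collapse discussed in the main text.

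The proof is essentially a one-line calculation, so there is no real technical obstacle; the only subtle point worth flagging is that the claim implicitly requires $\bx_i$ and $\bx_j$ to be independent (not merely each having i.i.d.\ standard-normal entries), and I would state this independence explicitly as part of the distributional assumption carried over from Proposition~\ref{pro:xi_xi}. Note that positive semi-definiteness of $\bW$ is not actually needed for this particular identity---mean-zero independence alone suffices---but I would keep the SVD route to stay parallel with the adjacent proposition and emphasize that the two results together bound the contrast between diagonal and off-diagonal entries of $\bP$.
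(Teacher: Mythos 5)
Your proof is correct and follows essentially the same route as the paper: eigendecompose $\bW = \bU\Sigma\bU^{\top}$, change variables to $\bz = \bU^{\top}\bx_i$ and $\bv = \bU^{\top}\bx_j$, and conclude by independence and zero mean. Your side remarks — that independence of $\bx_i$ and $\bx_j$ should be stated explicitly, and that positive semi-definiteness is not actually used here — are both accurate and worth keeping.
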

\begin{proof}
Let $\bW$ be a real symmetric positive semi-definite. Thus, it can be decomposed into $\bW = \bU \Sigma \bU^{\top}$. In this way, we have
\begin{align*}
\mathbb{E}_{i\neq j}\left[{\bx_i}^{\top} \bW \bx_j\right] &= \mathbb{E}_{i\neq j}\left[{\bx_i}^{\top} \bU \Sigma \bU^{\top} \bx_j\right] \\
&= \mathbb{E}\left[\bz^{\top} \Sigma \bv \right] \quad \text{(let $\bz = \bU^{\top} \bx_i$, and $\bv = \bU^{\top} \bx_j$)} \\
&=\mathbb{E}\left[\sum_{i=1}^{d_q} \sigma_{ij} z_i v_j \right] \quad \text{($\Sigma$ is a diagonal matrix. $\bz$ and $\bv$ are independent)}\\
&= 0. 
\end{align*}
\end{proof}

According to Proposition~\ref{pro:xi_xi} and Proposition~\ref{pro:xi_xj}, we can have that $\mathbb{E}\left[{\bx_i}^{\top} \bW \bx_i\right] > \mathbb{E}\left[{\bx_i}^{\top} \bW \bx_j\right]$ for any $i \neq j$. Considering that $\bW$  is usually a high-dimensional matrix and some of its singular values are significantly larger than 0. Thus, after the softmax operation, $\bA$ will always collapse to an identity matrix. In this way, the self-attention module degenerates into a linear projection module. The model fitting ability will decline, but model training will not crash. Our proof is based on matrix computations~\citep{golub2013matrix} and high-dimensional probability~\citep{vershynin2018high}.

\ 

\

\

\section{Proof of Malignant Entropy Collapse}
\label{appendix:malignant_entropy_collapse}
\begin{proof}

{\bf Step 1}. To prove the sparsity of \( \bA \), we must show that the number of non-zero elements in each column of \( \bA \) is small.

Now, let’s consider the properties of the matrix \( \bP = \bX^T \bW \bX \). Since \( \bX \) and \( \bW \) are low-rank matrices, the matrix \( \bP \) will also be low-rank. Specifically, the rank of \( \bP \) is bounded by the rank of \( \bX \) and \( \bW \), which is much smaller than the dimension. In particular, \( \bP \) has only a small number of significant singular values. This implies that the entries in \( \bP \) are concentrated in a lower-dimensional subspace.

When we apply the softmax function to the rows of \( \bP \), the function concentrates most of the probability mass on a few components of each column. This is because the softmax function is sharply peaked around the largest values in each row. The smaller values in each row contribute less to the sum in the denominator of the softmax function, and therefore, their corresponding entries in \( \bA \) will be small.

Thus, in each column of \( \bA \), only a small number of entries will be non-zero with high probability, and the rest will be close to zero. This establishes the sparsity of \( \bA \).

{\bf Step 2}. To prove the low-rankness of \( \bA \), we turn to proving that \( \bA \) is approximately low-rank.

As noted earlier, the matrix \( \bP = \bX^T \bW \bX \) is low-rank. Specifically, the rank of \( \bP \) is bounded by the ranks of \( \bX \) and \( \bW \), which are both small. Therefore, \( \bP \) has only a few dominant singular values. Assume that the softmax function does not significantly change the rank of the matrix. The rank of \( \bA \) is controlled by the rank of \( \bP \), as the softmax operation only introduces nonlinearities that do not increase the rank.

Thus, since \( \bP \) has a small number of dominant singular values, \( \bA \), formed by the softmax of \( \bP \), will also have a small number of significant singular values. This implies that \( \bA \) is approximately low-rank.

In conclusion, under the assumptions that \( \bX \) and \( \bW \) are low-rank matrices with sufficiently large singular values, the matrix \( \bA \) formed by applying the softmax function to \( \bP \) will exhibit the properties of both sparsity and low-rankness with high probability.

\end{proof}

\section{Proof of Weyl’s Inequality on Singular Values}
\label{appendix_weyl_inequality_proof}
Our derivation depends on~\cite{topics_in_matrix_analysis_horn1991topics, matrix_analysis_horn2012matrix}. Readers can refer to these material for more background information.

\begin{proof}
Before we prove Weyl's Inequality on singular values, let us review  Courant-Fischer min-max principle that is 
important for analyzing the singular values of matrix. 

\begin{theorem}[Courant-Fischer Min-max Principle for Singular Values]
Let $\bW \in \mathcal{R}^{m \times n}$ be a matrix where $m \geq n$. $\bW$ has ordered singular values 
$\sigma_1 \geq \sigma_2 \geq \cdots \geq \sigma_n \geq 0$. 
Then, for $i = 1, 2, \ldots, n$, we have
\[\sigma_i(\bW) = \max_{\substack{S \subset \mathcal{R}^n \\ \dim(S) = i}} \  \min_{\substack{\bx \in S \\ \|\bx\| = 1}} \|\bW \bx\| = \min_{\substack{S' \subset \mathcal{R}^n \\ \dim(S') = n-i+1}} \  \max_{\substack{\bx \in S' \\ \|\bx\| = 1}} \|\bW \bx\| \]
where the maximum is taken over all $i$-dimensional subspaces $S$ of $\mathcal{R}^n$, 
and the minimum is taken over all unit vectors $\bx$ in $S$.
\end{theorem}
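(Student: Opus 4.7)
The plan is to reduce the claim to a standard Courant–Fischer statement on the symmetric positive semidefinite matrix $\bW^{\top}\bW$. Introduce the SVD $\bW = \bU\Sigma\bV^{\top}$ with orthonormal right singular vectors $\bv_1,\ldots,\bv_n$, so that for any unit vector $\bx=\sum_{j=1}^{n} c_j \bv_j$ one has
\[
\|\bW\bx\|^{2} \;=\; \bx^{\top}\bW^{\top}\bW\bx \;=\; \sum_{j=1}^{n} \sigma_{j}^{2}\, c_{j}^{2}, \qquad \sum_{j=1}^{n} c_{j}^{2}=1.
\]
This quadratic expansion is the workhorse for both the max-min and the min-max formulas, because it makes the map $\bx\mapsto \|\bW\bx\|$ monotone in which singular directions $\bx$ is allowed to use.

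First I would prove the max-min identity. To show the upper bound $\sigma_i(\bW) \ge \max_{S}\min_{\bx\in S,\,\|\bx\|=1}\|\bW\bx\|$, pick any $i$-dimensional subspace $S\subset\mathcal{R}^n$ and consider the $(n-i+1)$-dimensional subspace $T_i := \operatorname{span}(\bv_i,\bv_{i+1},\ldots,\bv_n)$. Since $\dim S + \dim T_i = i + (n-i+1) = n+1 > n$, the intersection $S\cap T_i$ contains a unit vector $\bx^\star$; for such a vector the expansion above gives $\|\bW\bx^\star\|^{2} = \sum_{j\ge i}\sigma_j^2 c_j^2 \le \sigma_i^2$, so the inner minimum is at most $\sigma_i$. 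To achieve equality, choose the specific subspace $S^{\star} := \operatorname{span}(\bv_1,\ldots,\bv_i)$: for any unit $\bx\in S^{\star}$ the expansion yields $\|\bW\bx\|^{2}=\sum_{j\le i}\sigma_j^2 c_j^2 \ge \sigma_i^2$, so the inner minimum equals $\sigma_i$ (attained at $\bx=\bv_i$). Combining both bounds establishes the max-min formula.

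Next I would prove the min-max identity by a symmetric argument. For any $(n-i+1)$-dimensional subspace $S'$, the subspace $U_i := \operatorname{span}(\bv_1,\ldots,\bv_i)$ has dimension $i$, so again $\dim S' + \dim U_i = n+1$ forces a unit vector $\bx^\star\in S'\cap U_i$ whose squared norm satisfies $\|\bW\bx^\star\|^{2}=\sum_{j\le i}\sigma_j^2 c_j^2 \ge \sigma_i^2$, giving the lower bound $\max_{\bx\in S',\,\|\bx\|=1}\|\bW\bx\| \ge \sigma_i$. For the reverse inequality, pick $S'_{\star} := \operatorname{span}(\bv_i,\ldots,\bv_n)$ and bound the inner maximum by $\sigma_i$ using the expansion on this $(n-i+1)$-dimensional subspace.

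The main obstacle is the dimension-counting step: one must carefully justify that an $i$-dimensional subspace and an $(n-i+1)$-dimensional subspace of $\mathcal{R}^n$ necessarily have nontrivial intersection (by the formula $\dim(S\cap T)=\dim S+\dim T-\dim(S+T)\ge (n+1)-n=1$). Everything else is routine manipulation of the SVD expansion, but it is worth noting one subtlety: the argument as stated works for $\bW\in\mathcal{R}^{m\times n}$ with $m\ge n$ because $\bW^{\top}\bW\in\mathcal{R}^{n\times n}$ has exactly $n$ eigenvalues $\sigma_1^2\ge\cdots\ge\sigma_n^2\ge 0$, matching the range of indices in the theorem. No additional modifications are needed.
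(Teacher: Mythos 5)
Your proof is correct. Note that the paper itself does not prove this theorem: it only states it in Appendix~\ref{appendix_weyl_inequality_proof} as background material (deferring to Horn and Johnson) and then uses it as the key lemma in its proof of Weyl's inequality, so there is no in-paper argument to compare against. Your argument is the standard and complete one: reduce to the quadratic form $\|\bW\bx\|^{2}=\bx^{\top}\bW^{\top}\bW\bx=\sum_{j}\sigma_j^2 c_j^2$ via the SVD, use the dimension count $\dim(S\cap T)\ge \dim S+\dim T-n\ge 1$ to show that every candidate subspace is penalized by the ``wrong'' singular directions, and exhibit the spans $\operatorname{span}(\bv_1,\ldots,\bv_i)$ and $\operatorname{span}(\bv_i,\ldots,\bv_n)$ as extremizers. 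Both directions of both identities are handled, ties among singular values and zero singular values cause no trouble, and the restriction $m\ge n$ is used exactly where you say it is (so that $\bW^{\top}\bW$ has $n$ eigenvalues $\sigma_1^2\ge\cdots\ge\sigma_n^2\ge 0$). The only cosmetic quibble is that you label the inequality $\sigma_i(\bW)\ge\max_S\min_{\bx}\|\bW\bx\|$ an ``upper bound'' while writing it as a lower bound on $\sigma_i$; the logic itself is right.
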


Let $\bW_1 = \bU \boldsymbol{\Sigma}_{1} \bV^{\top}$ and $\bW_2 = \cU \boldsymbol{\Sigma}_{2} \cV^{\top}$ be singular value decompositions
of $\bW_1$ and $\bW_2$ with unitary matrix 
$\bV = [\bv_1, \ldots, \bv_n], 
\cV = [\fv_1, \ldots, \fv_n]$ where $\bv_i, \fv_i \in \mathcal{R}^n$ and unitary matrix
$\bU = [\bu_1, \ldots, \bu_m], 
\cU = [\fu_1, \ldots, \fu_m]$, where $\bu_j,\fu_j \in  \mathcal{R}^m$. 

Let $i$ and $j$ be positive integers with
$1 \leq i, j \leq n$ and $i + j \leq n + 1$.
Let $S_1 \equiv \text{Span} \{\bv_i,\ldots,
\bv_n\}$ and $S_2 \equiv \text{Span} \{\fv_j,\ldots, \fv_n\}$; notice that $\dim (S_1) = n - i + 1$ and $\dim (S_2) =
n - j + 1$. Let $k \equiv \dim(S_1 \cap S_2)$, then we have
\begin{align*}
\dim(S_1 \cap S_2) = \dim (S_1) + \dim (S_2) - \dim(S_1 + S_2) & = (n - i + 1) + (n - j + 1) - \dim(S_1 + S_2) \\
& \geq (n - i + 1) + (n - j + 1) - n = n - (i + j - 1) + 1 \geq 1.
\end{align*}

Because of the bounds assumed for $i$ and $j$. Thus, the subspace $S_1 \cap S_2$ has
positive dimension $k$, $n - k + 1 \leq i + j - 1$, and we have
\begin{align*}
\sigma_{i+j-1}(\bW_1 + \bW_2) & \leq \sigma_{n-k+1}(\bW_1 + \bW_2) \\
&= \min_{\substack{S \subset \mathcal{R}^n \\ \dim(S)=k}} \max_{\substack{\bx\in S \\ \|\bx\|_2=1}} \|(\bW_1 + \bW_2)\bx\|_2 \\[2ex]
&\leq \max_{\substack{\bx\in S_1 \cap S_2 \\ \|\bx\|_2=1}} \|(\bW_1 + \bW_2) \bx\|_2 \\[2ex]
&\leq \max_{\substack{\bx \in S_1 \cap S_2 \\ \|\bx\|_2=1}} \| \bW_1 \bx\|_2 + \max_{\substack{\bx\in S_1 \cap S_2 \\ \|\bx\|_2=1}} \|\bW_2 \bx\|_2 \\[2ex]
&\leq \max_{\substack{\bx \in S_1 \\ \|\bx\|_2=1}} \|\bW_1\bx\|_2 + \max_{\substack{x\in S_2 \\ \|\bx\|_2=1}} \|\bW_2 \bx\|_2 = \sigma_i(\bW_1) + \sigma_j(\bW_2).
\end{align*}
As a special case, the second part of the theorem follows directly from the general result of part (a). Specifically, for $i=j=1$, we have:
\begin{align*}
\sigma_1(\bW_1 + \bW_2) \leq \sigma_{1}(\bW_1) + \sigma_{1}(\bW_2).
\end{align*}
This completes the proof.
\end{proof}

\ 

\ 

\newpage

\section{Related Works}
\label{sec:related_works}
\textbf{Training Dynamics of Transformer.} Previous works have delved into understanding the training dynamics of Transformers from two different perspectives: a high-level perspective and a low-level perspective. From a high-level perspective, Scan\&Snap~\citep{scanandsnap_tian2023scan} unveiled complex phenomena, particularly in single-layer architectures, relating to frequency and discriminative bias. These studies linked sparse attention patterns to token co-occurrence frequencies and observed two-stage behaviors in attention logits.
JoMA~\citep{joma_tian2023joma} further improved upon previous models by incorporating residual connections and MLP nonlinearity, analyzing joint training of MLP and self-attention layers, and offering qualitative explanations for multilayer Transformer dynamics. From a low-level perspective, two critical challenges in Transformer training have been identified: rank collapse~\citep{rank_collapse_dong2021attention, rank_collapse_noci2022signal}, where attention output converges to a rank 1 matrix, potentially causing vanishing gradients; and entropy collapse~\citep{stabilizing_transformer_zhai2023stabilizing}, which denotes pathologically low attention entropy, corresponding to highly concentrated attention scores. \textit{In this work, we analyze and prove two different entropy collapse modes and identify the key reason for model failure is spectral energy
concentration. Finally, we introduce a simple but effective solution to address this problem.}

\textbf{Training Stability of Transformer.}  
ReZero~\citep{rezero_bachlechner2021rezero} introduces a simple yet effective mechanism for improving training stability. The key innovation lies in initializing residual connections to zero, which allows networks to learn identity mappings more easily. 
Admin~\citep{admin_liu2020understanding} introduces a new network initialization strategy tailored for Transformers to make the network train stable.
DeepNorm~\citep{deepnorm_wang2022deepnet} extends the concept of normalization to accommodate increasingly deeper networks. By dynamically adjusting normalization parameters, DeepNorm ensures stability even as network depth increases. 
LipsFormer~\cite{lipsformer_qilipsformer} addresses the specific challenge of stability in transformer networks. By introducing a Lipschitz continuity constraint, Lipsformer effectively mitigates the issue of exploding gradients - a common problem in deep transformer architectures. This approach ensures that the network's output changes smoothly with respect to its input, promoting overall stability. ReZero, Admin, and DeepNorm can all be considered as an approach to control the Lipschitz constant of the network in the initial stage.
\textit{In this work, by revisiting the training dynamics of Transformer, we can achieve a stable training only by modifying the optimizer instead of using learning rate warmup or changing the network structures as LipsFormer~\citep{lipsformer_qilipsformer} and QKNorm~\cite{qk_norm_henry2020query, scaling22b_dehghani2023scaling}.}

\textbf{Learning Rate Schedule.} Warmup~\citep{warmup_loshchilov2016sgdr} has emerged as a must-have technique for ensuring a stable network training, especially in the initial phases of the optimization process. This method involves gradually increasing the learning rate from a small value to the desired initial learning rate over a certain number of training steps or epochs. 
The cosine learning rate scheduler~\citep{warmup_loshchilov2016sgdr} has gained popularity due to its smooth annealing properties. This schedule decreases the learning rate following a cosine curve, starting from an initial value and decaying to a minimum value over a set number of epochs or iterations. 
Cyclic learning rates~\citep{cyclical_smith2017cyclical} involve systematically varying the learning rate between boundary values. The learning rate oscillates between a lower and upper bound, either linearly or following other patterns (e.g., triangular, cosine). 
The above-mentioned learning rate schedules require specification of a stopping time step $T$, \citet{road_less_defazio2024road} introduces a Schedule-Free approach that avoids the need for this stopping time by eschewing the use of schedules entirely.

Compared to the up-mentioned works, the core novel contributions of our work lie on follows. 
\begin{enumerate}[leftmargin=*]
    \item  We present a theoretical analysis for Transformer training and point out two entropy collapse modes, \ie the benign collapse and the malignant collapse. 

    \item We reveal that \textit{ \textit{spectral energy concentration} (SEC) of $\ {\bW_q}^{\top} \bW_k$} is the main reason of model crash.

    \item We introduce AdamW$^2$, a new optimization strategy motivated by Weyl's Inequality.
\end{enumerate}

We also observe there are two works~\citep{kosson2023rotational, kosson2024analyzing} discussing the needs of learning rate warmup  by explicitly controlling the angular
updates via Rotational Optimizer Variants and by limiting the Frobenius norm of the update relative to that of the weights. They provided  some different perspectives on the necessity
 of the learning rate warmup.

\newpage

\section{Simulation of Three Attention Modes}
\label{appendix:three_attention_modes}
We provide a simple simulation code to simulate three attention modes, but it is important to note that the real picture is more complicated. In real case, in the benign attention entropy mode, ${\bW_q}^{\top} {\bW}$ is a non-symmetric positive quasi-definite square matrix instead of a symmetric positive definite matrix in our simulation. The code is just to demonstrate the core ideas behind three attention modes.

\ 

\renewcommand{\lstlistingname}{Code}
\label{cpde:1}
\begin{lstlisting}[language=Python, caption=Simulation of Three Attention Modes.]
import torch
import torch.nn
import matplotlib.pyplot as plt
import numpy as np


#Randomly generate data and weight matrices
d_q, d, num_tokens= 64, 768, 197
Wq = torch.randn(d_q, d)
Wk = torch.randn(d_q, d)
X = torch.randn(d, num_tokens)
W = torch.mm(Wq.T, Wk)


# Normal attention mode
W1 = W
P = torch.mm(torch.mm(X.T, W1), X)
attn_map1 = P.softmax(dim=1)


# Malignant attention entropy collapse mode
u,s,v = torch.svd(W)
s[0:3] = torch.tensor([3., 2., 1.])*s[0:3] 
s[3:]  = 0.0
W2 = torch.mm(torch.mm(u, torch.diag(s)), v.T)
P = torch.mm(torch.mm(X.T, W2), X)
attn_map2 = P.softmax(dim=1)


# Benign attention entropy collapse mode
u,s,v = torch.svd(W)
W3 = torch.mm(torch.mm(u, torch.diag(s)), u.T)
P = torch.mm(torch.mm(X.T, W3), X)
attn_map3 = P.softmax(dim=1)


# Plot figures
fig, axs = plt.subplots(1, 3, figsize=(15, 5))
axs[0].imshow(attn_map1.detach().numpy())
axs[1].imshow(attn_map2.detach().numpy())
axs[2].imshow(attn_map3.detach().numpy())
plt.show()
\end{lstlisting}

\ 

\newpage

\section{Attention Map Visualization of GPT}
\label{appendix:visualization_gpt}
Figure~\ref{fig:gpt_attention_map_gif} visualizes the dynamic process of attention map as the number of training steps increases for a successful and unsuccessful GPT-Small model. It should be noted that the GPT model uses a lower triangular attention mask.

\begin{figure}[h]
\centering
\begin{subfigure}{0.30\textwidth}
      \animategraphics[poster=last, width=1\textwidth]{6}{final_figures/nanogpt/baseline/image}{1}{21}
      \subcaption*{(a) \emph{\footnotesize Block 11 (successful).}}     
\end{subfigure}
\begin{subfigure}{0.30\textwidth}
      \animategraphics[poster=last, width=1\textwidth]{6}{final_figures/nanogpt/collapse/image}{1}{21}
      \subcaption*{(b) \emph{\footnotesize Block 11 (unsuccessful).}}
\end{subfigure}
\caption{Visualization of the dynamic process of attention map as the number of training steps increases for a successful and unsuccessful GPT-Small model.
Attention map gradually becomes sparse and low-rank along with the training process in a failure case. \emph{Please click the images to play the flash. Best viewed with Acrobat Reader.}}
\label{fig:gpt_attention_map_gif}
\end{figure}

In Figure~\ref{fig:gpt_attention_map_gif}, the attention values in a successful case distribute to different position, but the attention values in a unsuccessful case will only concentrate into several directions.

\ 

\

\section{More Training Dynamics of ViT and GPT}
\label{sec:appendix_vit_gpt}

Figure~\ref{fig:why_vit_succusss} visualizes a successful ViT training process. Compared with Figure~\ref{fig:why_vit_fail}, we find several significant differences as follows. 
\begin{itemize}[leftmargin=*]
    \item In a successful ViT training process, the value of $\sigma_1({\bW_q}^{\top}{\bW_k})$ increases to 16,000, then starts to oscillate smoothly. But for an unsuccessful training, the value suddenly increases to a very large value, around 300,000, it triggers the model crash,
    \item  The $\gamma_1$ and $\beta_1$ in a successful ViT training process are very smooth, but they change a lot in an unsuccessful case,
    \item  The fast increase of $\sigma_1({\bW_q}^{\top}{\bW_k})$ is accompanied by a fast increase of $\bW_q$ and  $\bW_k$.
\end{itemize}

We can observe similar phenomenon in Figure~\ref{fig:why_gpt_succusss} and Figure~\ref{fig:why_gpt_fail}. In a successful GPT training process, the value of $\sigma_1({\bW_q}^{\top}{\bW_k})$ increases to 60, then starts to oscillate smoothly. But for an unsuccessful GPT training, the value increases to 20,000. The difference between the sclae of value between GPT and ViT may be due to the density and sparsity of the supervision signal. In GPT, each token will contribute a gradient, but in ViT, only one class label in an image provides a supervision information.

\begin{figure}[htbp]

	\centering
	\begin{subfigure}{0.33\linewidth}
		\centering
		\includegraphics[width=4.5cm,height=3.8cm]{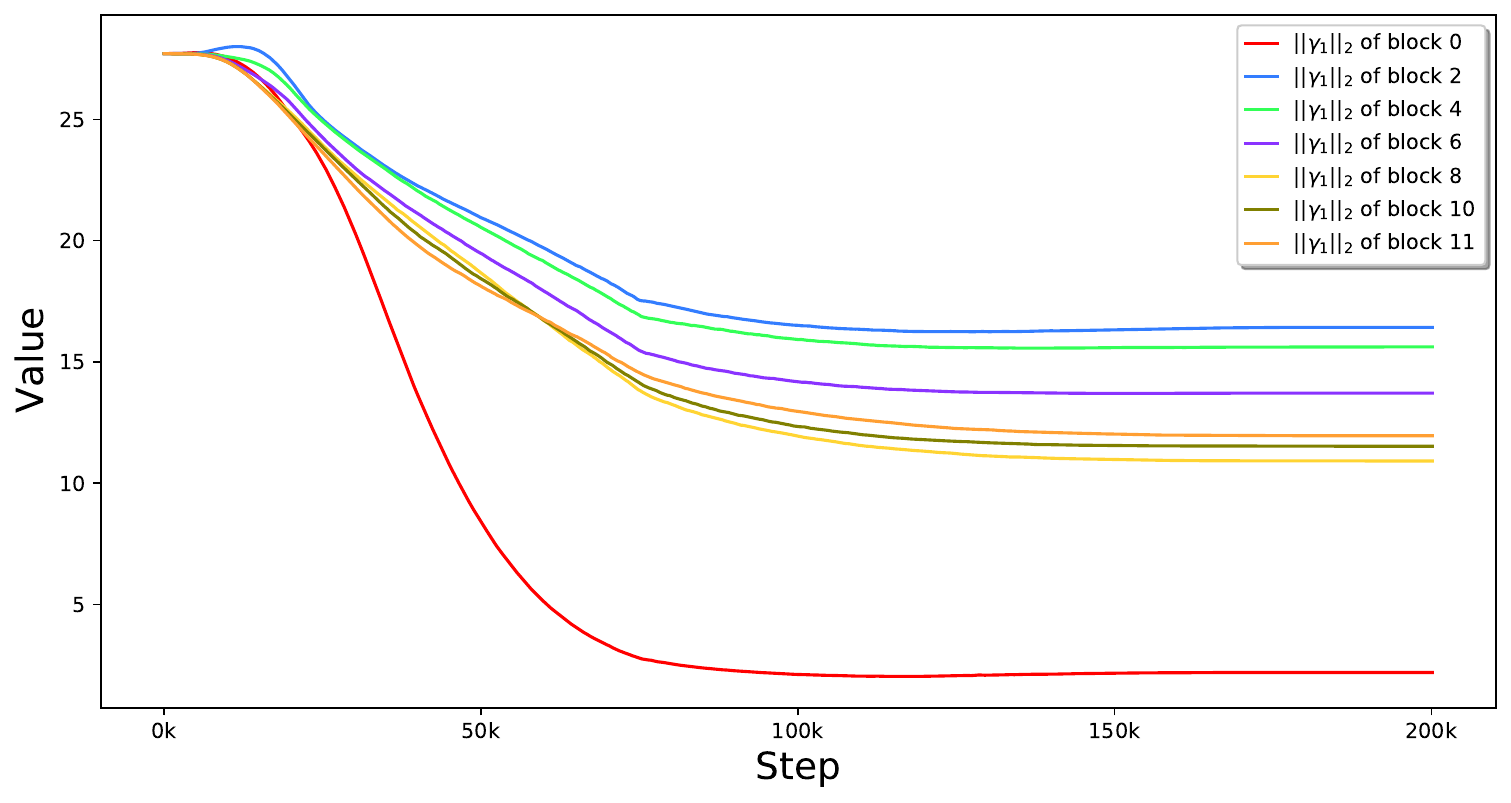}
		\caption*{(a) $\|\boldsymbol{\gamma_1}\|_2$}
		
	\end{subfigure}
	\begin{subfigure}{0.33\linewidth}
		\centering
		\includegraphics[width=4.5cm,height=3.8cm]{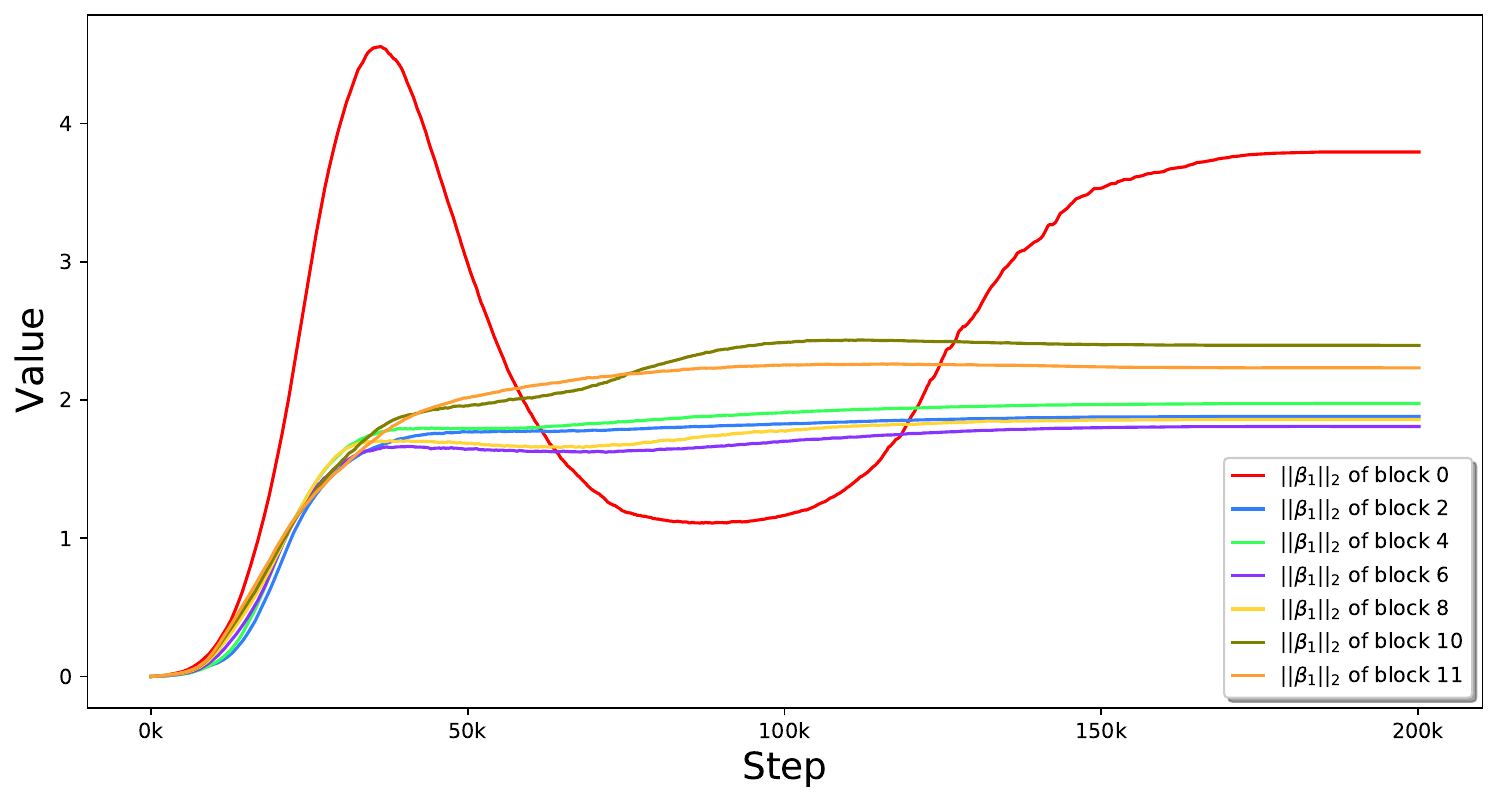}
		\caption*{(b) $\|\boldsymbol{\beta_1}\|_2$}
		
	\end{subfigure}
	\begin{subfigure}{0.33\linewidth}
		\centering
		\includegraphics[width=4.5cm,height=3.8cm]{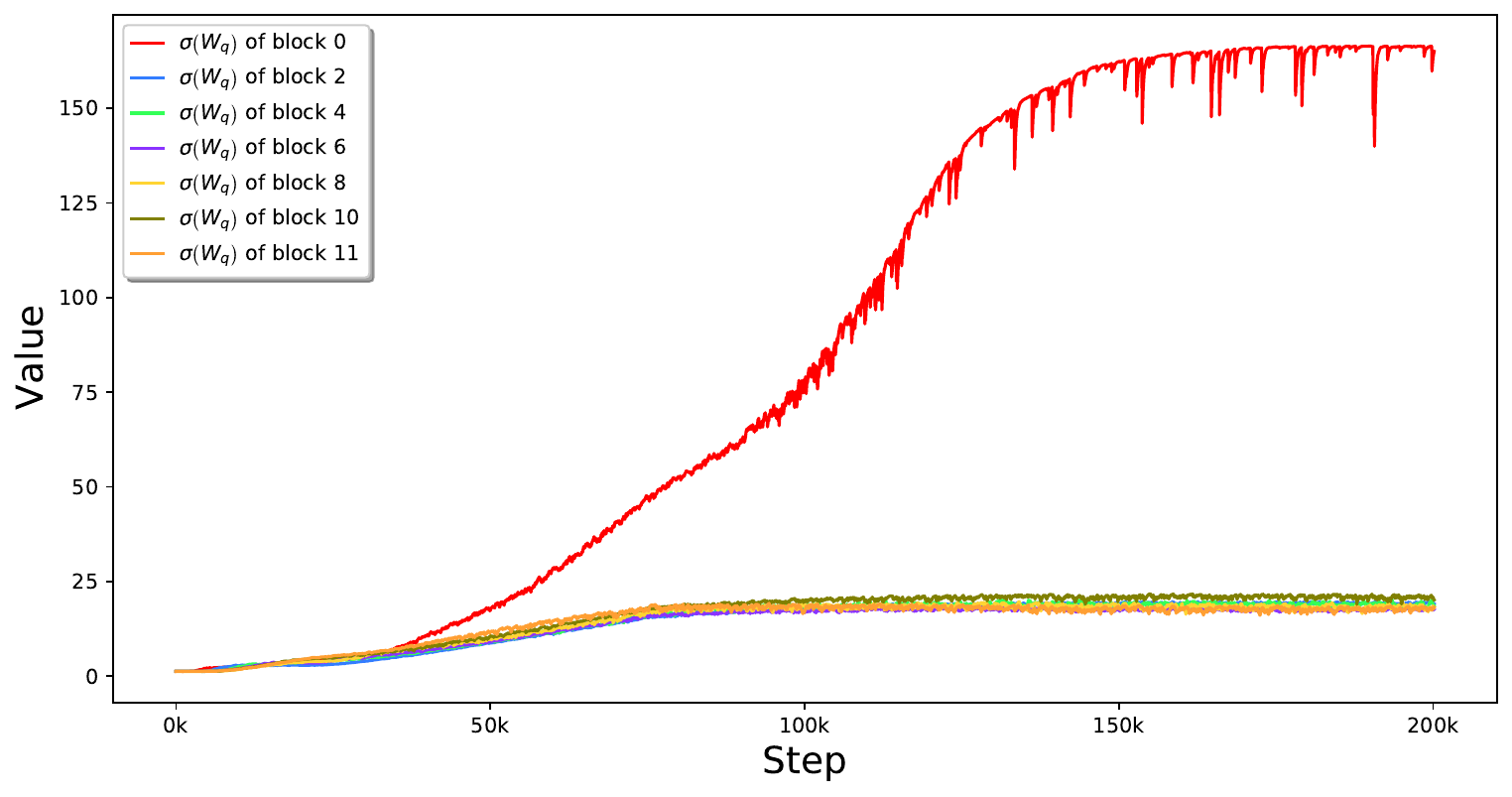}
		\caption*{(c) $\sigma_1\left({\bW_q}\right)$}
		
	\end{subfigure}

        \begin{subfigure}{0.33\linewidth}
		\centering
		\includegraphics[width=4.5cm,height=3.8cm]{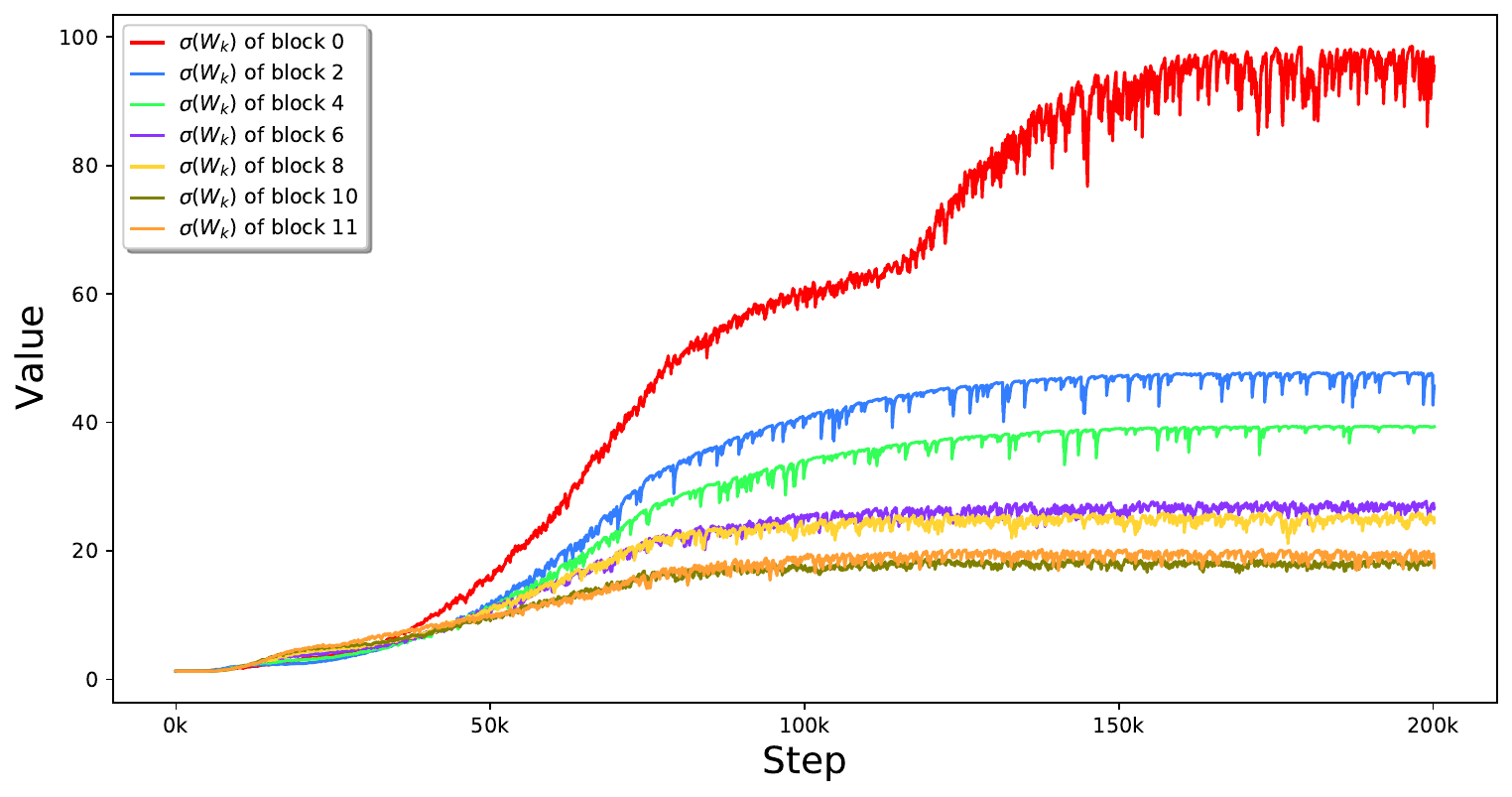}
		\caption*{(d) $\sigma_1\left({\bW_k}\right)$}
		
	\end{subfigure}
	\begin{subfigure}{0.33\linewidth}
		\centering
		\includegraphics[width=4.5cm,height=3.8cm]{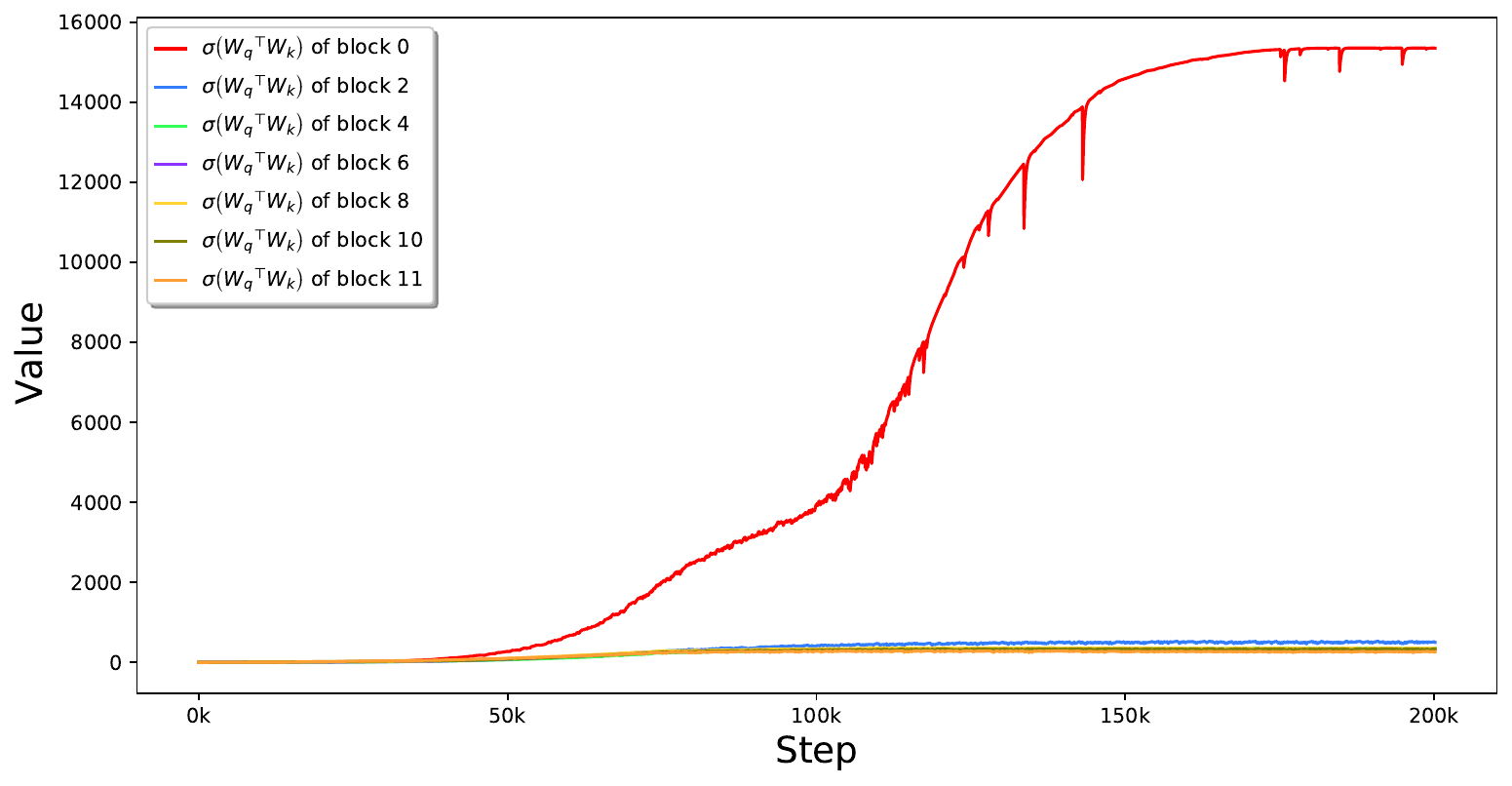}
		\caption*{(e) $\sigma_1\left({{\bW_q}^{\top}{\bW_k}}\right)$}
		\label{fig:vit_sub_success_wqwk}
	\end{subfigure}
	\begin{subfigure}{0.33\linewidth}
		\centering
		\includegraphics[width=4.5cm,height=3.8cm]{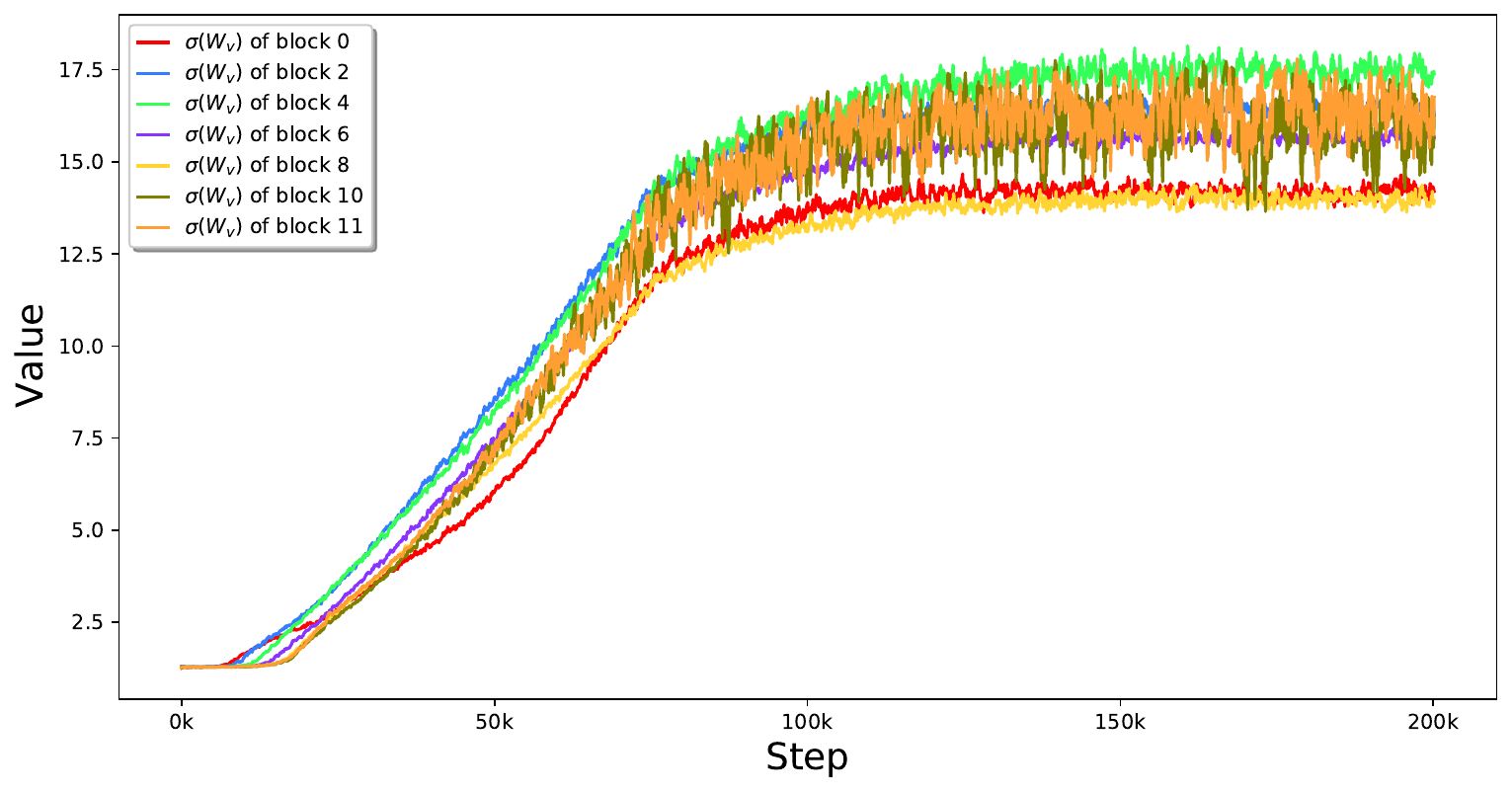}
		\caption*{(f) $\sigma_1\left({\bW_v}\right)$}
		
	\end{subfigure}

        \begin{subfigure}{0.33\linewidth}
		\centering
		\includegraphics[width=4.5cm,height=3.8cm]{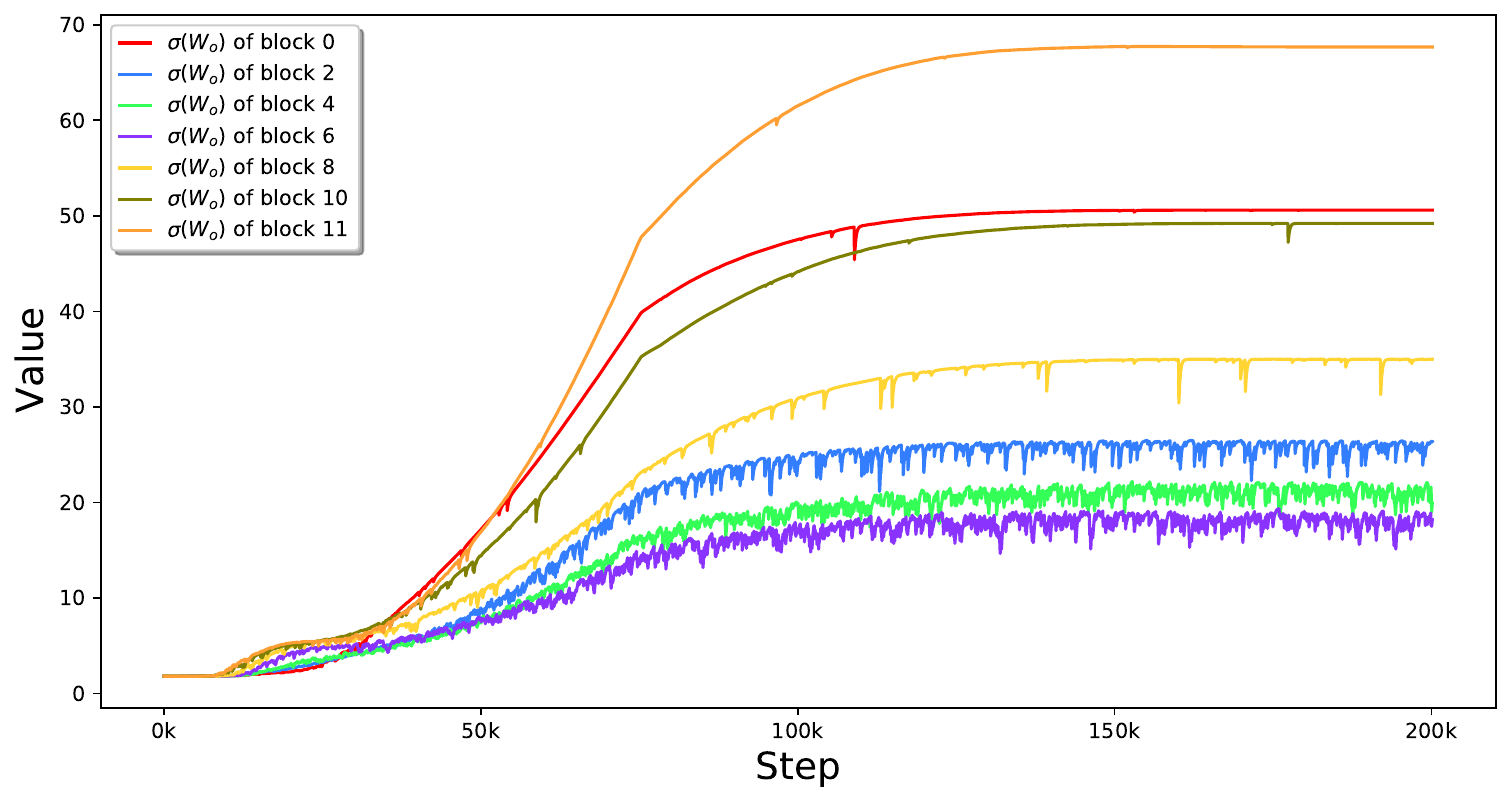}
		\caption*{(g) $\sigma_1\left({\bW_o}\right)$}
		
	\end{subfigure}
	\begin{subfigure}{0.33\linewidth}
		\centering
		\includegraphics[width=4.5cm,height=3.8cm]{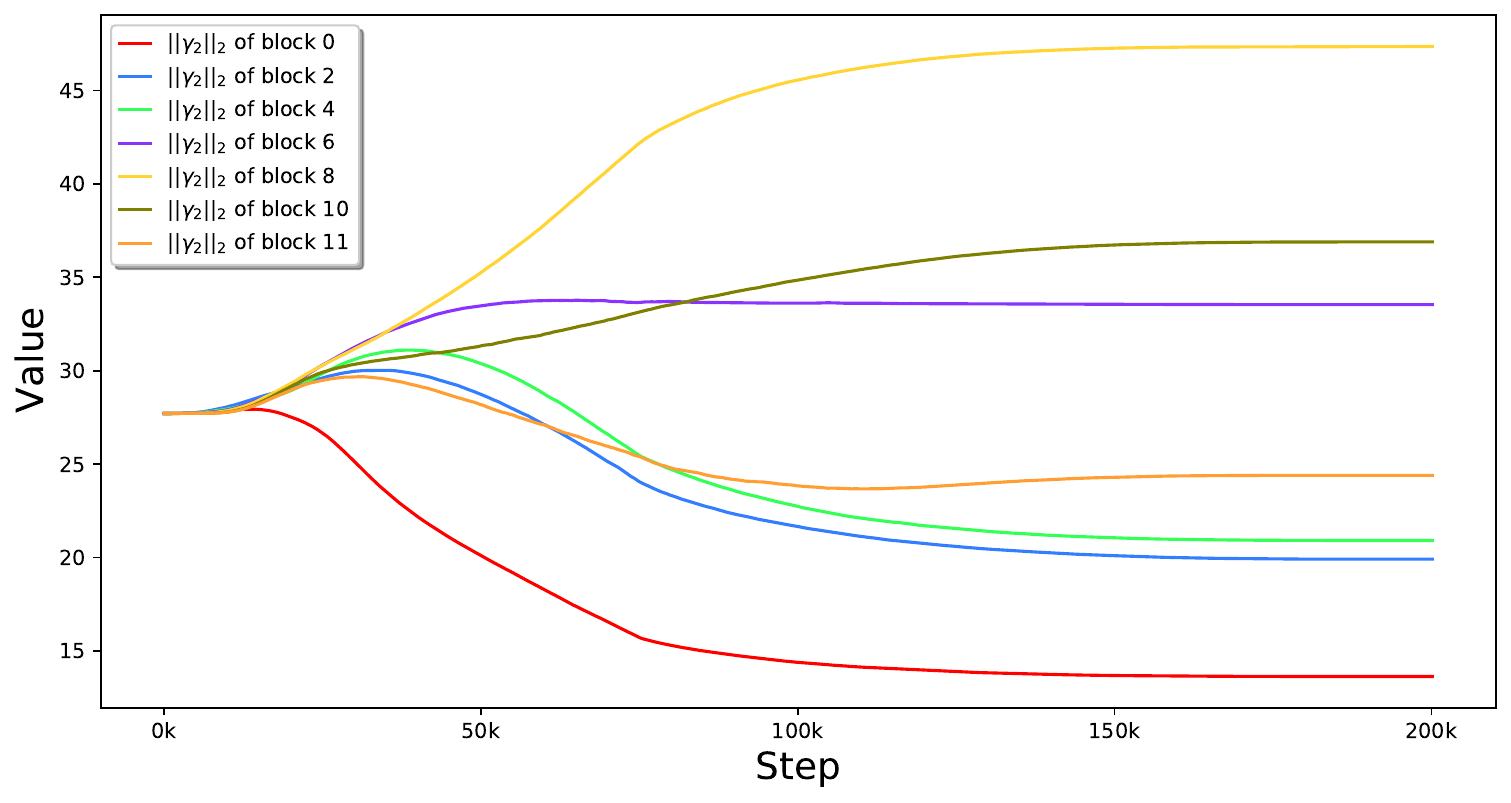}
		\caption*{(h) $\|\boldsymbol{\gamma_2}\|_2$}
		
	\end{subfigure}
	\begin{subfigure}{0.33\linewidth}
		\centering
		\includegraphics[width=4.5cm,height=3.8cm]{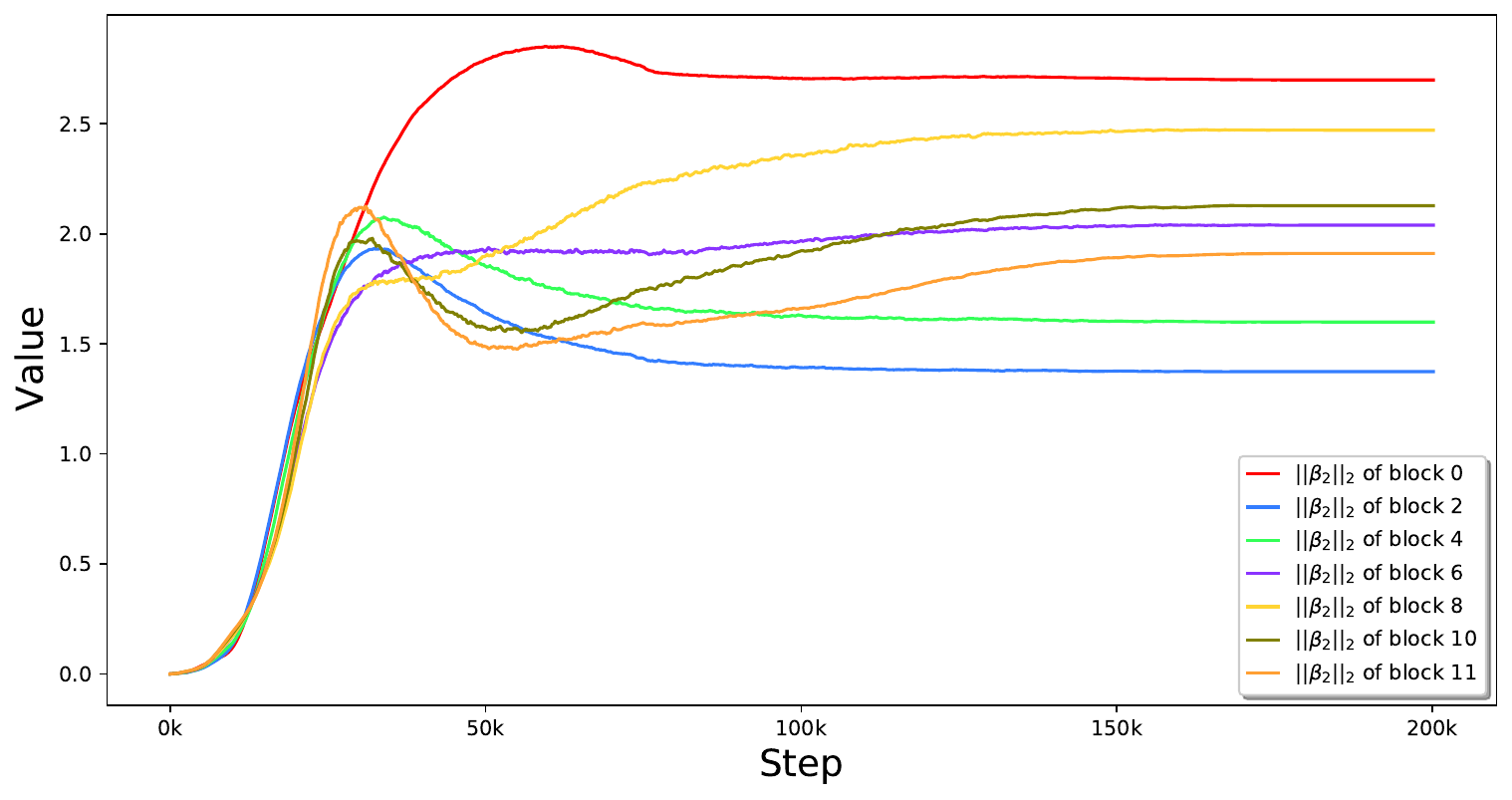}
		\caption*{(i) $\|\boldsymbol{\beta_2}\|_2$}
		
	\end{subfigure}
	
	\begin{subfigure}{0.33\linewidth}
		\centering
		\includegraphics[width=4.5cm,height=3.8cm]{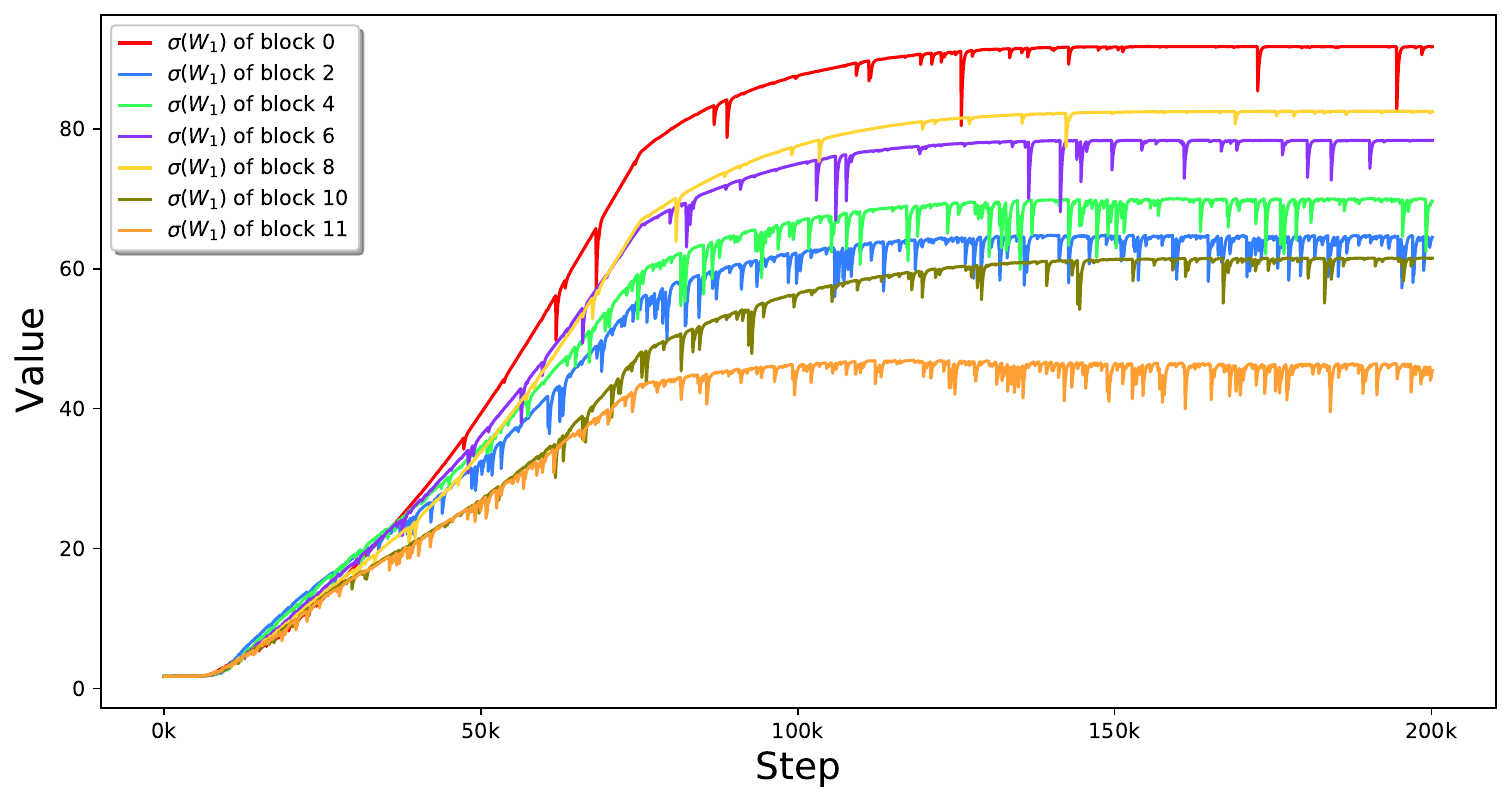}
		\caption*{(j) $\sigma_1\left({\bW_1}\right)$}
		
	\end{subfigure}
	\begin{subfigure}{0.33\linewidth}
		\centering
            \includegraphics[width=4.5cm,height=3.8cm]{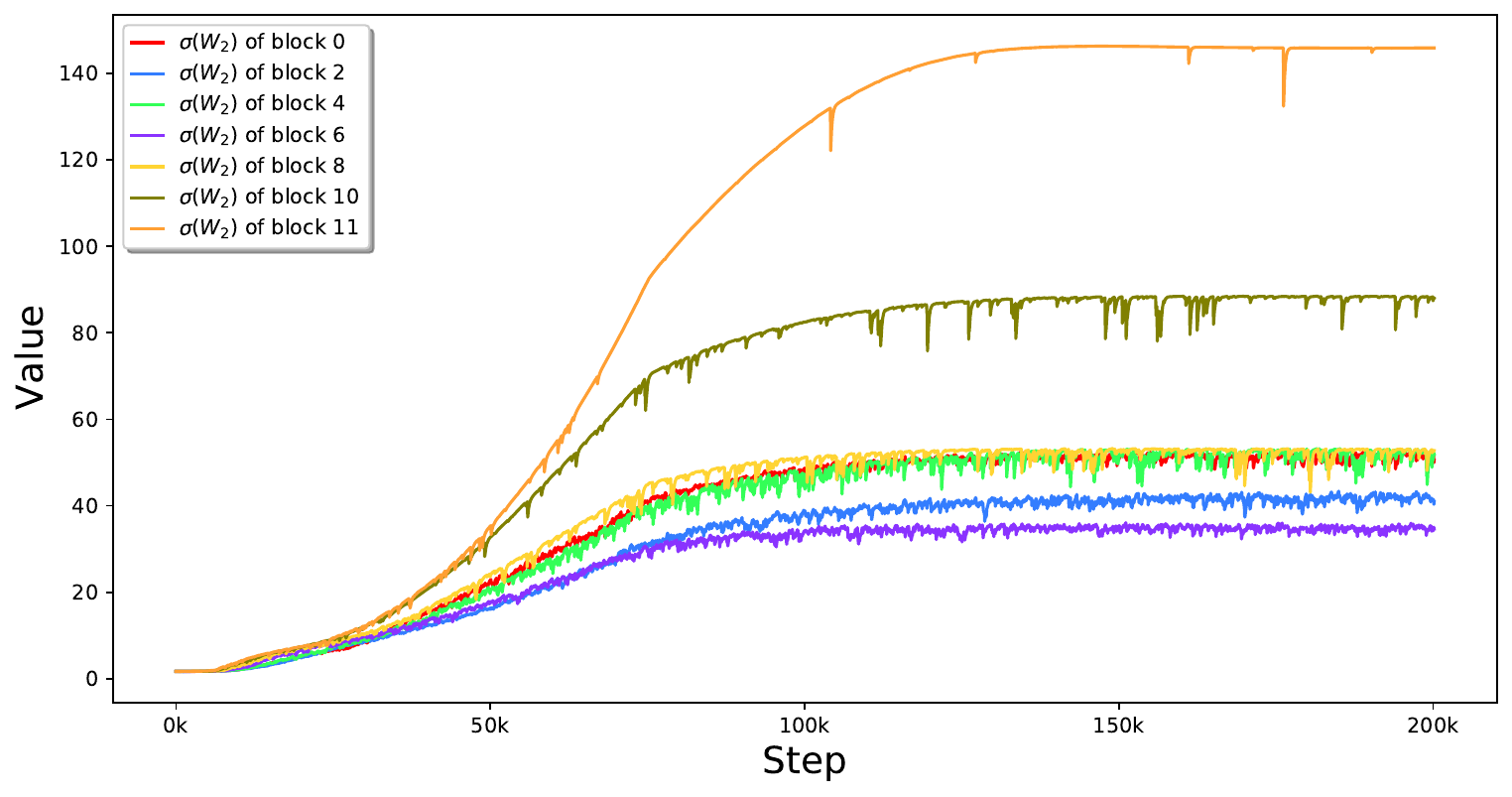}
		\caption*{(k) $\sigma_1\left({\bW_2}\right)$}
		
	\end{subfigure}
        \begin{subfigure}{0.33\linewidth}
		\centering
            \includegraphics[width=4.5cm,height=3.8cm]{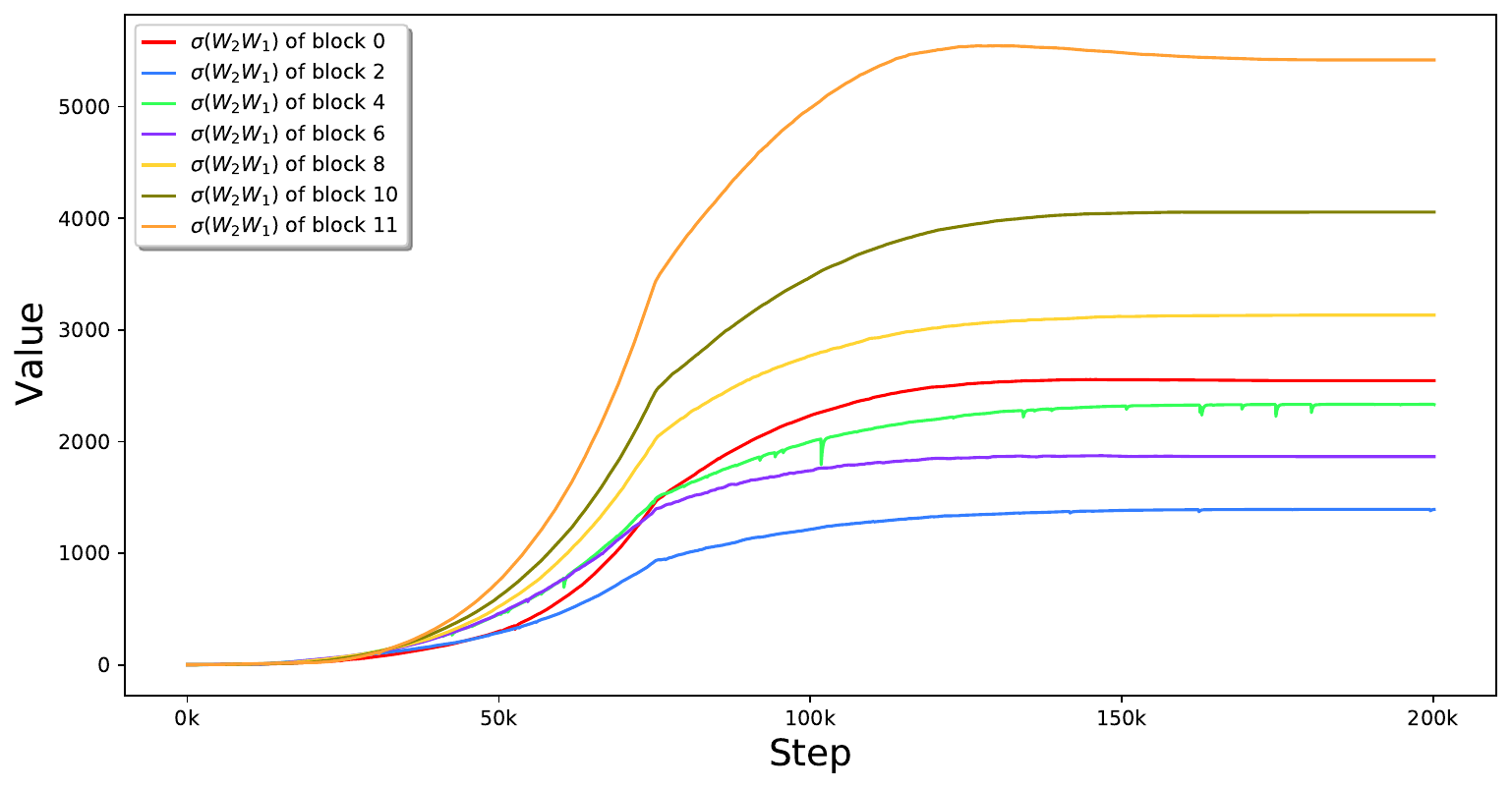}
  
		\caption*{(l) $\sigma_1\left({\bW_2\bW_1}\right)$}
		
	\end{subfigure}
        \begin{subfigure}{0.33\linewidth}
		\centering
            \includegraphics[width=4.5cm,height=3.8cm]{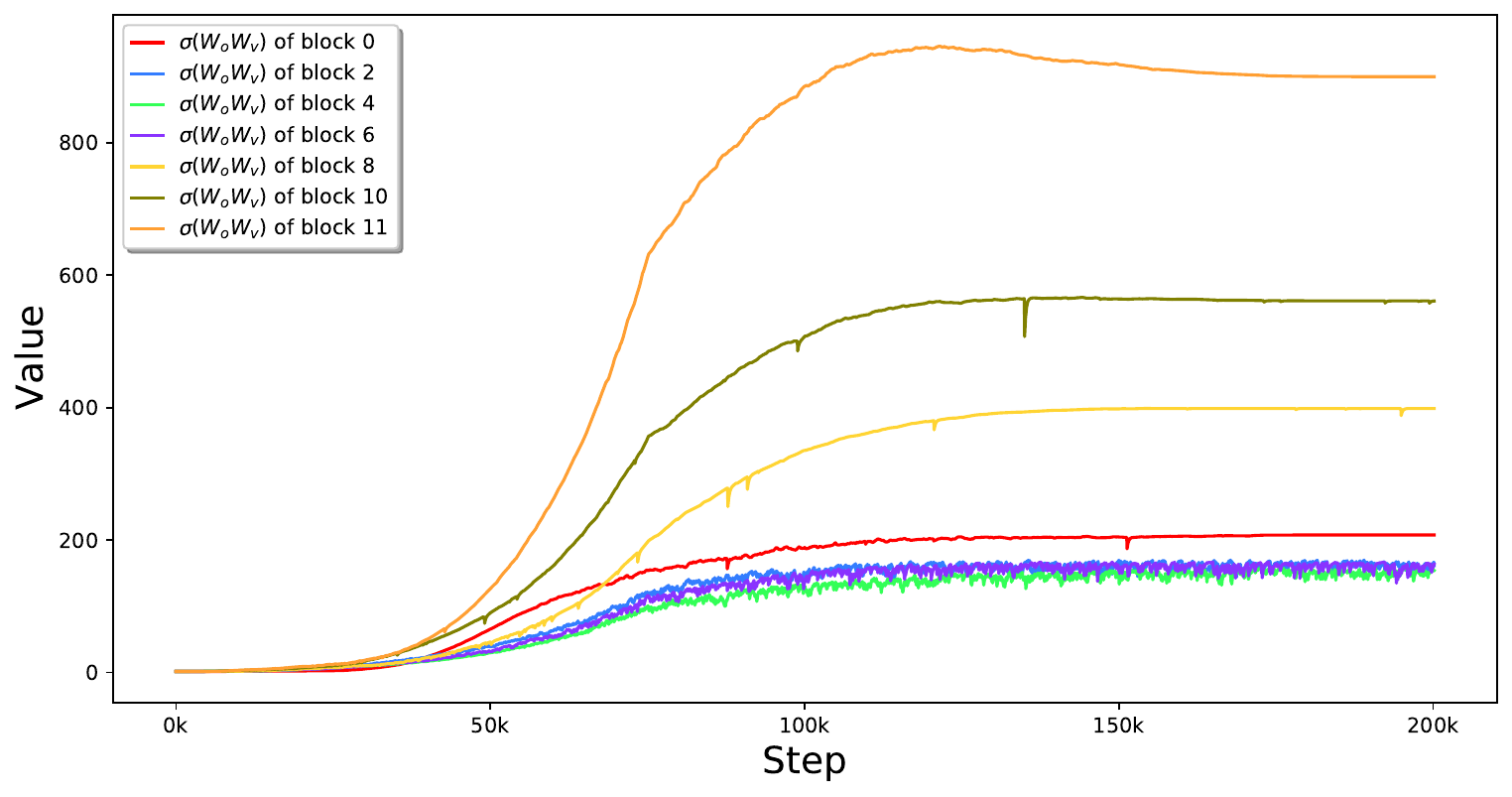}
		\caption*{(m) $\sigma_1\left({\bW_o\bW_v}\right)$}
		
	\end{subfigure}
        \begin{subfigure}{0.33\linewidth}
		\centering
            \includegraphics[width=4.5cm,height=3.8cm]{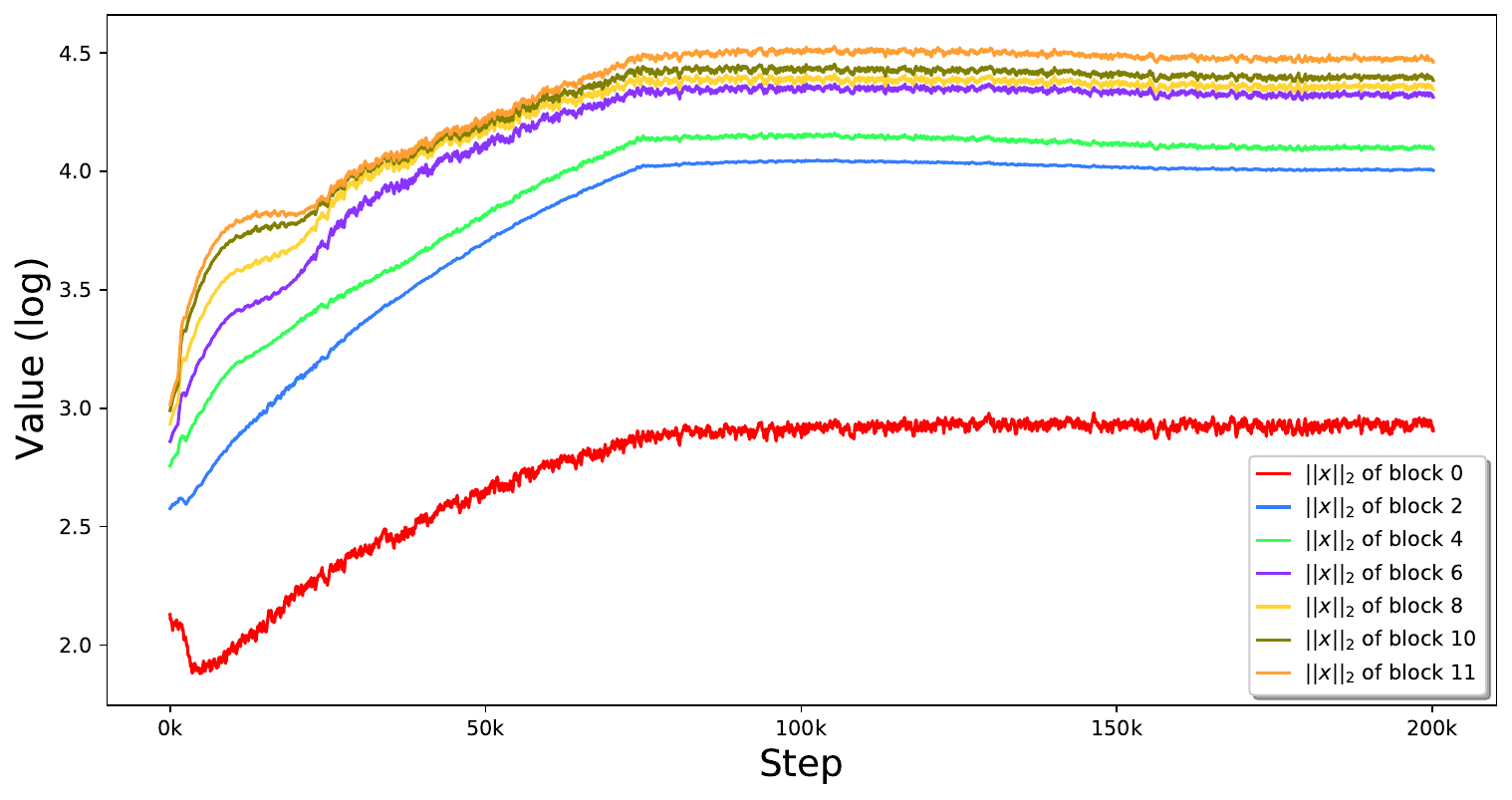}
		\caption*{(n) $\left\|{\bx}\right\|_2$}
		
	\end{subfigure}
        \begin{subfigure}{0.33\linewidth}
		\centering
            \includegraphics[width=4.5cm,height=3.8cm]{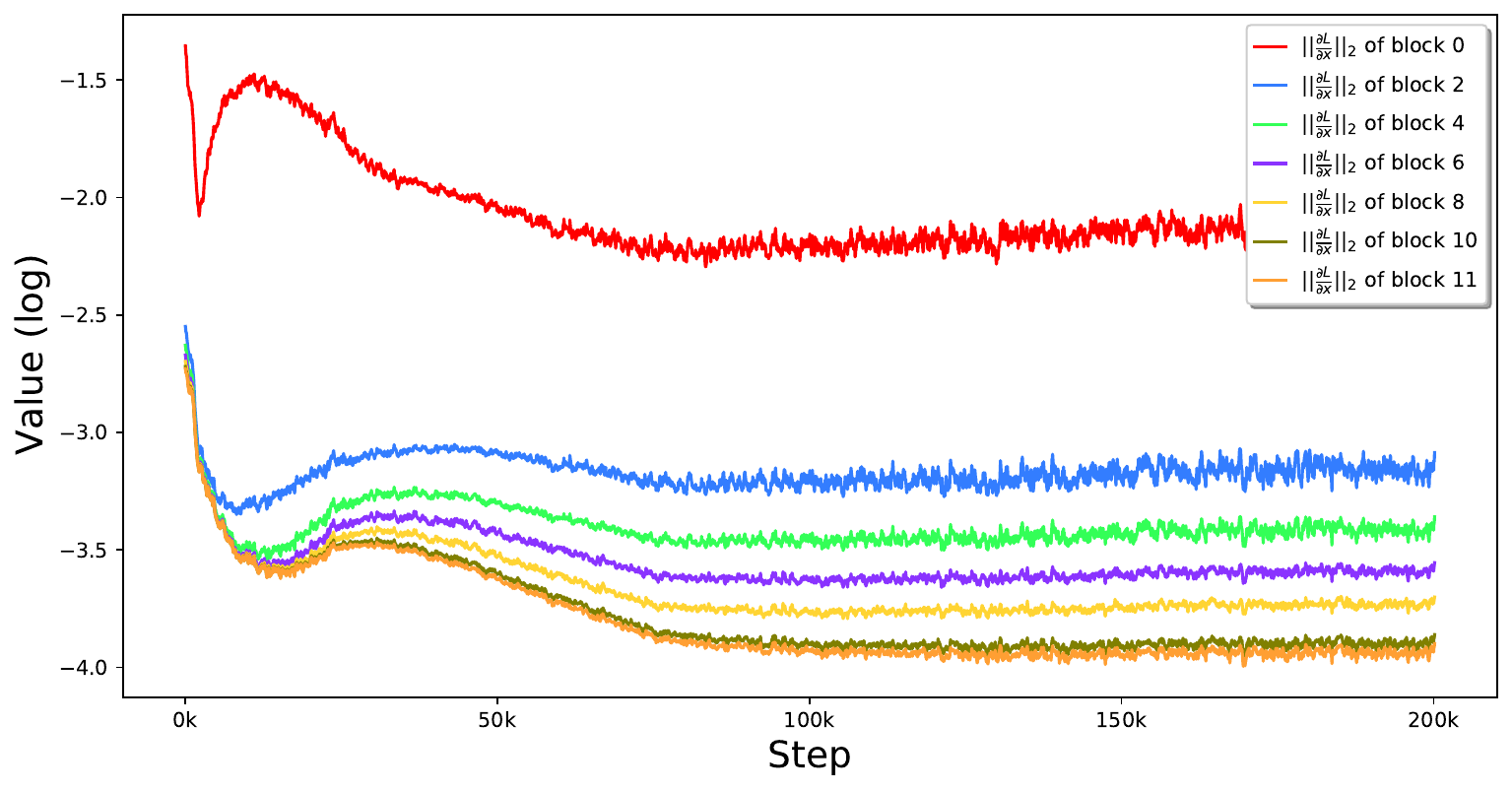}
            \caption*{(o) $\|{\frac{\partial L}{\partial \bx}}\|_2$}
		
	\end{subfigure}

 \caption{Training dynamics of a successful ViT training.} 
 \label{fig:why_vit_succusss}
\end{figure}

\begin{figure}[htbp]
	\centering
	\begin{subfigure}{0.33\linewidth}
		\centering
		\includegraphics[width=4.5cm,height=3.8cm]{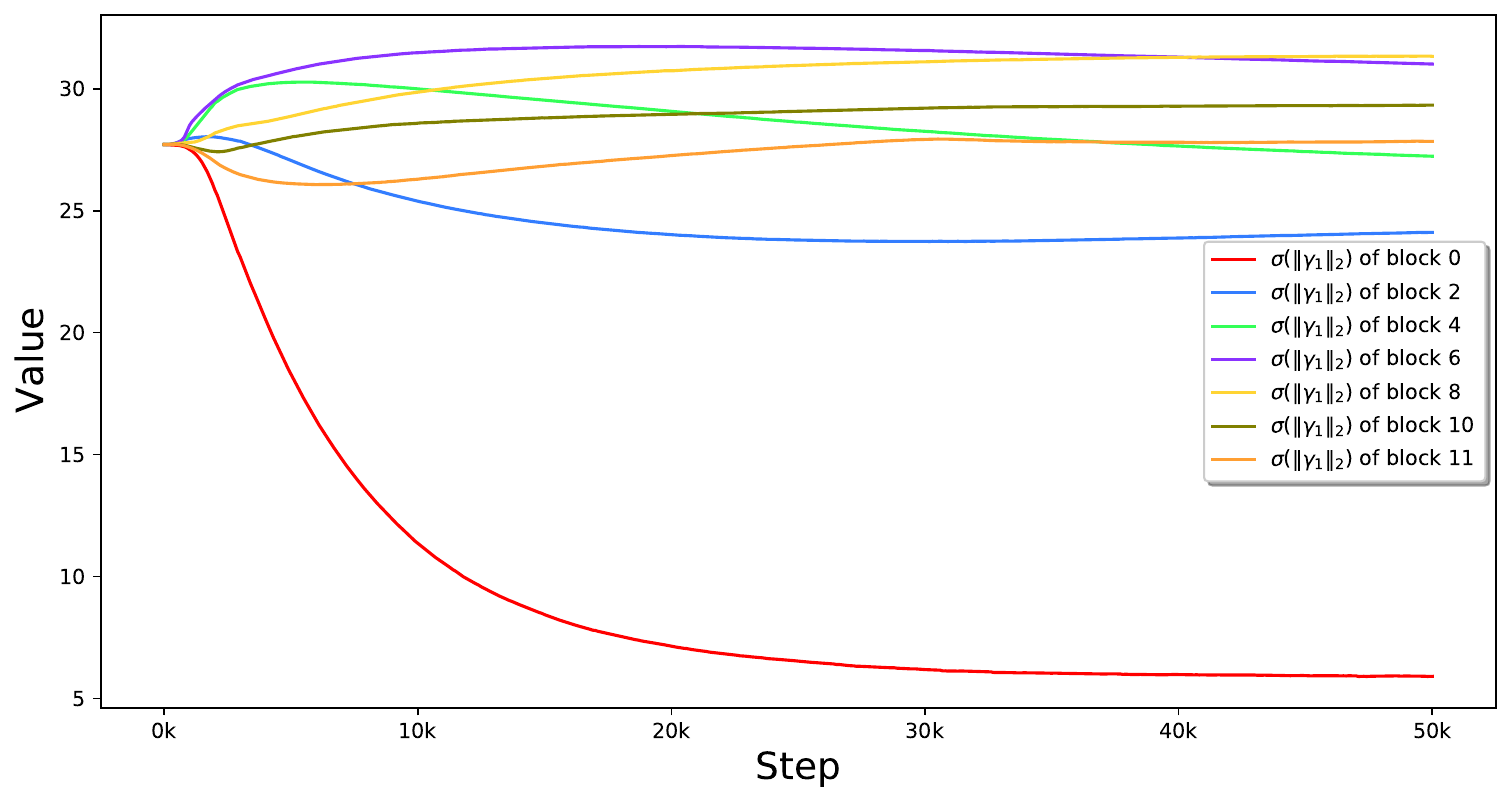}
		\caption*{(a) $\|\boldsymbol{\gamma_1}\|_2$}
		
	\end{subfigure}
	\begin{subfigure}{0.33\linewidth}
		\centering
		\includegraphics[width=4.5cm,height=3.8cm]{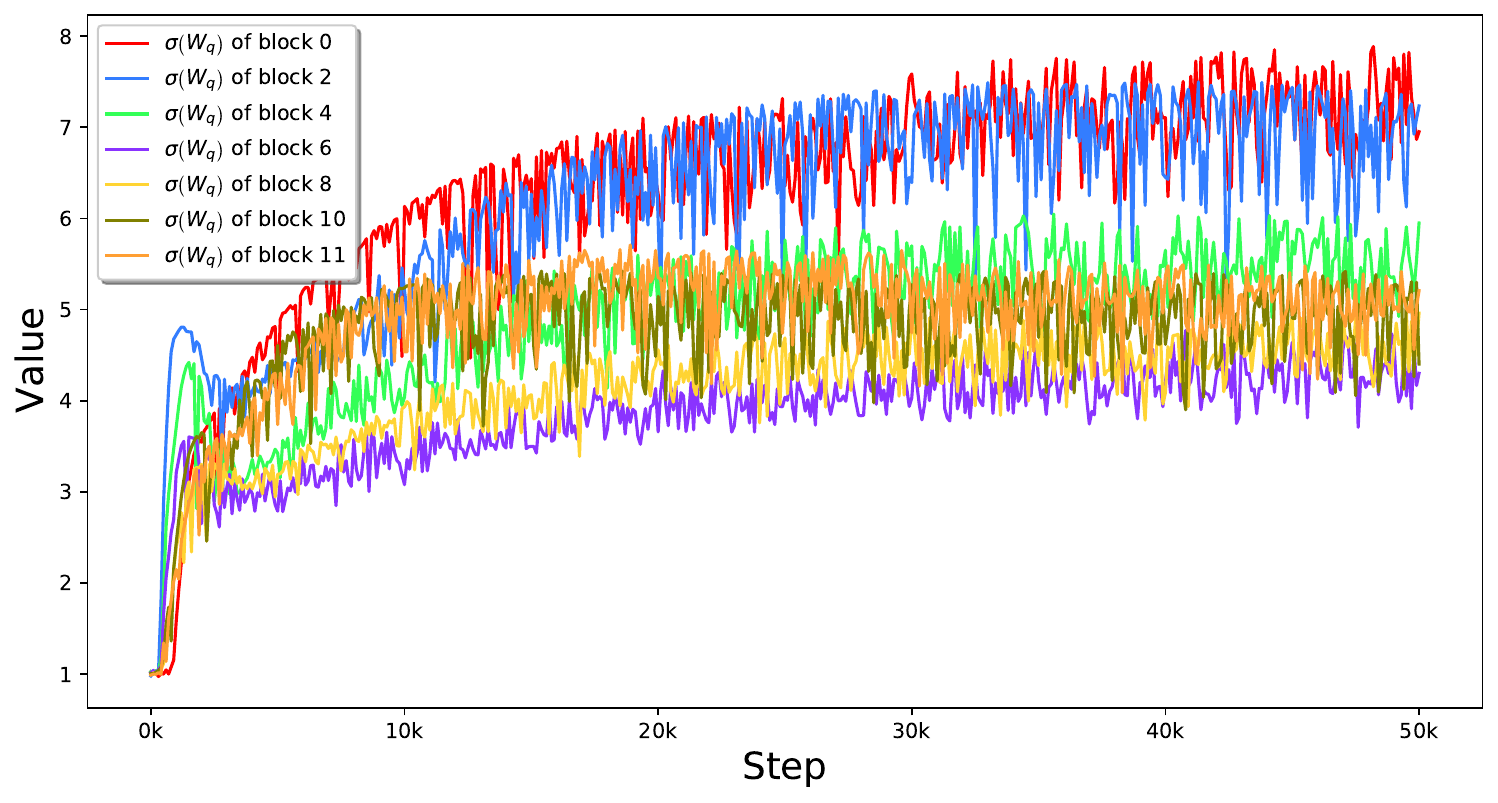}
		\caption*{(b) $\sigma_1\left({\bW_q}\right)$}
		
	\end{subfigure}
        \begin{subfigure}{0.33\linewidth}
		\centering
		\includegraphics[width=4.5cm,height=3.8cm]{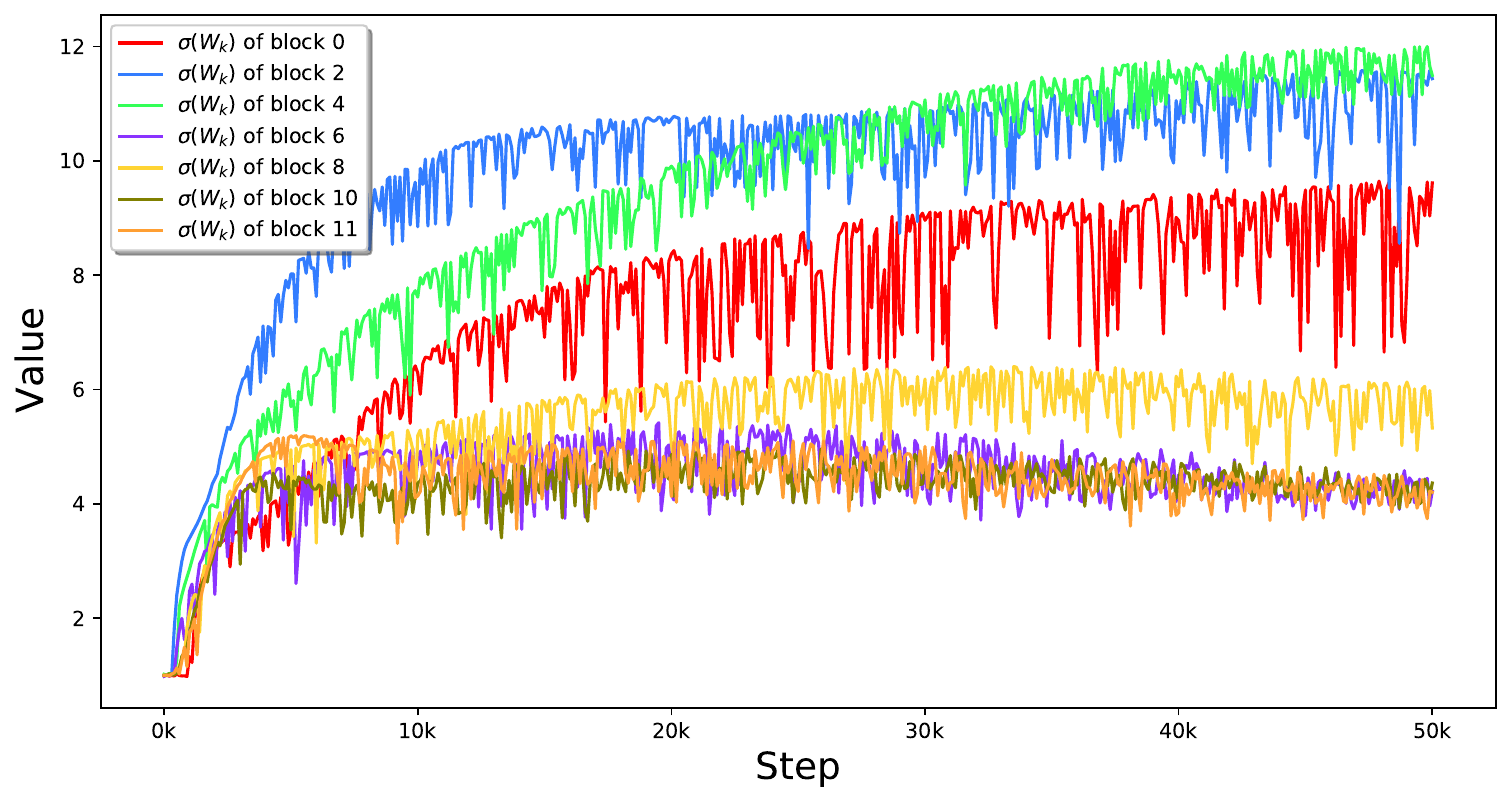}
		\caption*{(c) $\sigma_1\left({\bW_k}\right)$}
		
	\end{subfigure}
	\begin{subfigure}{0.33\linewidth}
		\centering
		\includegraphics[width=4.5cm,height=3.8cm]{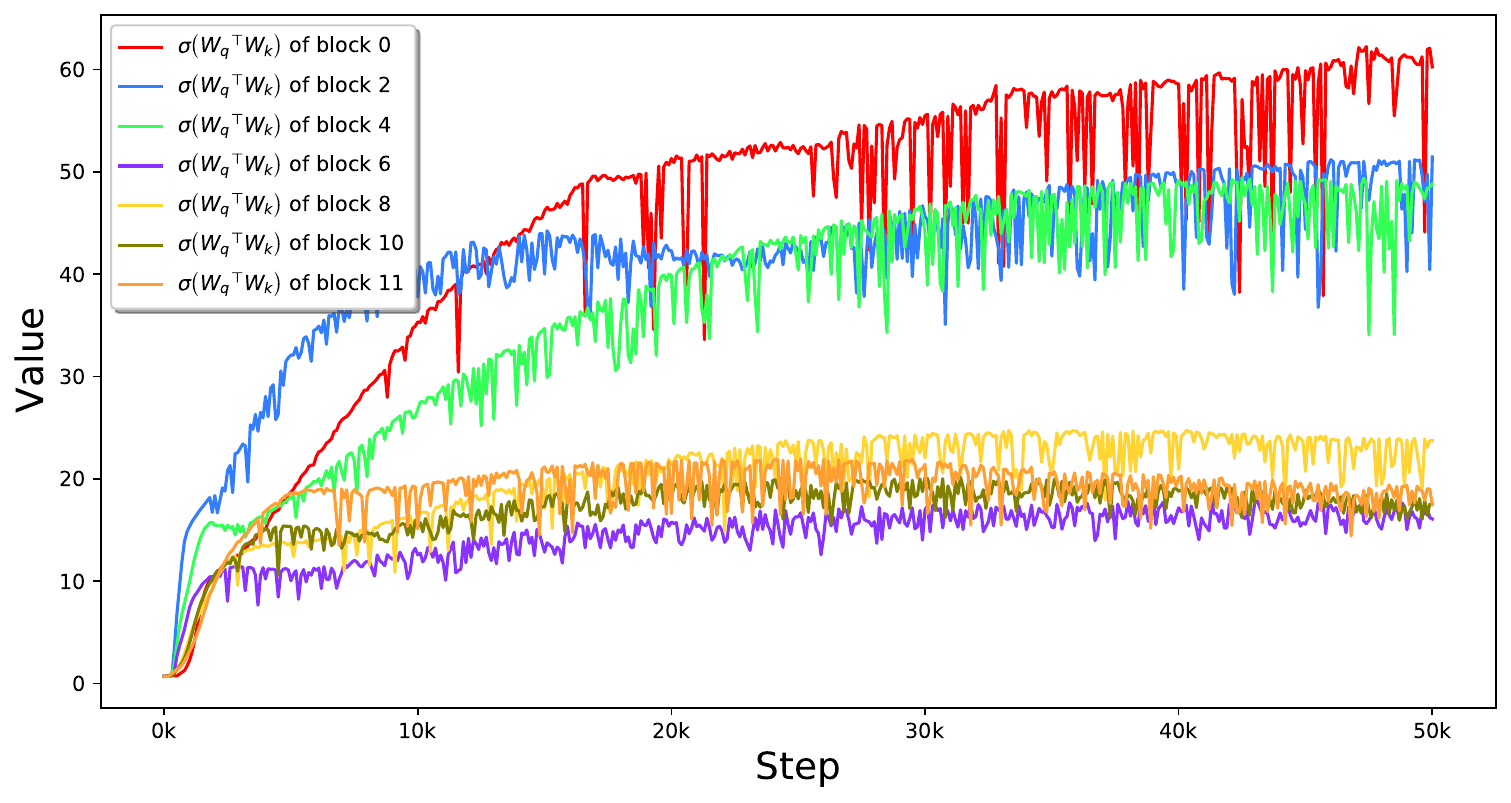}
		\caption*{(d) $\sigma_1\left({{\bW_q}^{\top}{\bW_k}}\right)$}
		\label{fig:vit_sub_success_wqwk}
	\end{subfigure}
	\begin{subfigure}{0.33\linewidth}
		\centering
		\includegraphics[width=4.5cm,height=3.8cm]{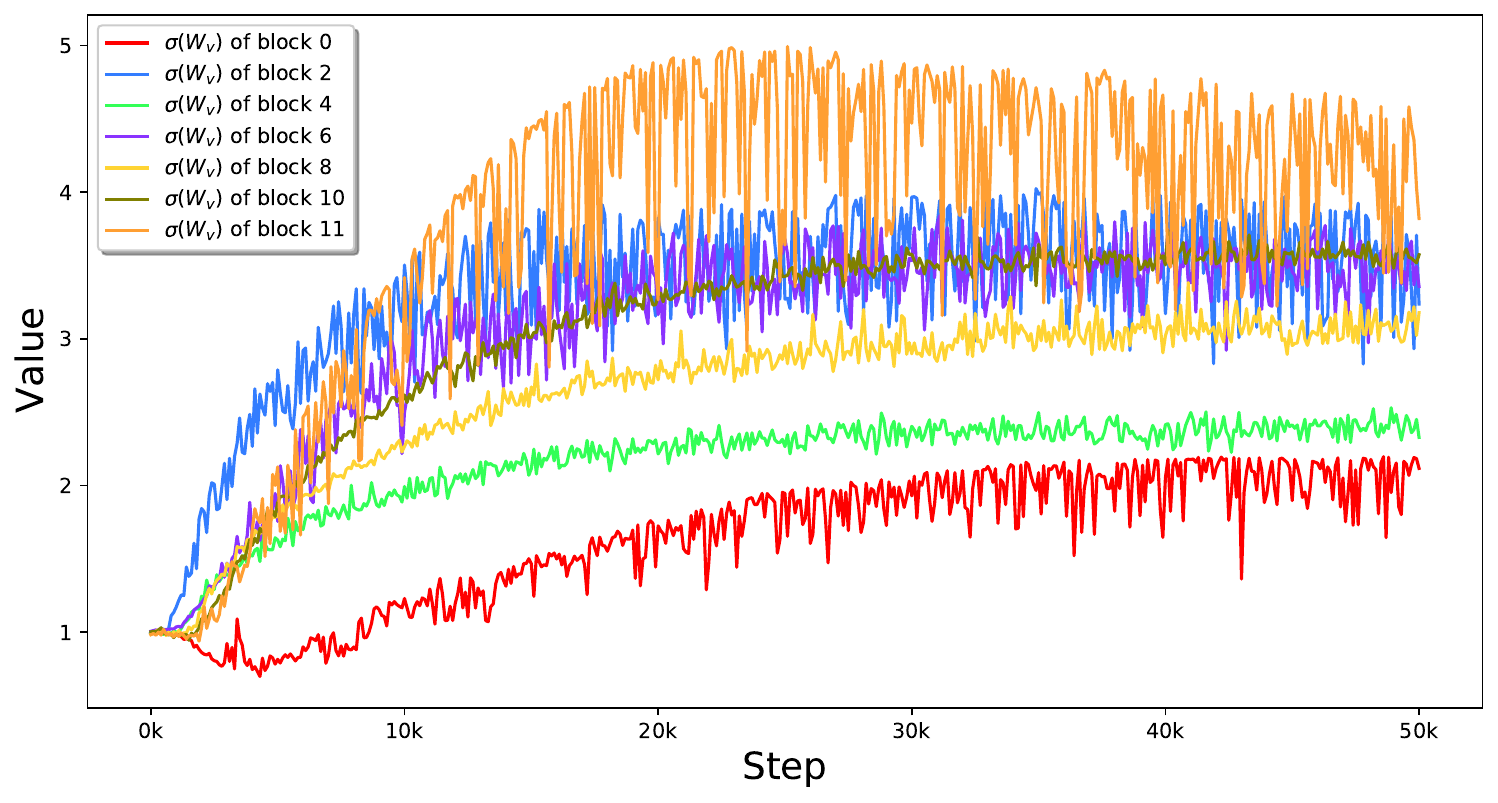}
		\caption*{(e) $\sigma_1\left({\bW_v}\right)$}
		
	\end{subfigure}

        \begin{subfigure}{0.33\linewidth}
		\centering
		\includegraphics[width=4.5cm,height=3.8cm]{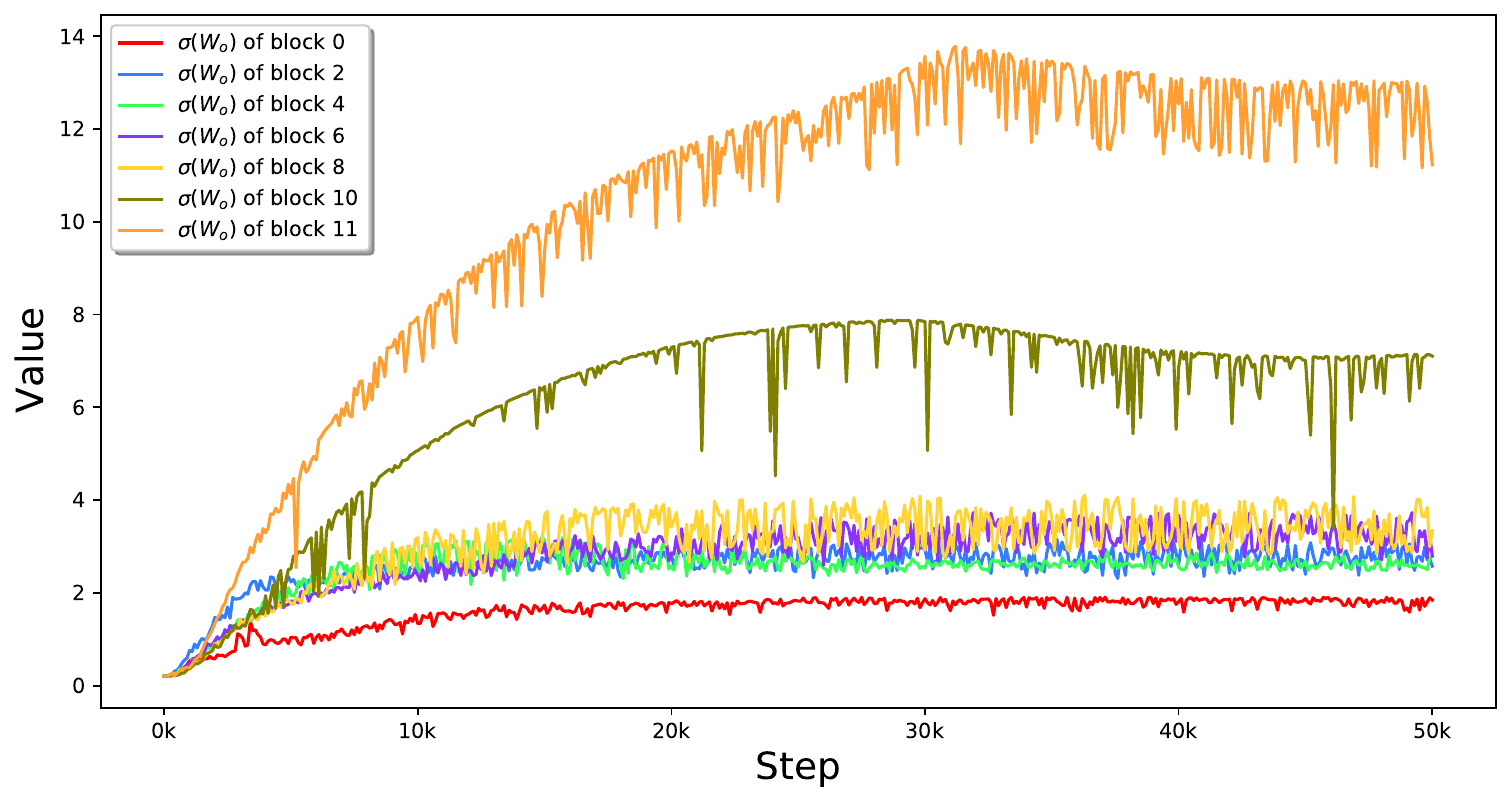}
		\caption*{(f) $\sigma_1\left({\bW_o}\right)$}
		
	\end{subfigure}
	\begin{subfigure}{0.33\linewidth}
		\centering
		\includegraphics[width=4.5cm,height=3.8cm]{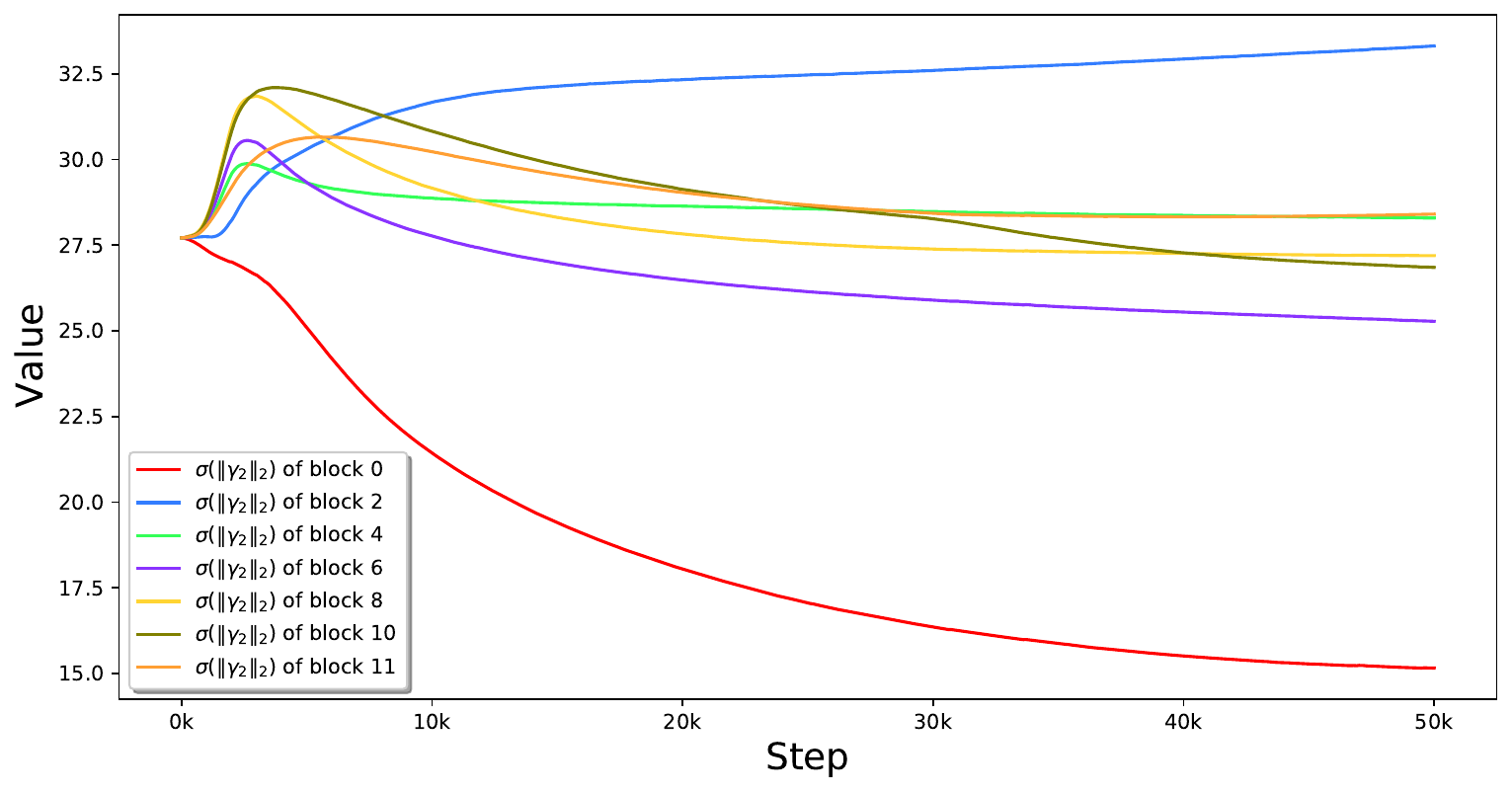}
		\caption*{(g) $\|\boldsymbol{\gamma_2}\|_2$}
		
	\end{subfigure}
	\begin{subfigure}{0.33\linewidth}
		\centering
		\includegraphics[width=4.5cm,height=3.8cm]{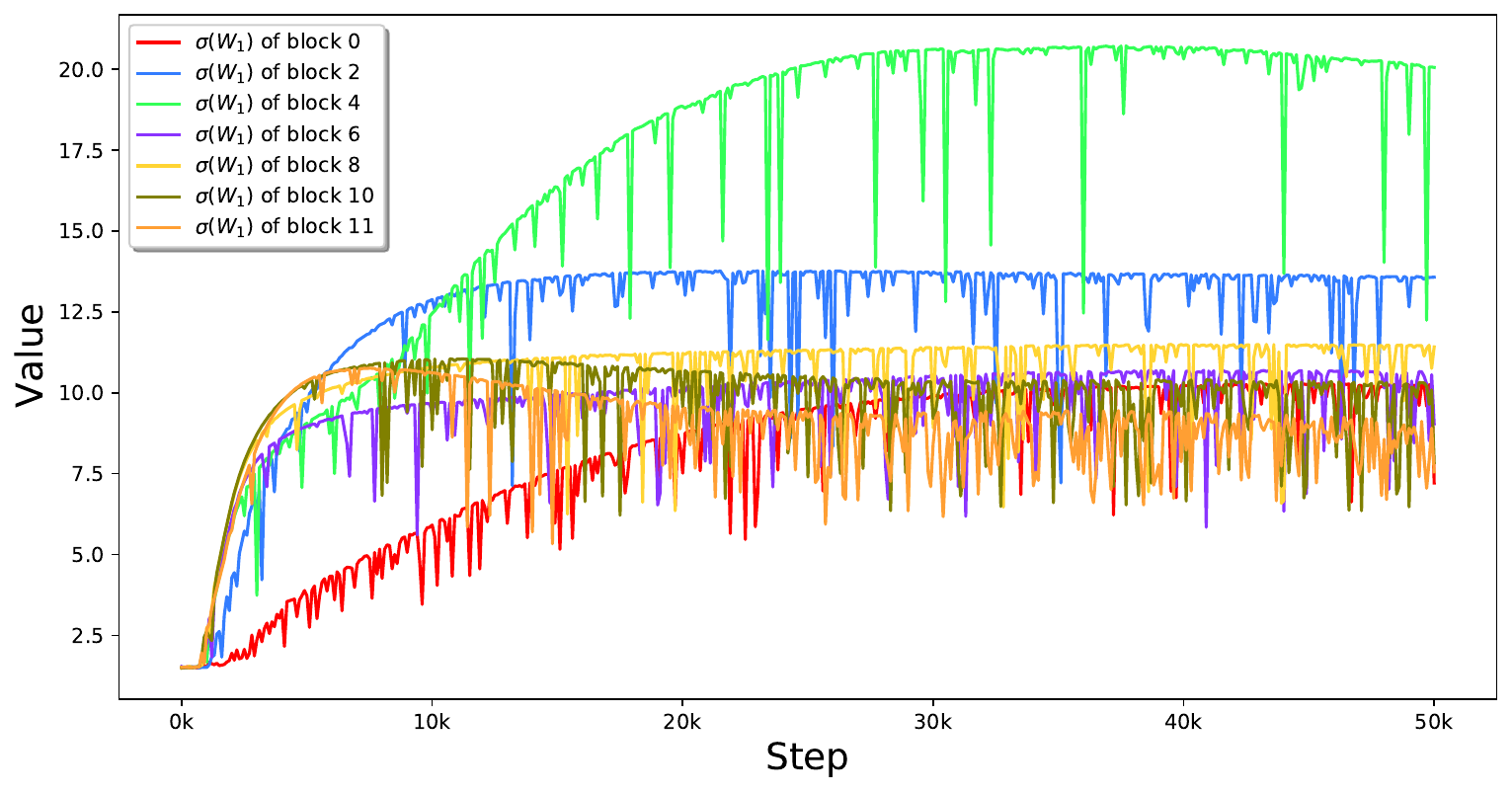}
		\caption*{(h) $\sigma_1\left({\bW_1}\right)$}
		
	\end{subfigure}
	\begin{subfigure}{0.33\linewidth}
		\centering
            \includegraphics[width=4.5cm,height=3.8cm]{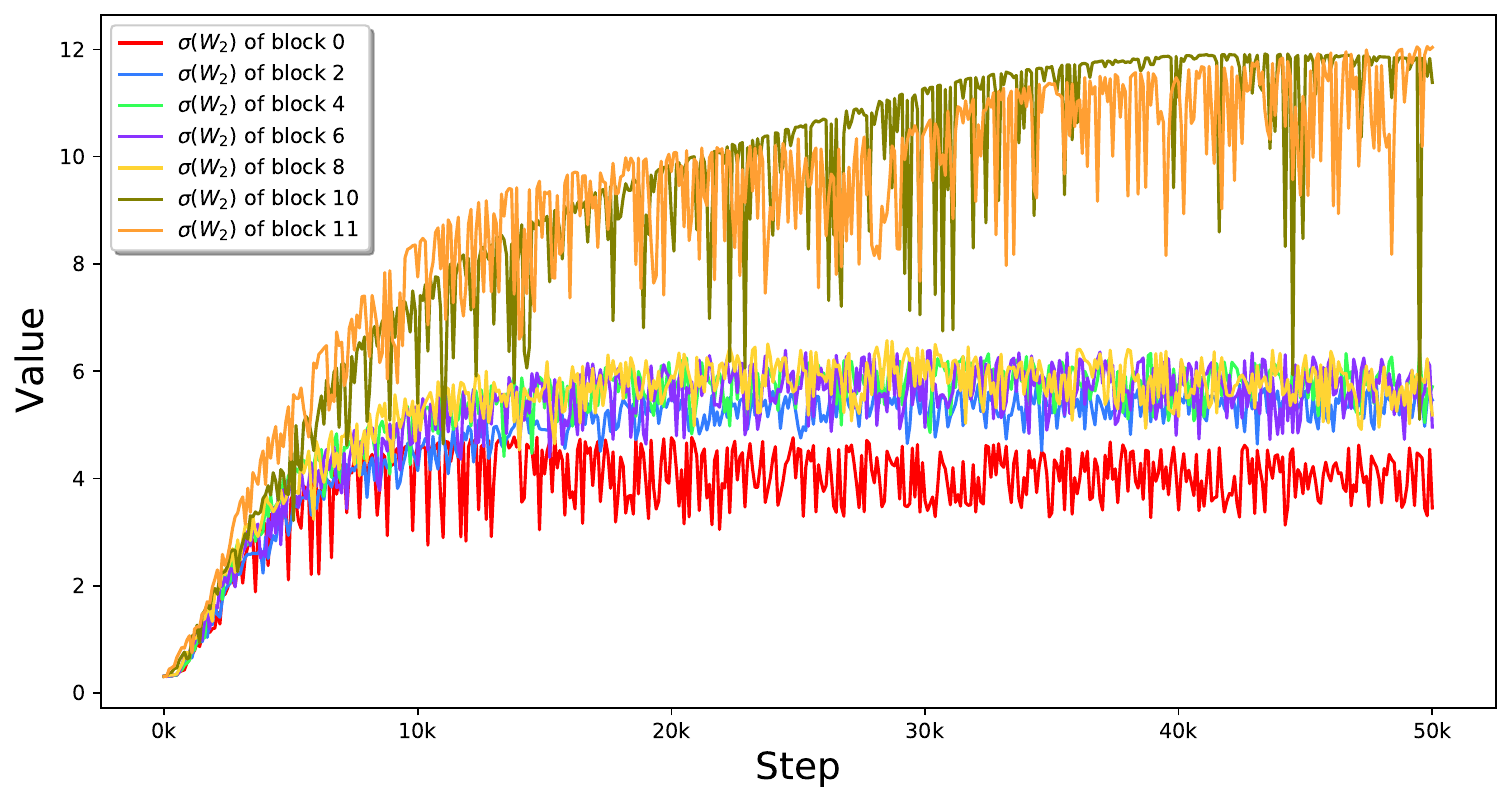}
		\caption*{(i) $\sigma_1\left({\bW_2}\right)$}
		
	\end{subfigure}
        \begin{subfigure}{0.33\linewidth}
		\centering
            \includegraphics[width=4.5cm,height=3.8cm]{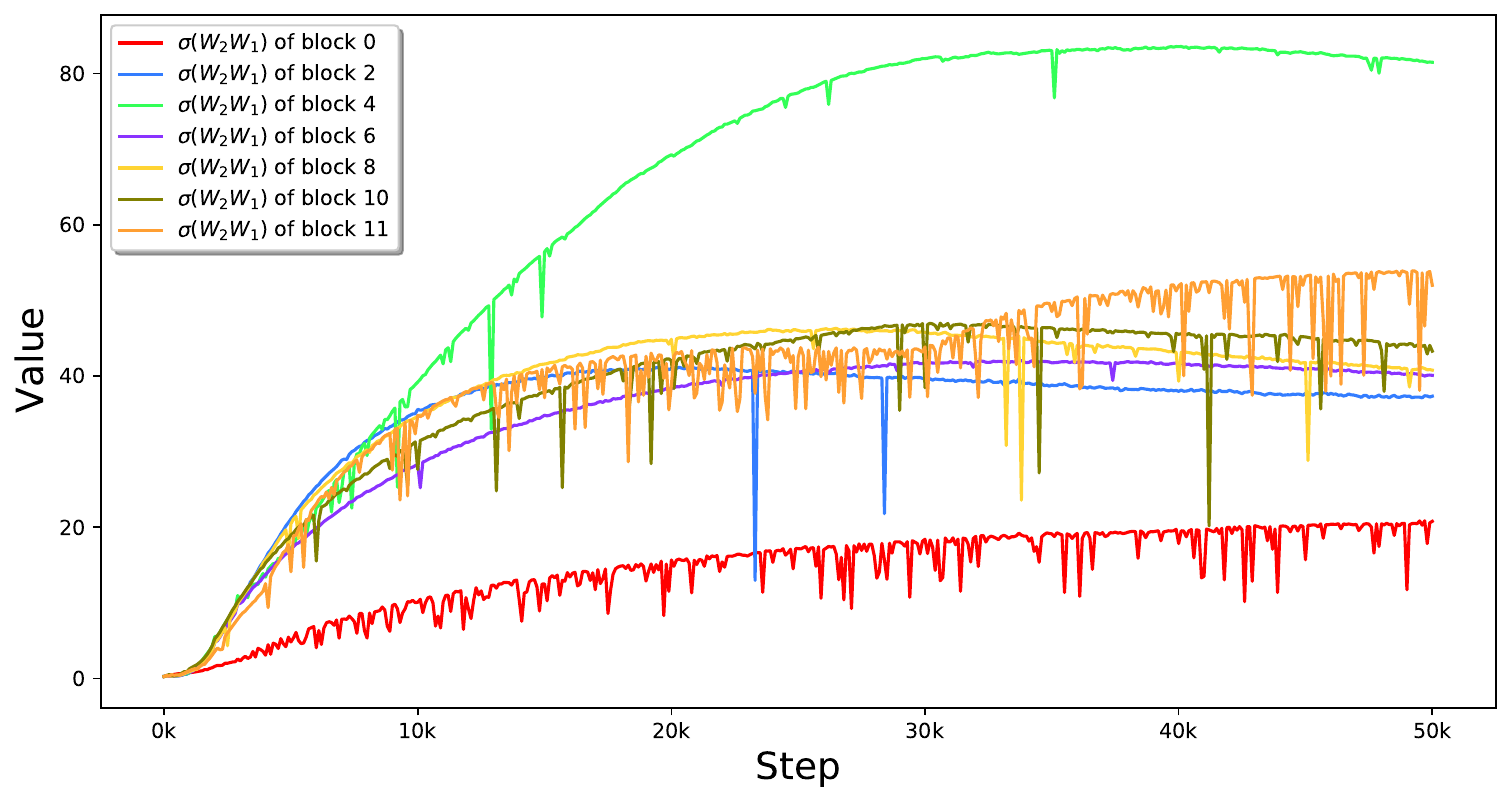}
  
		\caption*{(j) $\sigma_1\left({\bW_2\bW_1}\right)$}
		
	\end{subfigure}
        \begin{subfigure}{0.33\linewidth}
		\centering
            \includegraphics[width=4.5cm,height=3.8cm]{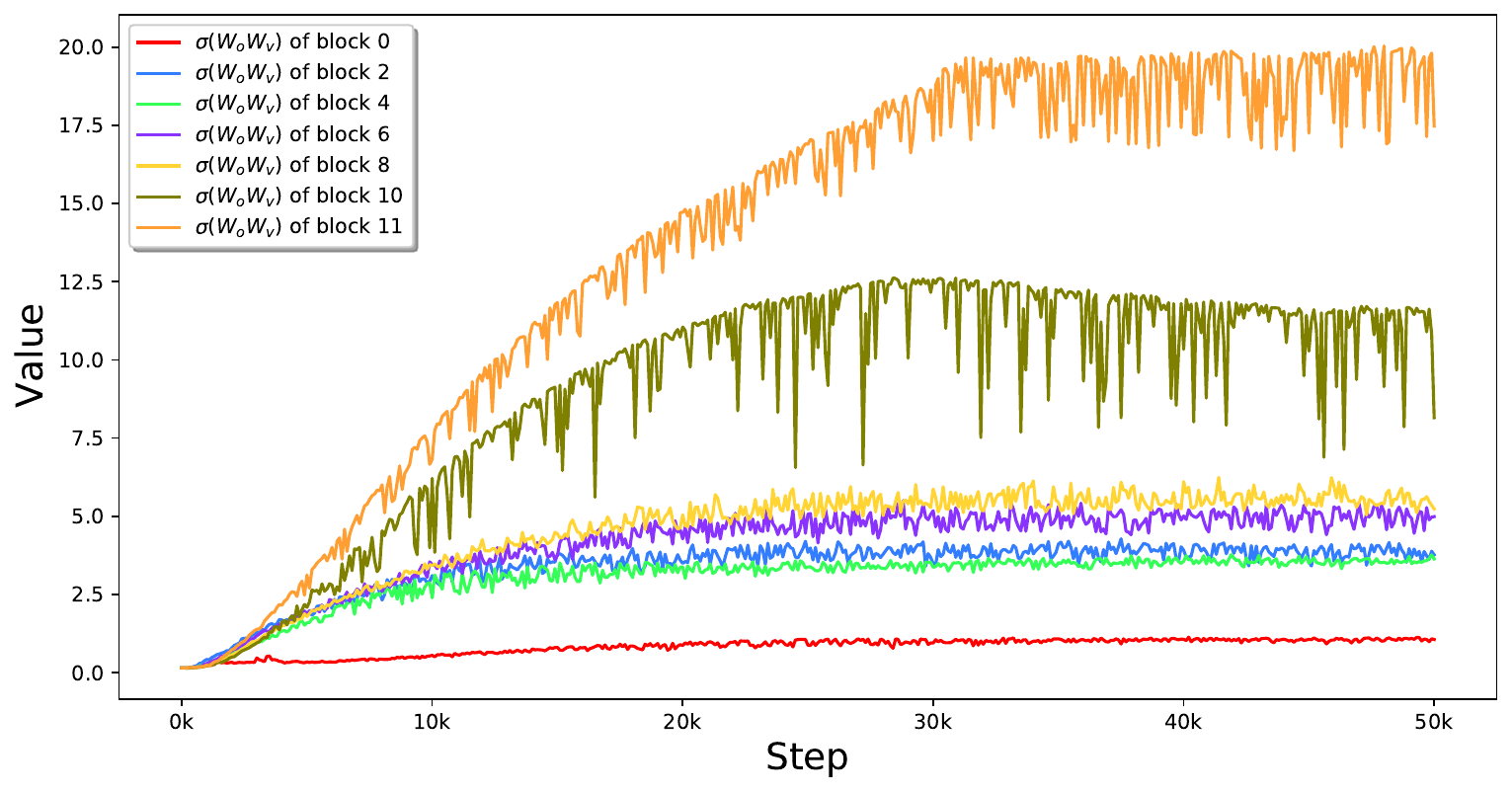}
		\caption*{(k) $\sigma_1\left({\bW_o\bW_v}\right)$}
		
	\end{subfigure}
        \begin{subfigure}{0.33\linewidth}
		\centering
            \includegraphics[width=4.5cm,height=3.8cm]{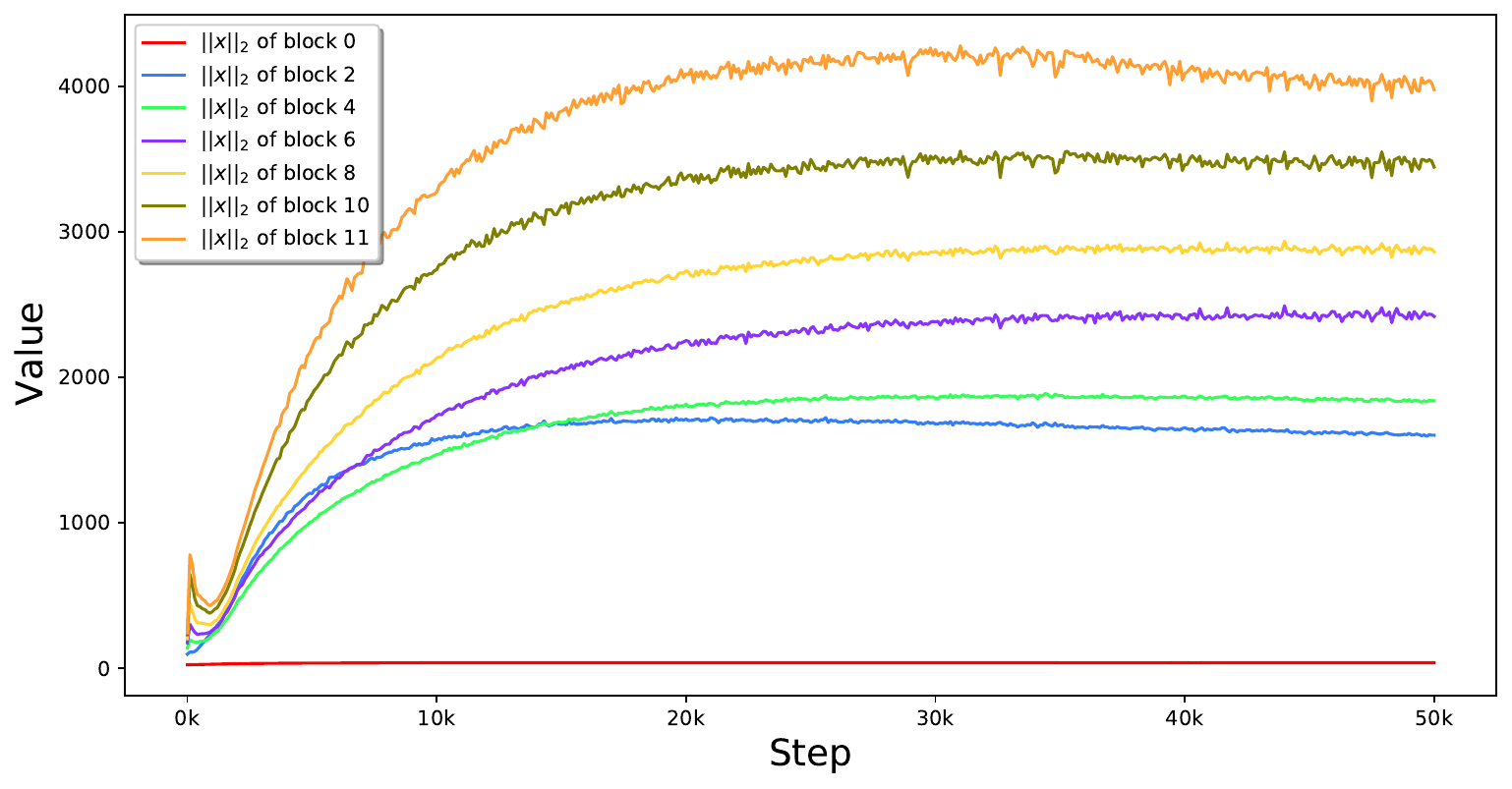}
		\caption*{(l) $\left\|{\bx}\right\|_2$}
		
	\end{subfigure}
        \begin{subfigure}{0.33\linewidth}
		\centering
            \includegraphics[width=4.5cm,height=3.8cm]{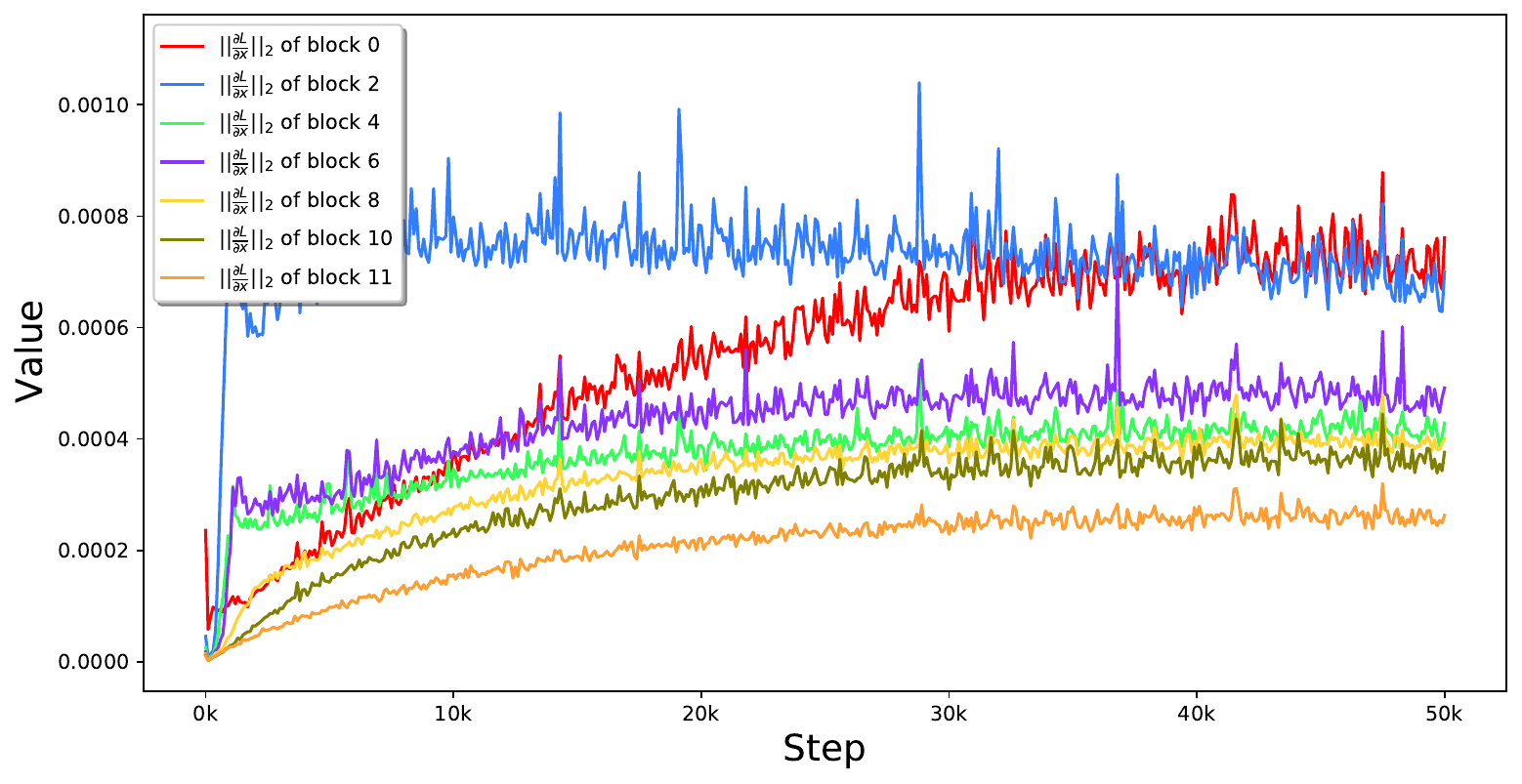}
            \caption*{(m) $\|{\frac{\partial L}{\partial \bx}}\|_2$}
		
	\end{subfigure}
 \caption{Training dynamics of a successful GPT training.} 
 \label{fig:why_gpt_succusss}
\end{figure}

\begin{figure}[htbp]
	\centering
	\begin{subfigure}{0.33\linewidth}
		\centering
		\includegraphics[width=4.5cm,height=3.8cm]{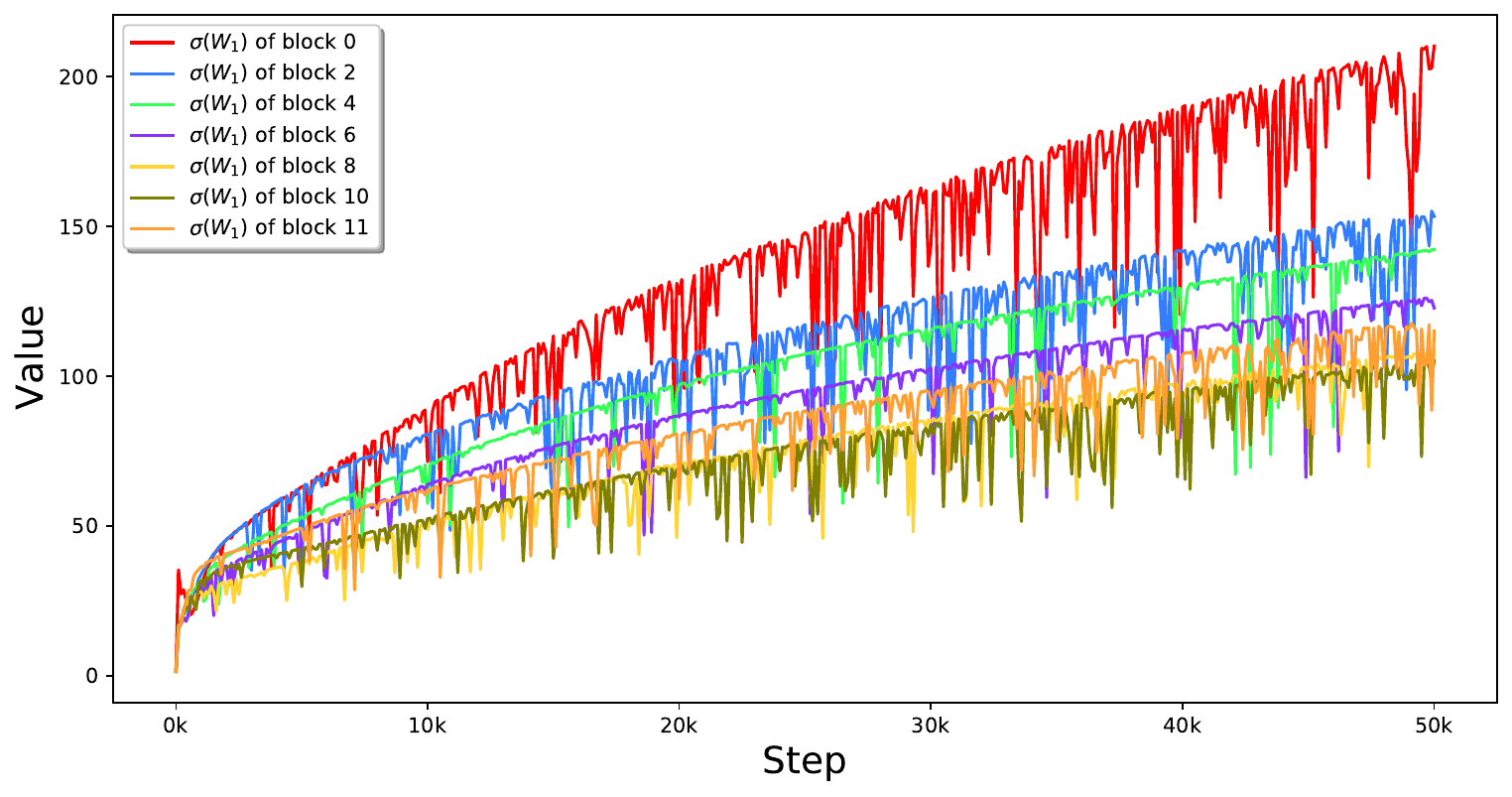}
		\caption*{(a) $\|\boldsymbol{\gamma_1}\|_2$}
		
	\end{subfigure}
	\begin{subfigure}{0.33\linewidth}
		\centering
		\includegraphics[width=4.5cm,height=3.8cm]{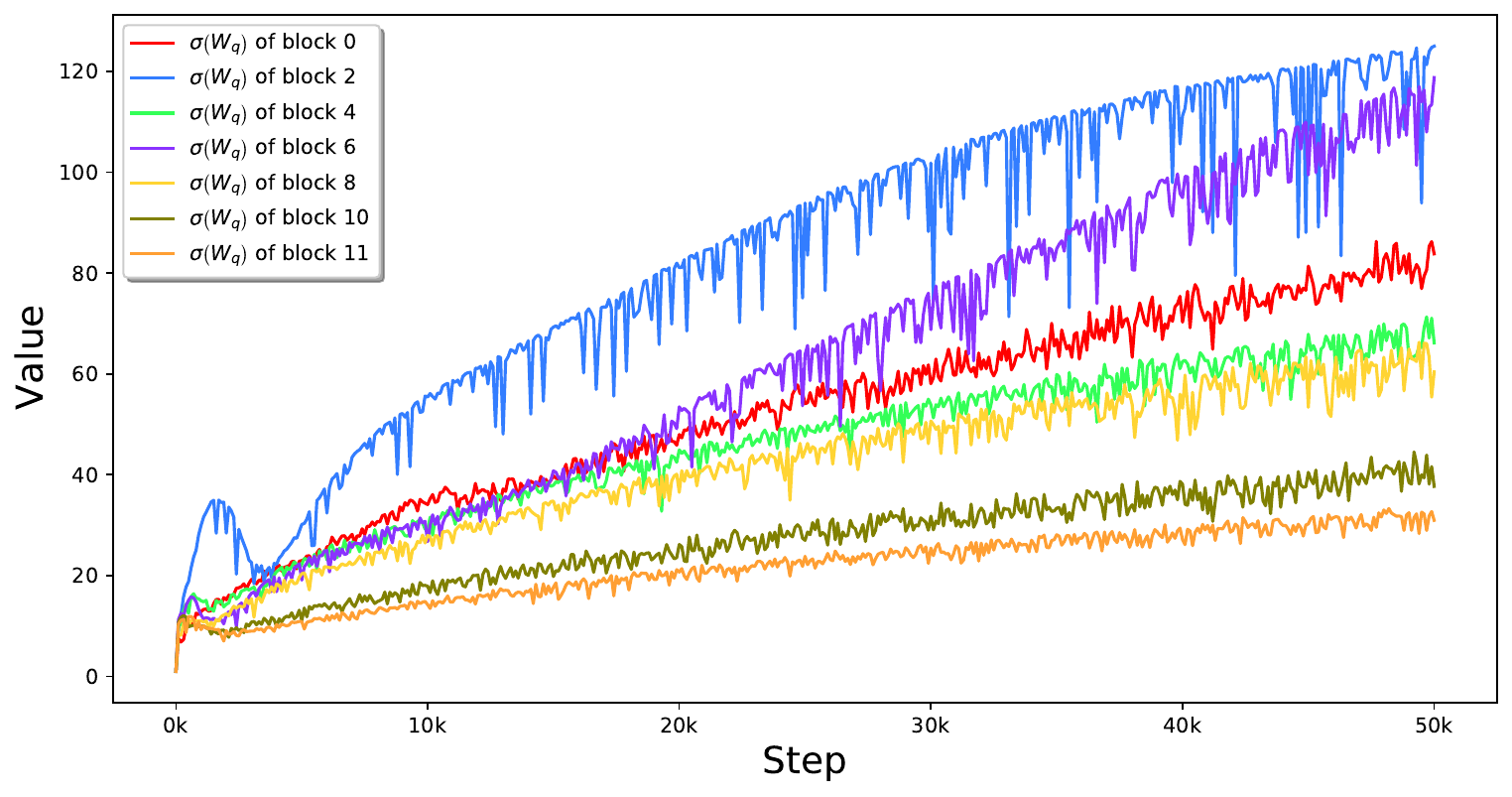}
		\caption*{(b) $\sigma_1\left({\bW_q}\right)$}
		
	\end{subfigure}
        \begin{subfigure}{0.33\linewidth}
		\centering
		\includegraphics[width=4.5cm,height=3.8cm]{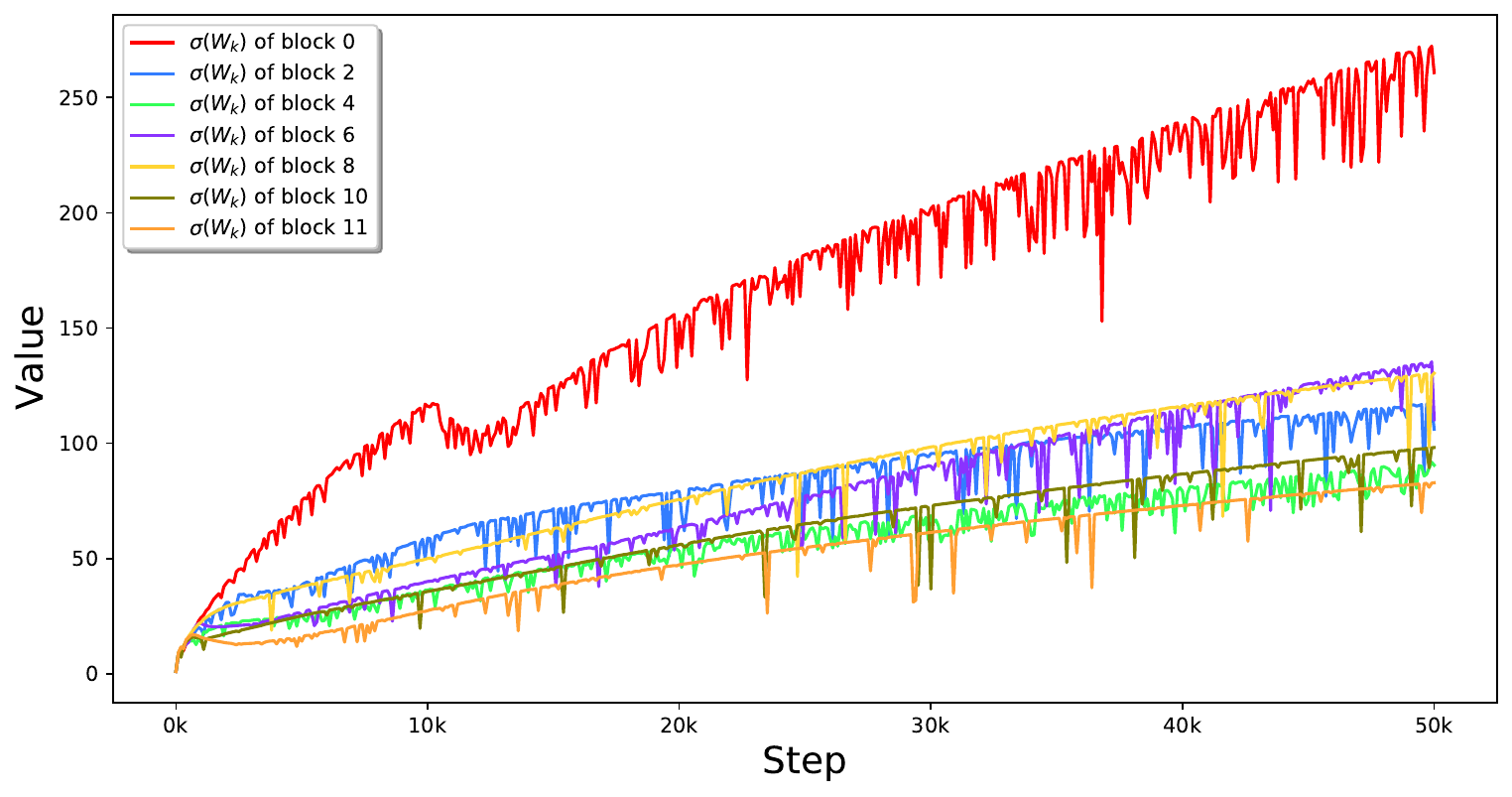}
		\caption*{(c) $\sigma_1\left({\bW_k}\right)$}
		
	\end{subfigure}
	\begin{subfigure}{0.33\linewidth}
		\centering
		\includegraphics[width=4.5cm,height=3.8cm]{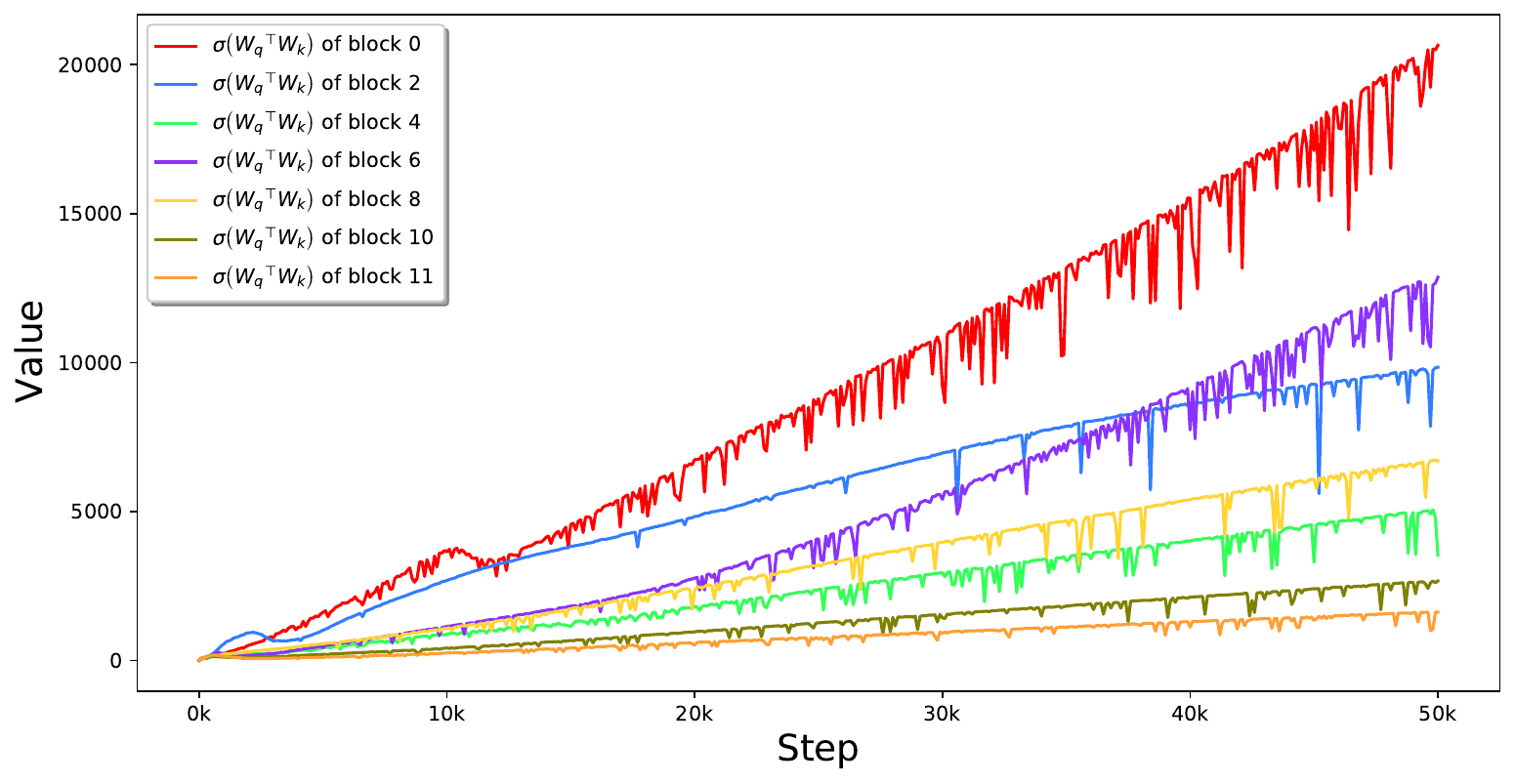}
		\caption*{(d) $\sigma_1\left({{\bW_q}^{\top}{\bW_k}}\right)$}
		\label{fig:vit_sub_success_wqwk}
	\end{subfigure}
	\begin{subfigure}{0.33\linewidth}
		\centering
		\includegraphics[width=4.5cm,height=3.8cm]{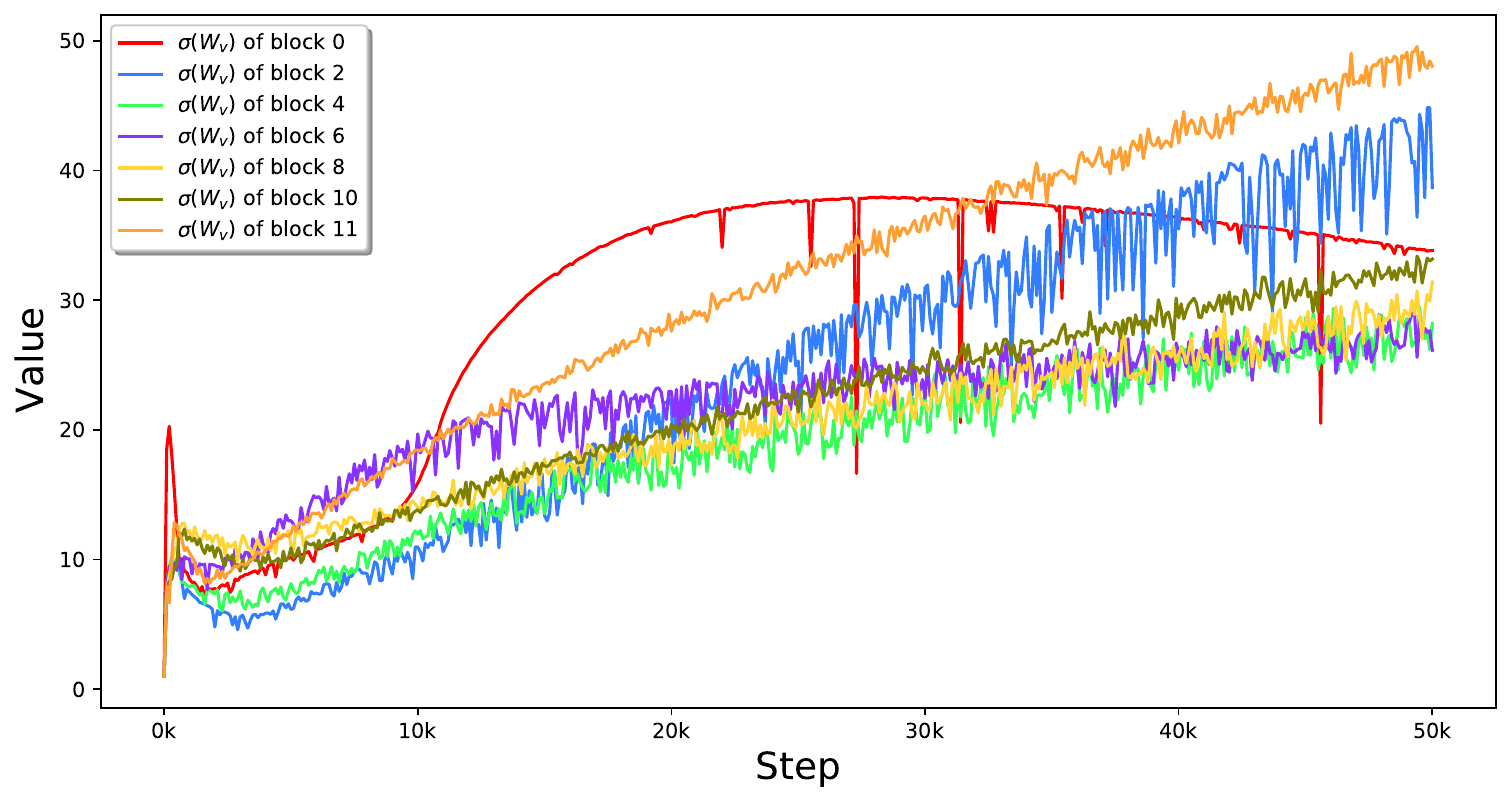}
		\caption*{(e) $\sigma_1\left({\bW_v}\right)$}
		
	\end{subfigure}

        \begin{subfigure}{0.33\linewidth}
		\centering
		\includegraphics[width=4.5cm,height=3.8cm]{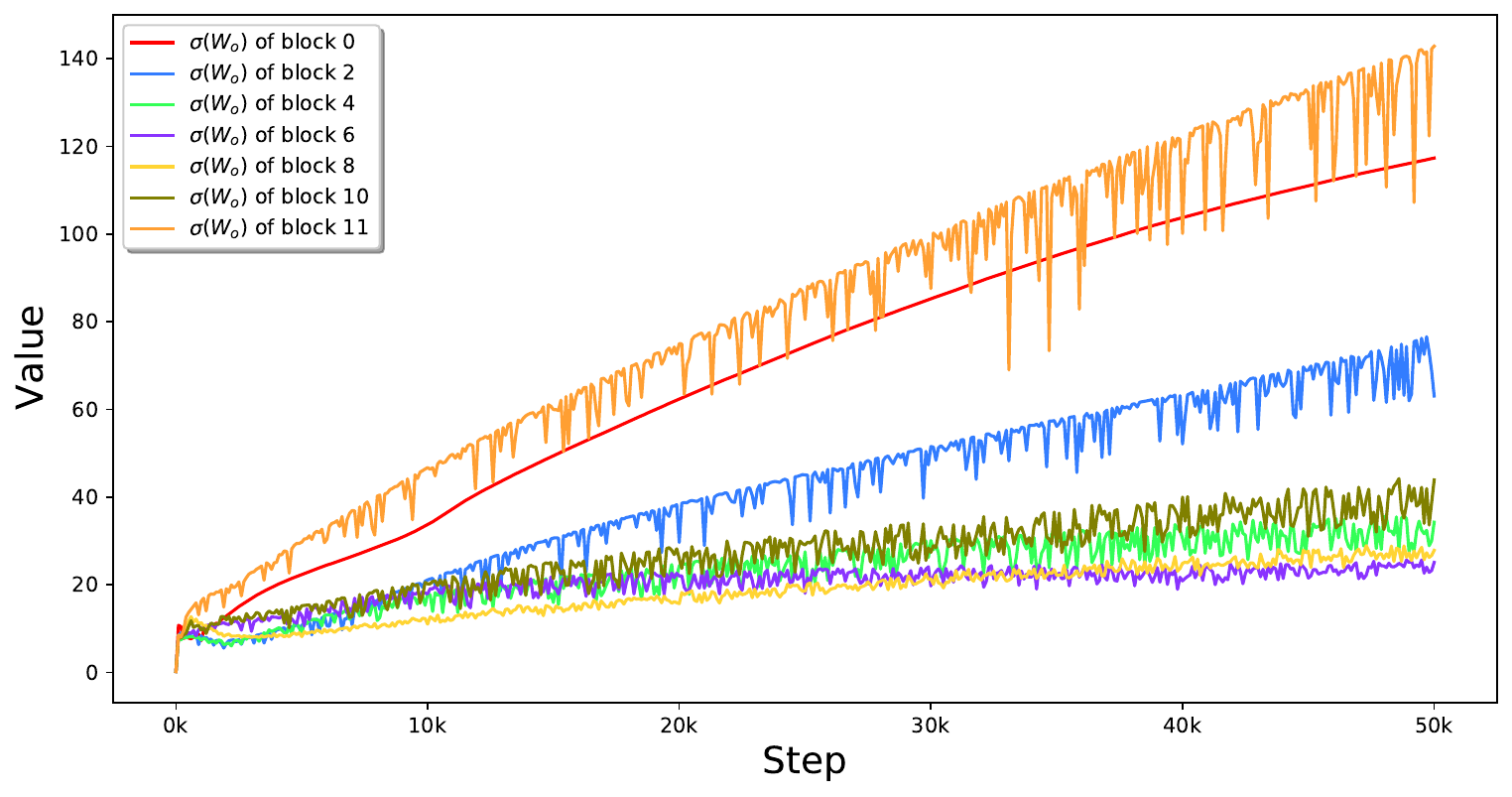}
		\caption*{(f) $\sigma_1\left({\bW_o}\right)$}
		
	\end{subfigure}
	\begin{subfigure}{0.33\linewidth}
		\centering
		\includegraphics[width=4.5cm,height=3.8cm]{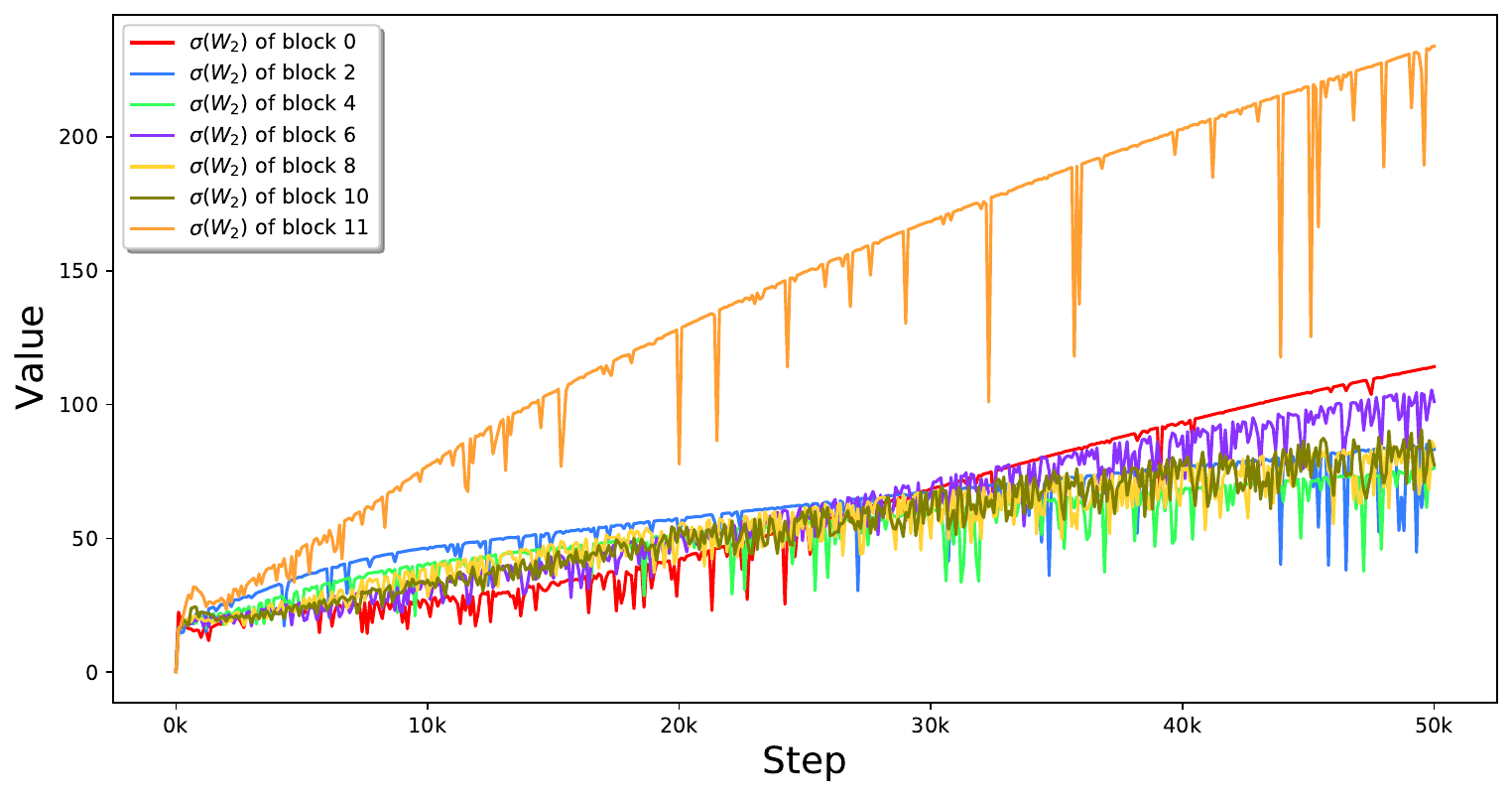}
		\caption*{(g) $\|\boldsymbol{\gamma_2}\|_2$}
		
	\end{subfigure}
	\begin{subfigure}{0.33\linewidth}
		\centering
		\includegraphics[width=4.5cm,height=3.8cm]{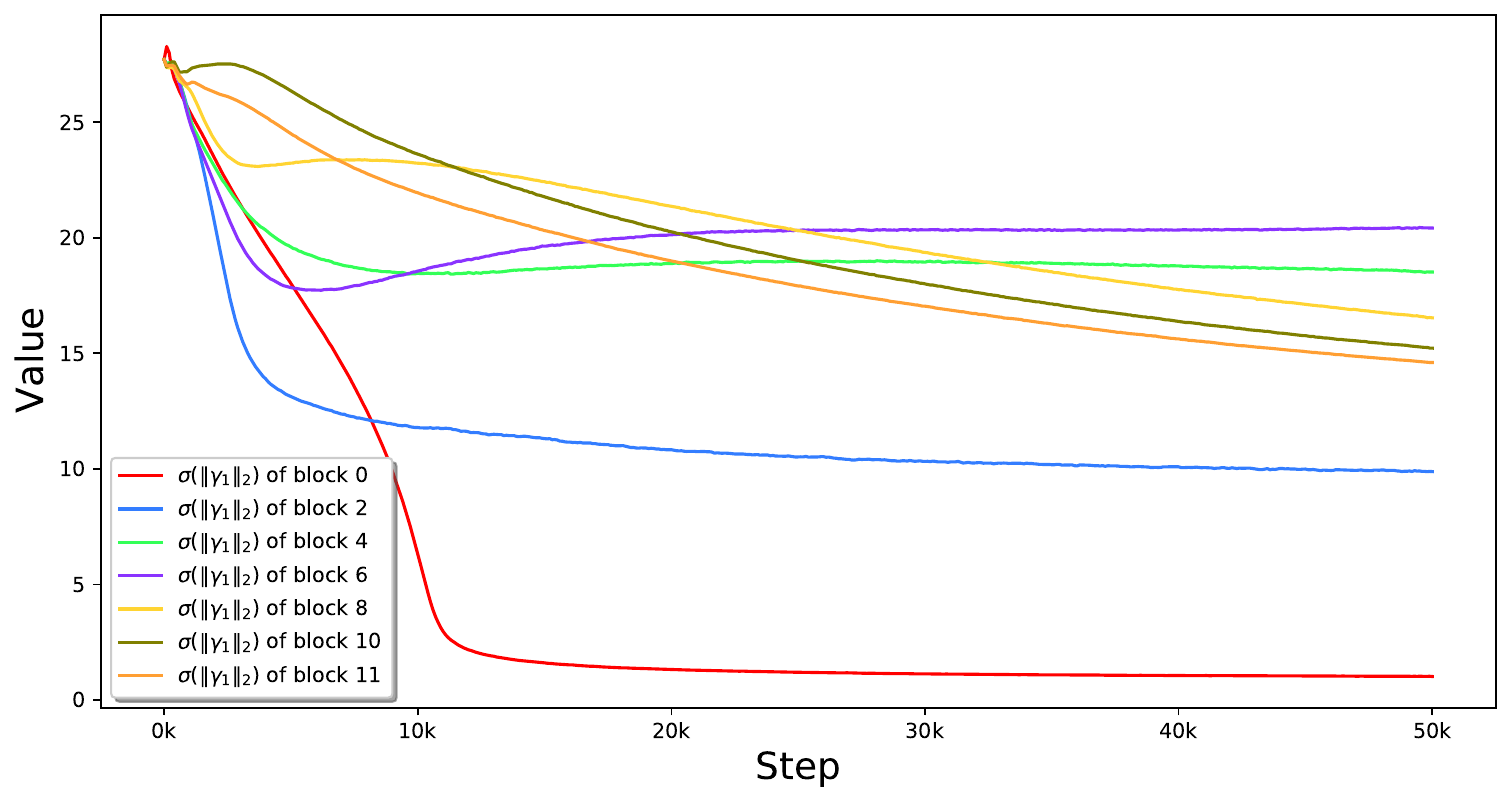}
		\caption*{(h) $\sigma_1\left({\bW_1}\right)$}
		
	\end{subfigure}
	\begin{subfigure}{0.33\linewidth}
		\centering
            \includegraphics[width=4.5cm,height=3.8cm]{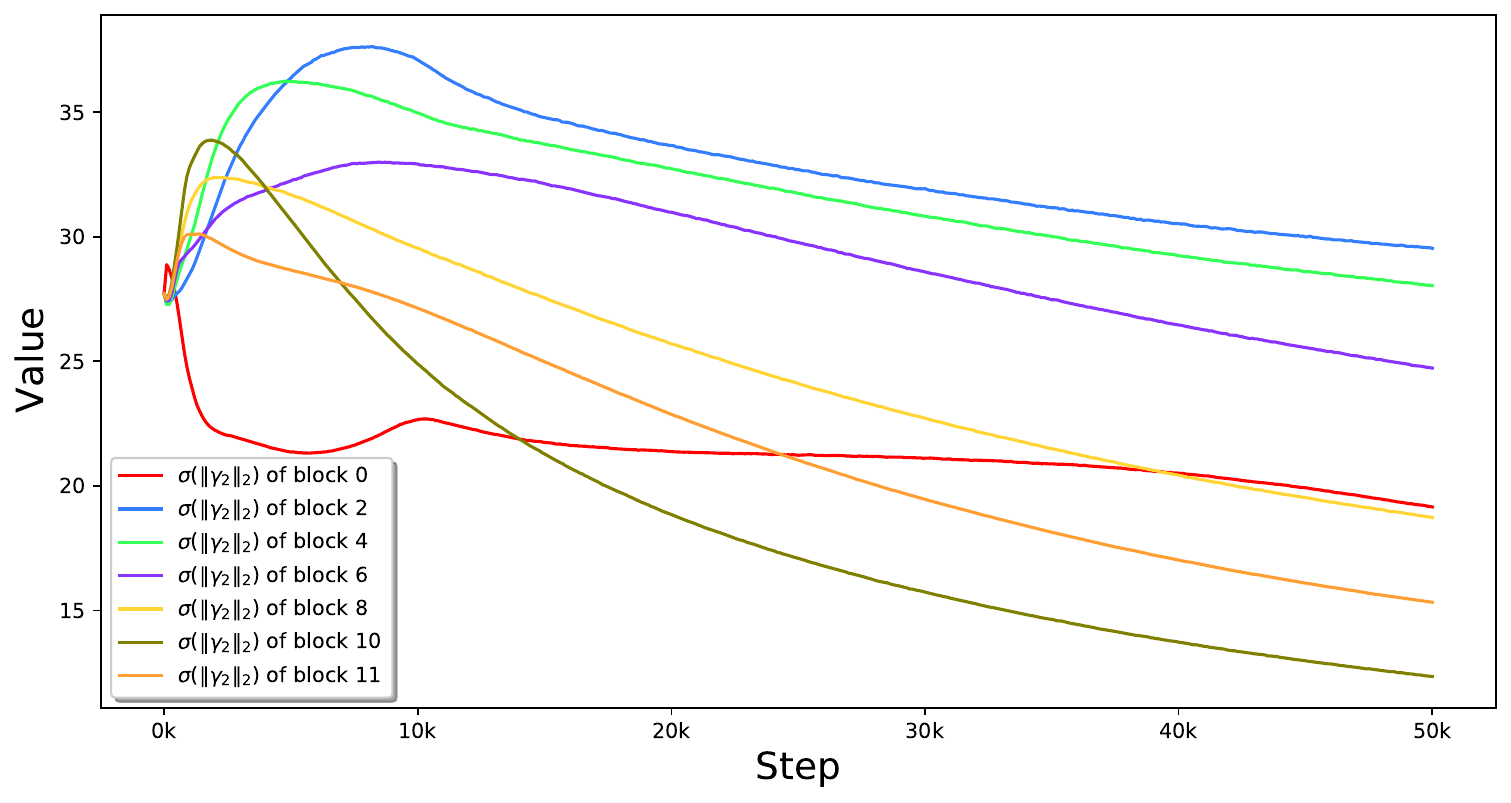}
		\caption*{(i) $\sigma_1\left({\bW_2}\right)$}
		
	\end{subfigure}
        \begin{subfigure}{0.33\linewidth}
		\centering
            \includegraphics[width=4.5cm,height=3.8cm]{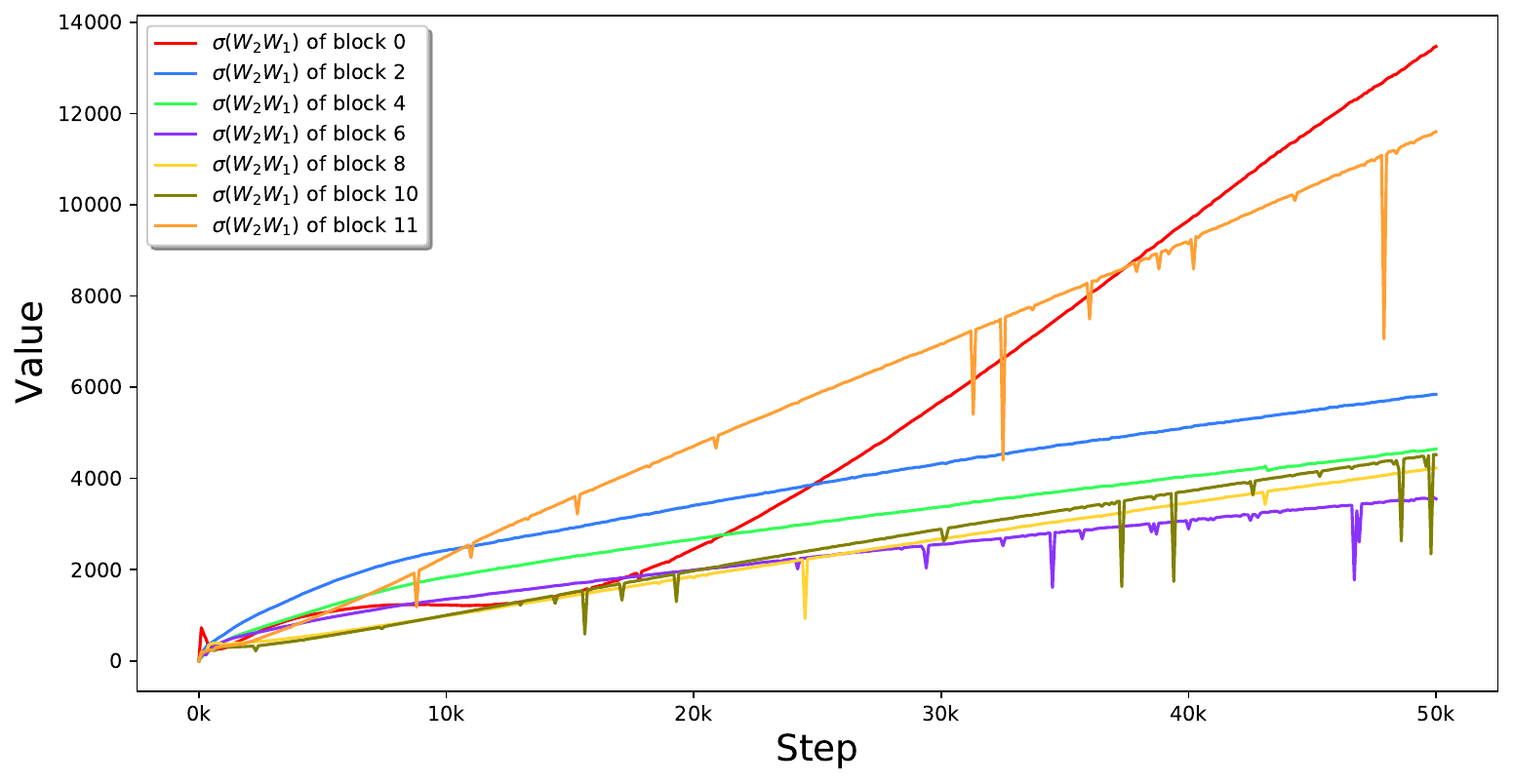}
  
		\caption*{(j) $\sigma_1\left({\bW_2\bW_1}\right)$}
		
	\end{subfigure}
        \begin{subfigure}{0.33\linewidth}
		\centering
            \includegraphics[width=4.5cm,height=3.8cm]{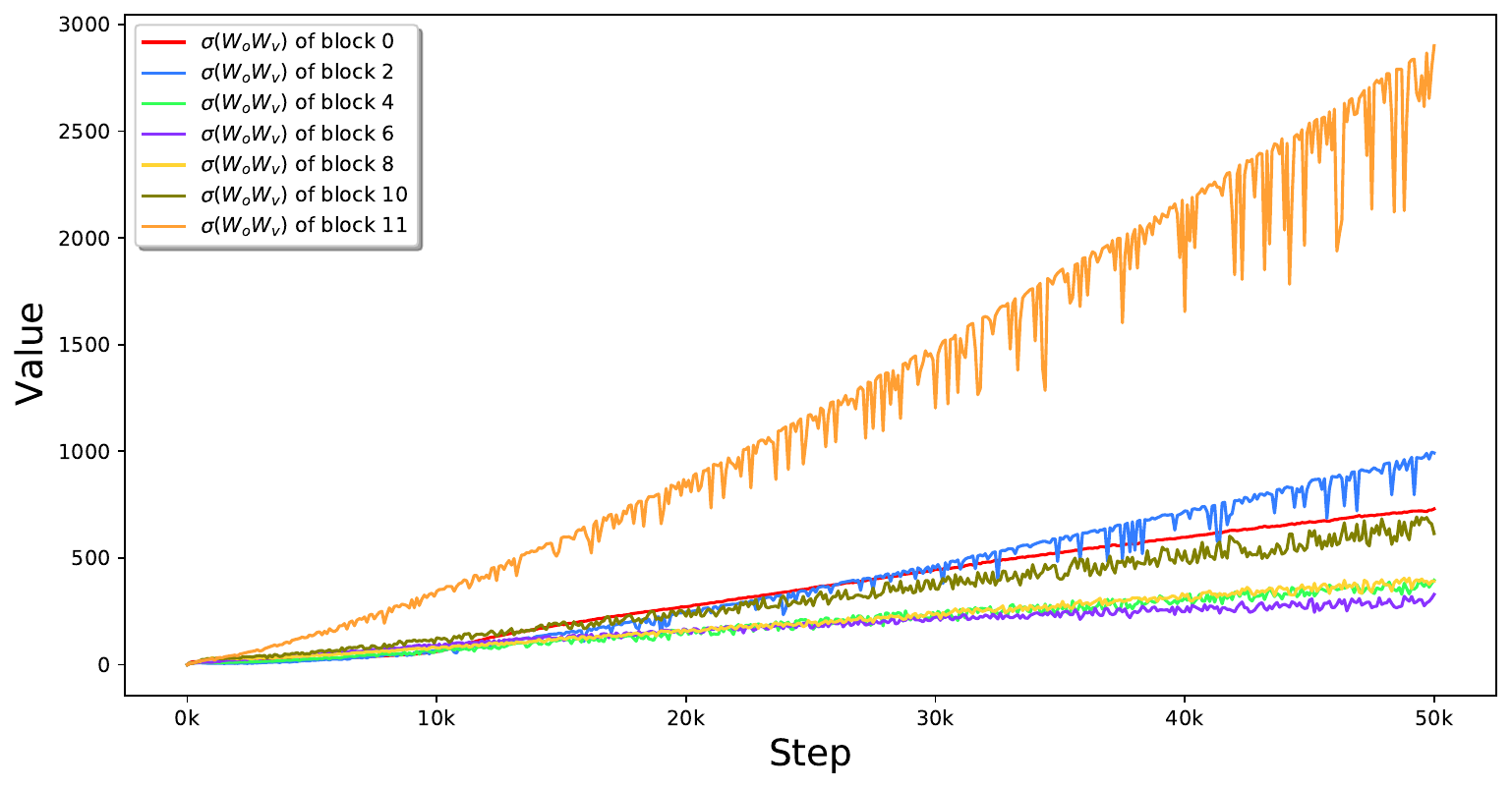}
		\caption*{(k) $\sigma_1\left({\bW_o\bW_v}\right)$}
		
	\end{subfigure}
        \begin{subfigure}{0.33\linewidth}
		\centering
            \includegraphics[width=4.5cm,height=3.8cm]{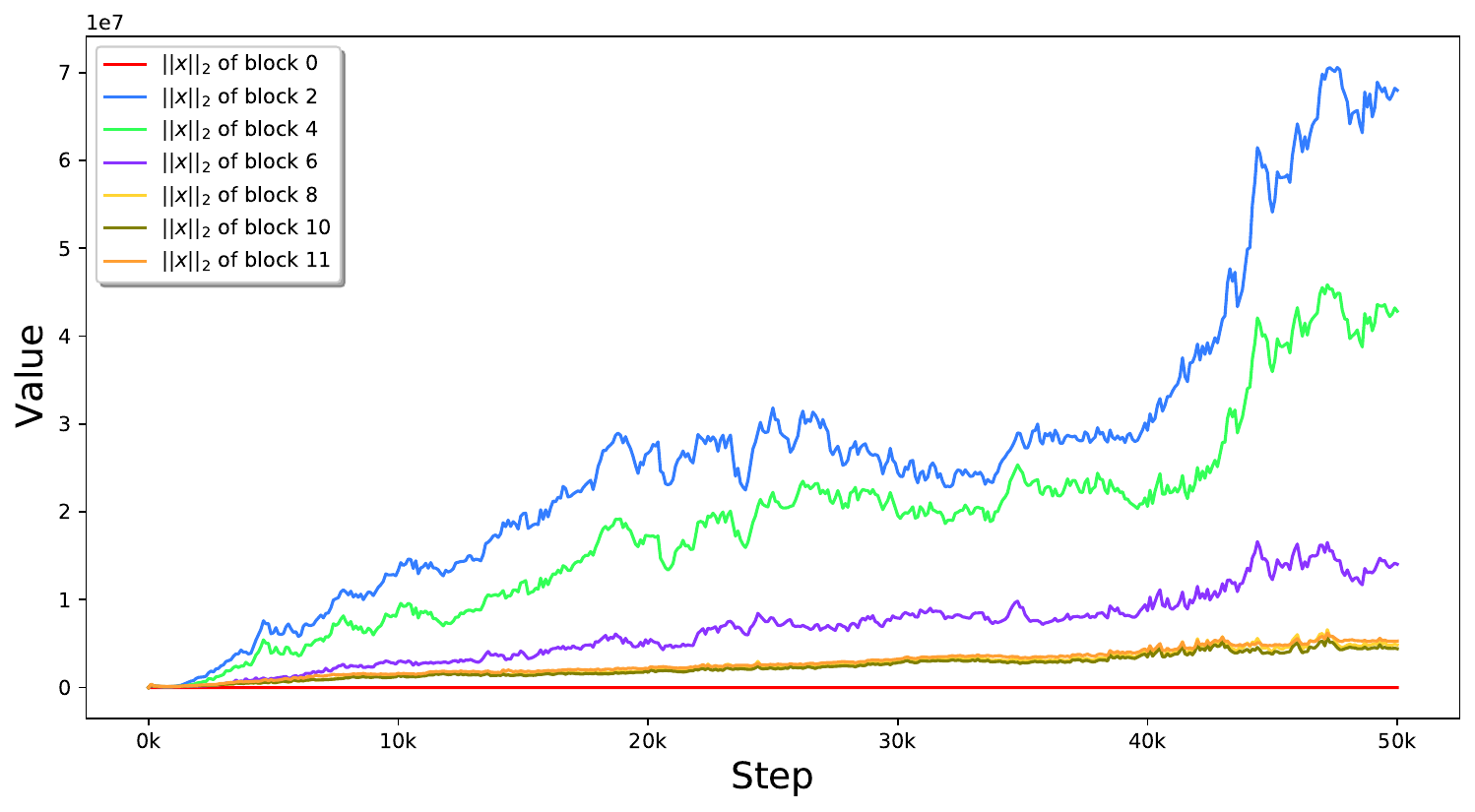}
		\caption*{(l) $\left\|{\bx}\right\|_2$}
		
	\end{subfigure}
        \begin{subfigure}{0.33\linewidth}
		\centering
            \includegraphics[width=4.5cm,height=3.8cm]{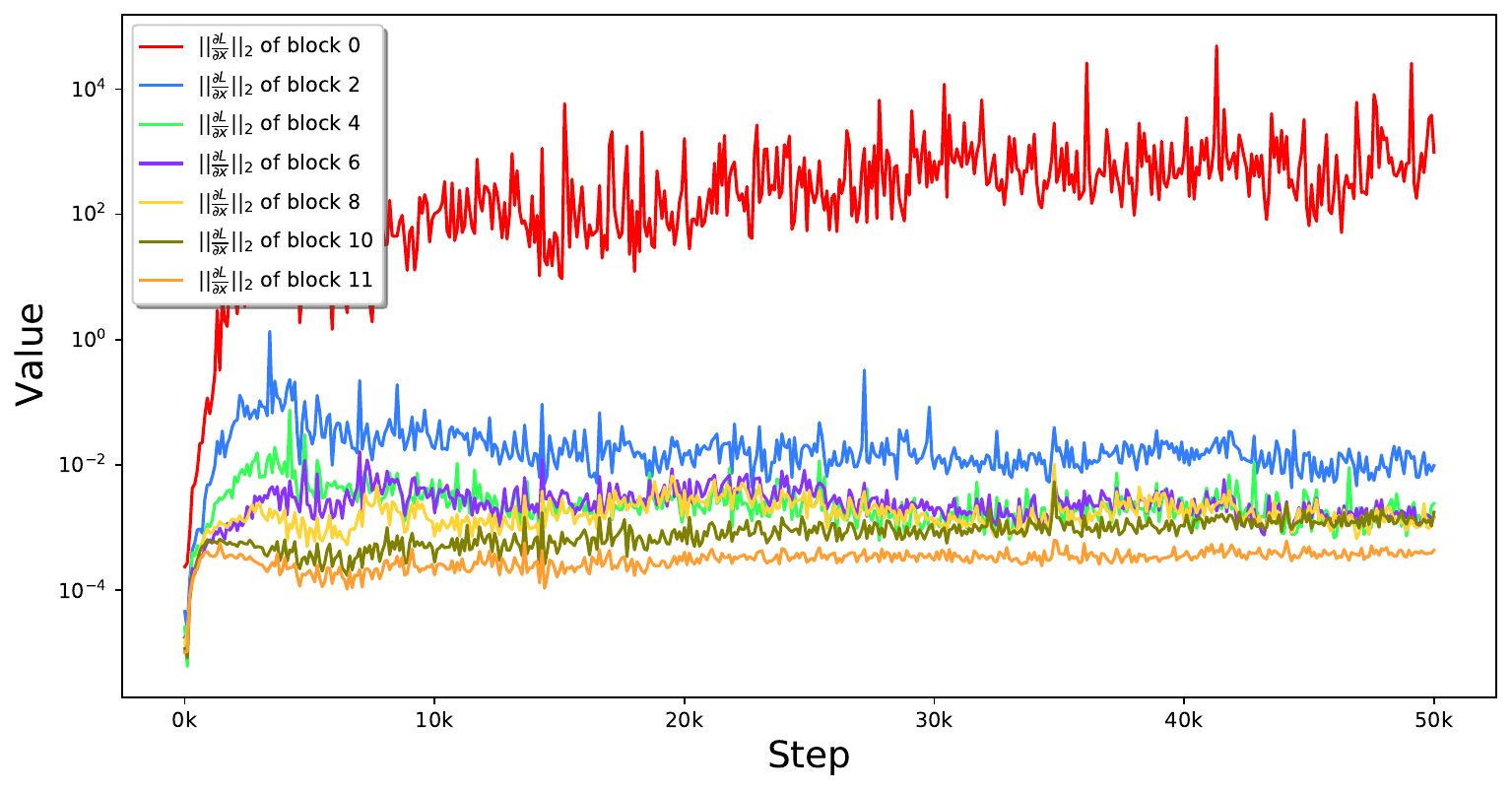}
            \caption*{(m) $\|{\frac{\partial L}{\partial \bx}}\|_2$}
		
	\end{subfigure}
 \caption{Training dynamics of an unsuccessful GPT training.} 
 \label{fig:why_gpt_fail}
\end{figure}

\newpage

\section{{Experiment of 1B ViT}}
{
To further evaluate the effectiveness of our method at a larger scale, we assessed ViT-g with 1B parameters. The ViT-g model architecture consists of 40 layers with a hidden dimension of 1408, 16 attention heads, and an MLP dimension of 6144. The total parameter count is 1011M, around one billion parameters.
We conducted a comparative study between ViT-g with $\text{AdamW}^2$ and ViT-g with AdamW, where ViT-g with AdamW was evaluated under two settings: with and without learning rate warmup. Our $\text{AdamW}^2$ does not use warmup. The comparison results are presented in Figure~\ref{fig:vit_g_loss} and Figure~\ref{fig:vit_g_acc}.}

{Figure~\ref{fig:vit_g_loss} shows that ViT-g with AdamW crashes after only a few training steps when running without warmup. While the use of warmup enables ViT-g to complete training, but the loss spikes one time. Our ViT-g with $\text{AdamW}^2$ not only achieves stable training without warmup but also demonstrates better performance. 
}

\begin{figure}[H]
\centering
\includegraphics[width=0.75\linewidth]{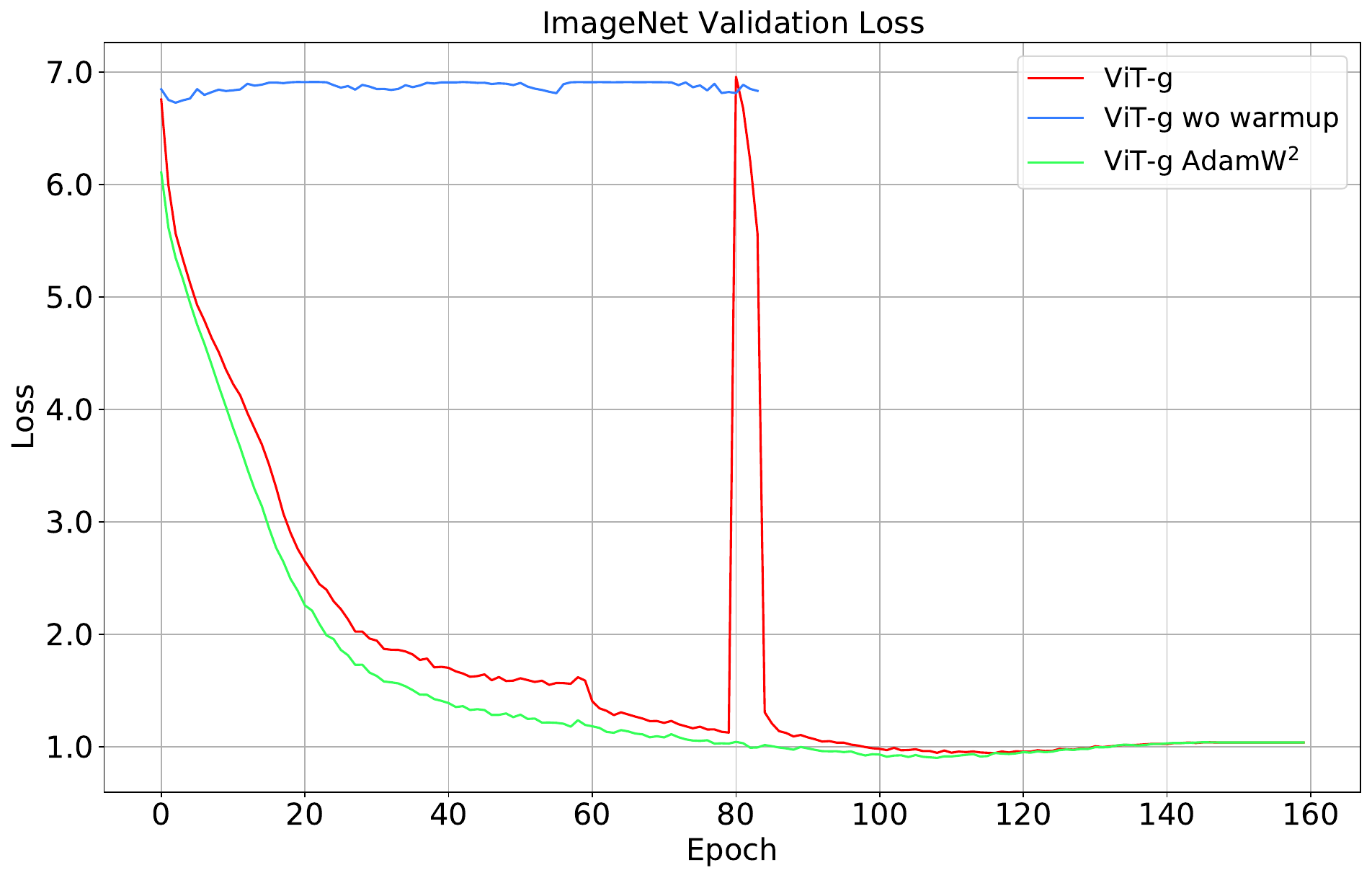}
\caption{Comparison of loss curve of $\text{AdamW}^2$ and AdamW on ViT-g model.}
\label{fig:vit_g_loss}
\end{figure}

\begin{figure}[H]
\centering
\includegraphics[width=0.75\linewidth]{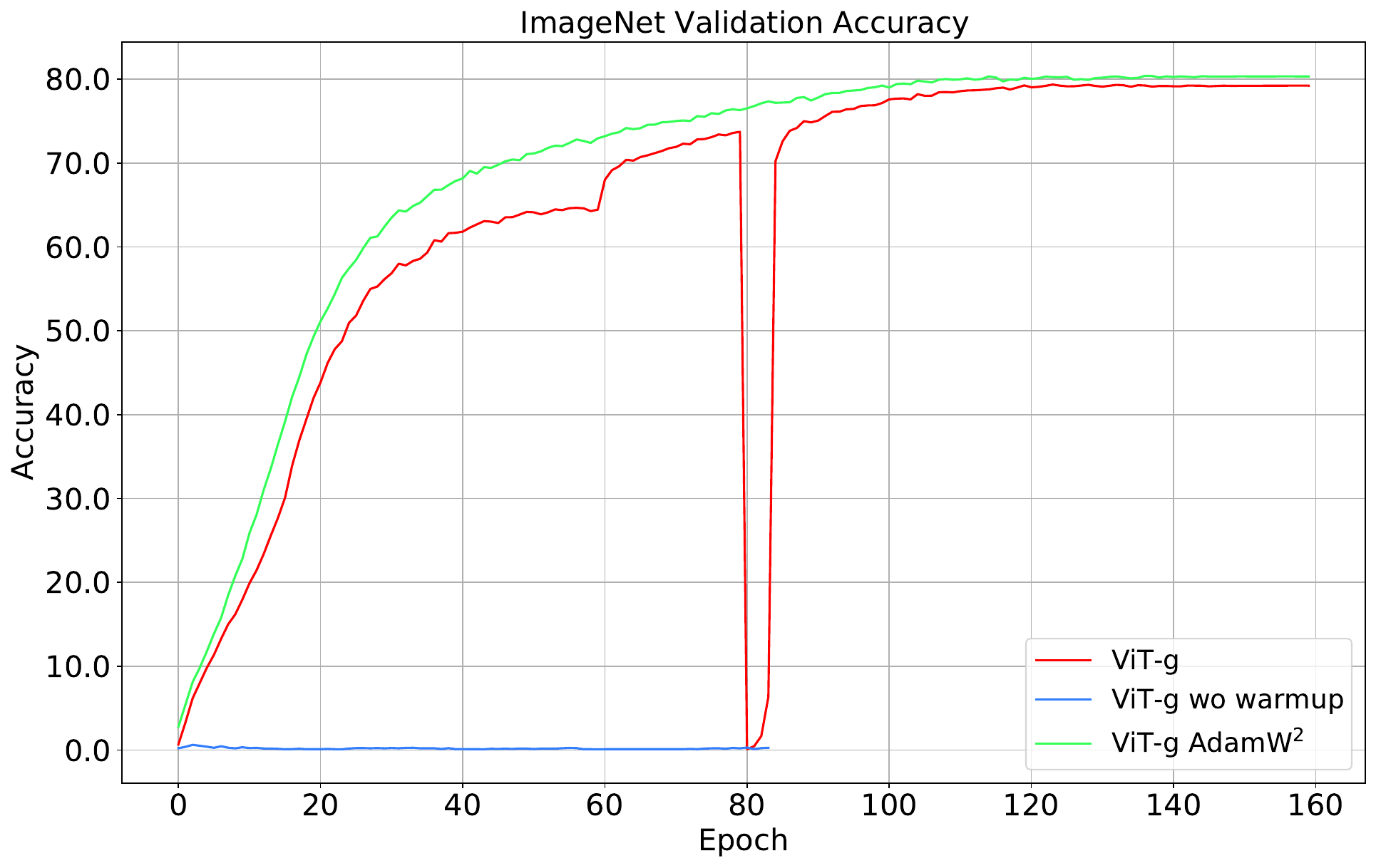}
\caption{Comparison of accuracy of $\text{AdamW}^2$ and AdamW on ViT-g model.}
\label{fig:vit_g_acc}
\end{figure}

\section{{Experiment of 774M nanoGPT}}
{We also evaluated the effectiveness of our method on  a larger-scale language model, termed as nanoGPT-large. The model architecture consists of 36 layers with a hidden dimension of 1280 and 20 attention heads. The total parameter count is 774M. Our experimental setup strictly follows the nanoGPT configuration, including all learning rate settings. It is important to note that training nanoGPT-large is computationally intensive, requiring two weeks to train 600K steps on 16 A800 GPUs. To reduce the training time, we limited our training to 100K steps instead of the full 600K steps. The comparison results are presented in Figure~\ref{fig:nanogpt_large_val_loss}. We can see from Figure~\ref{fig:nanogpt_large_val_loss}, nanoGPT-large achieves a stable training without warmup and obtains a similar validation loss with its counterpart, GPT2-large. This further verifies our understanding to the model crash of Transformer.}

\begin{figure}[H]
\centering
\includegraphics[width=0.75\linewidth]{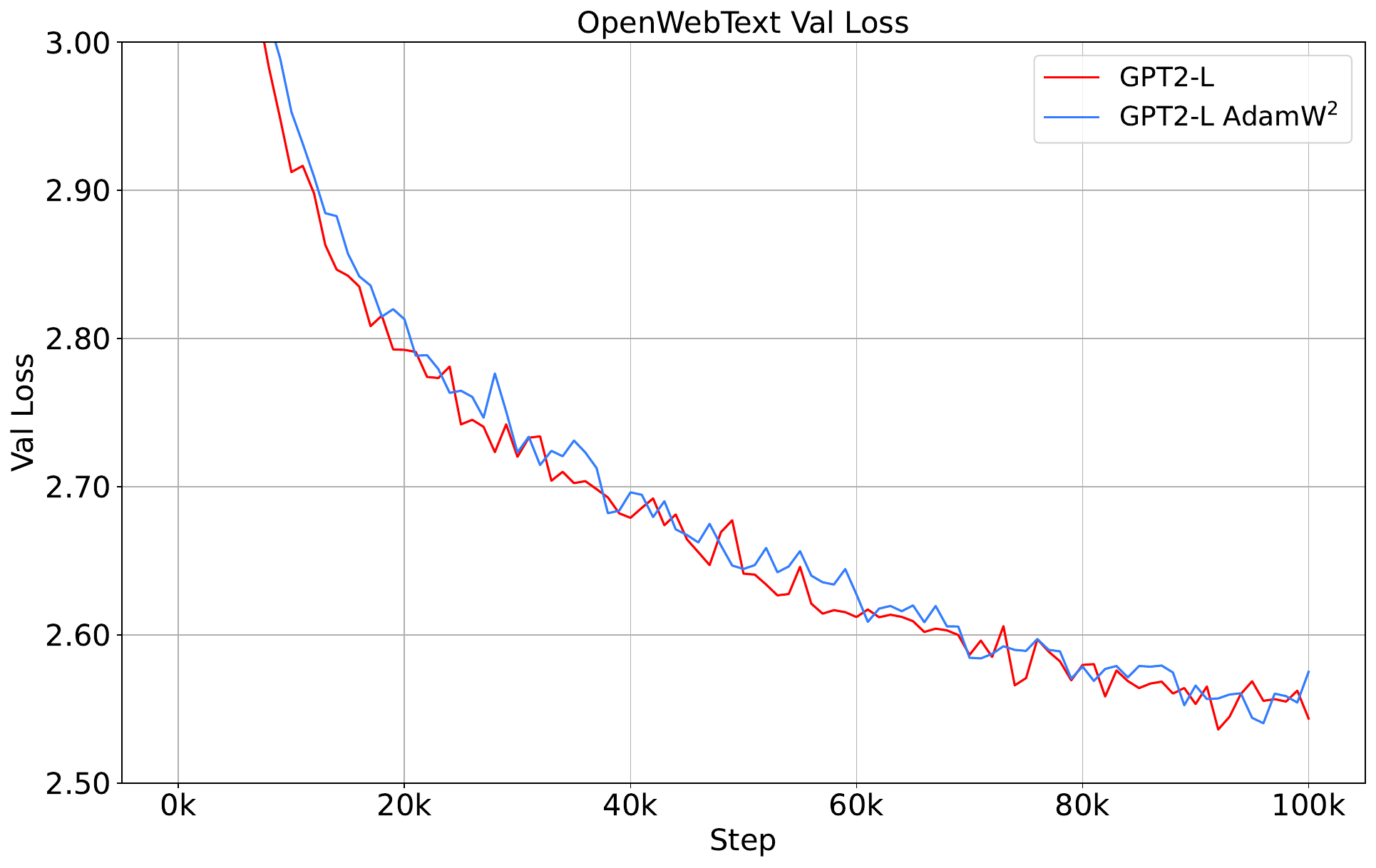}
\caption{Comparison of validation loss of $\text{AdamW}^2$ and AdamW on nanoGPT-large model.}
\label{fig:nanogpt_large_val_loss}
\end{figure}

\section{{Experiment of Flatten-Swin}}
{
Besides ViT, GPT, and Swin-Transformer, we further validated our approach on Flatten-Transformer~\cite{flatten_transformer_han2023flatten}. We used Flatten-Swin, and we compared the performance of our method and the baseline method training 150 and 300 epochs. Our method could stably train and demonstrate performance comparable to the baseline. This further verified the correctness of our understanding of neural network stability.}

\begin{figure}[H]
\centering
\includegraphics[width=0.75\linewidth]{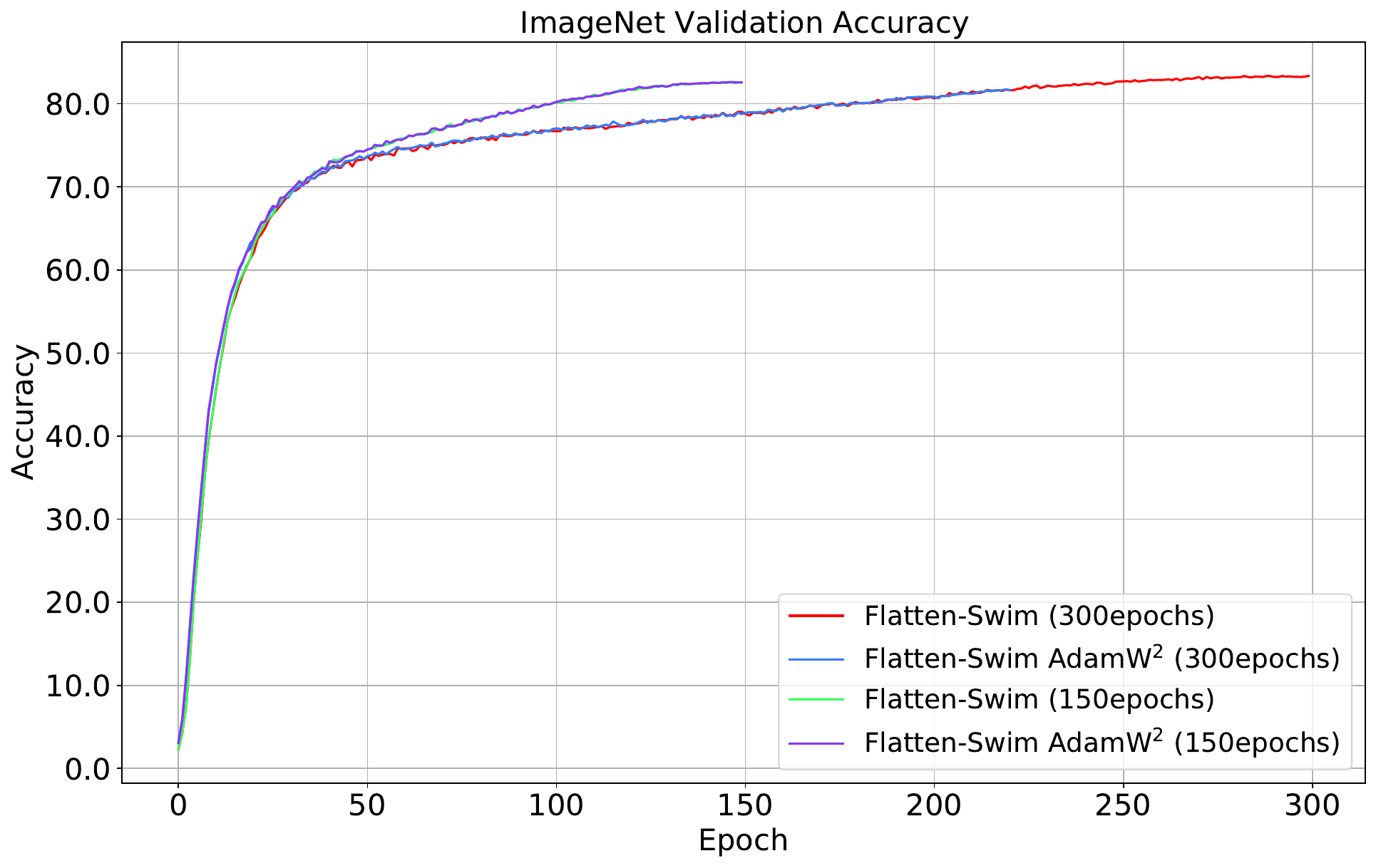}
\caption{Evaluation of Flatten-Swin.}
\label{fig:flatten_swin}
\end{figure}

\newpage

\section{{Actual Learning Rate Curve along with Training steps}}
{
We recorded the actual learning rate throughout the training steps, we sample one point every 50 steps. Our initial setting of the learning rate is  a cosine learning rate scheduler without warmup. If $\alpha_t \frac{\sigma_{1}(\nabla \bW_t)}{\sigma_{1}(\bW_{t-1})} > \tau$, then $\alpha_t$ will be truncated to $\tau \frac{\sigma_{1}(\bW_{t-1})}{\sigma_{1}(\nabla \bW_t)}$. From the figure, we observe that $\boldsymbol{\gamma_1}$ and $\boldsymbol{\gamma_2}$  in RMSNorm only exceed the preset $\tau$ during the initial training phase and rarely exceed it afterwards. For other curves, they somewhat look like a curve with learning rate  warmup, but we can see that different blocks have different learning rates.
}

{We also observe that shallower layers are more likely to violate the preset $\tau$ value. It means the shallower layers are more likely to lead to a greater update of weight matrix and typically require a smaller learning rate.
Additionally, we notice that for the weight matrix $\bW_2$, it is more prone to exceeding the preset $\tau$ value compared to the weight matrix $\bW_1$.}

\begin{figure}[htbp]
	\centering
	\begin{subfigure}{0.46\linewidth}
		\centering
		\includegraphics[width=6.2cm,height=4.8cm]{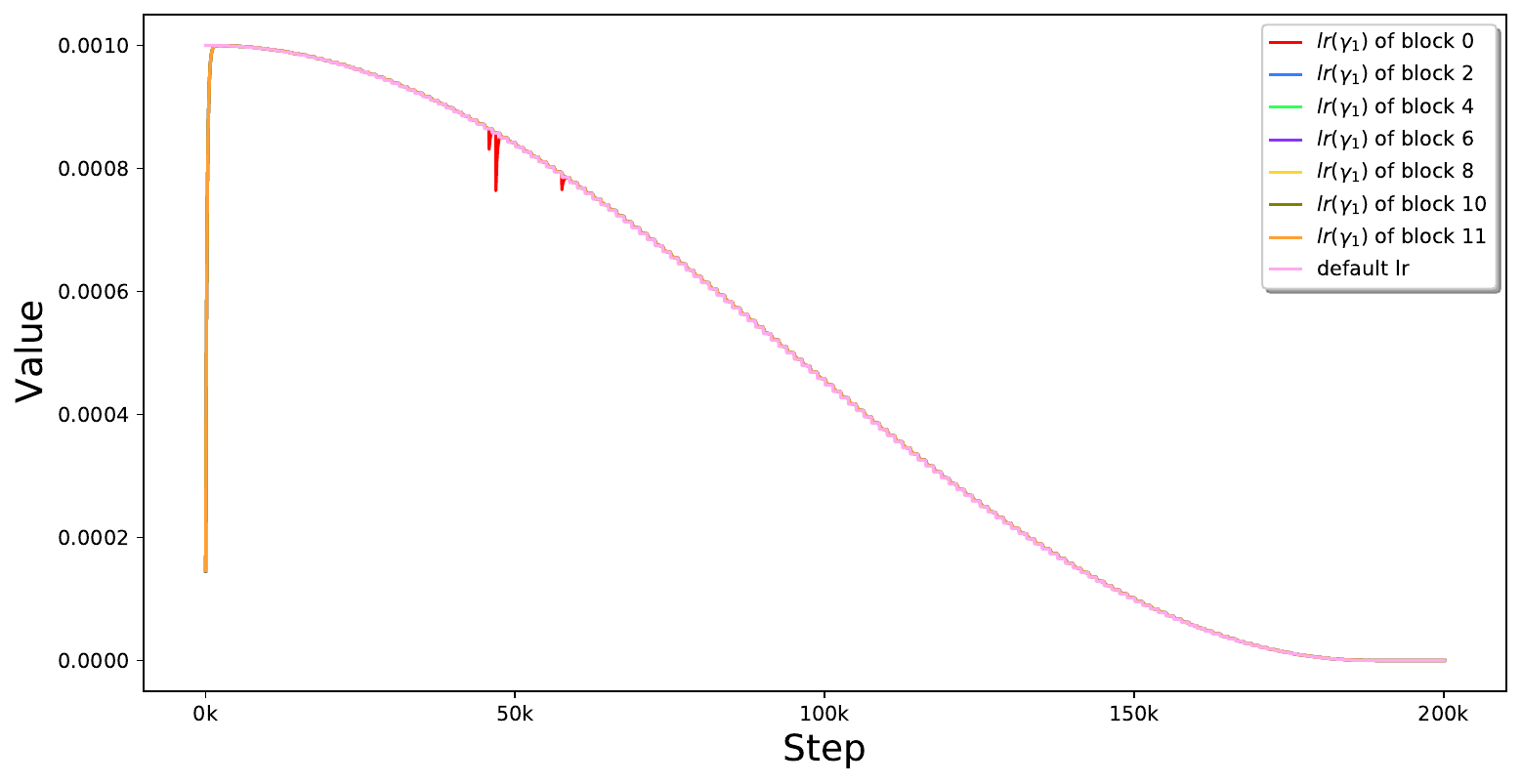}
		\caption*{(a) learning rate $\alpha_{t}$ of ${\boldsymbol{\gamma_1}}$}
	\end{subfigure}
	\begin{subfigure}{0.46\linewidth}
		\centering
		\includegraphics[width=6.2cm,height=4.8cm]{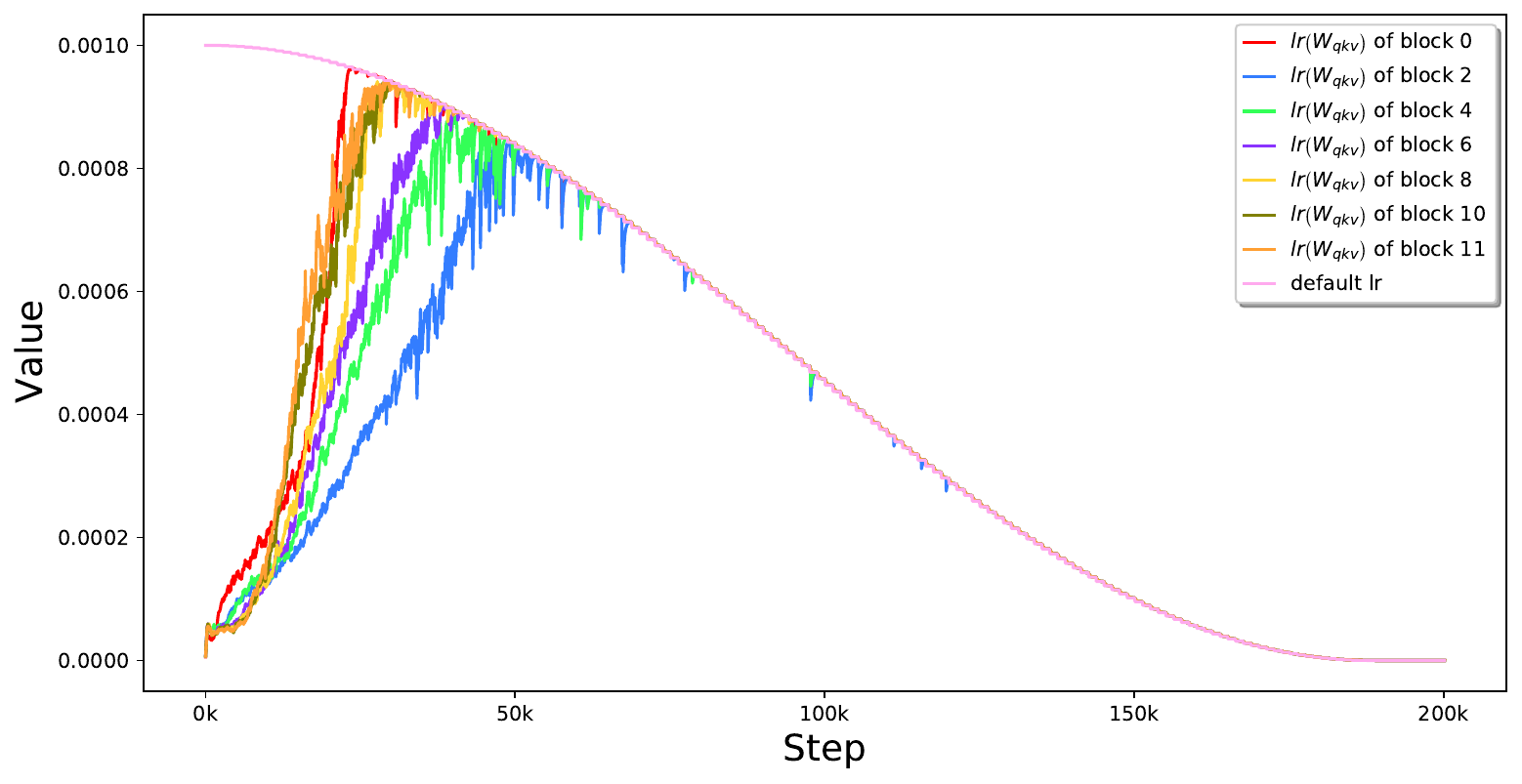}
		\caption*{(b) learning rate $\alpha_{t}$ of $\bW_q$, $\bW_k$ and $\bW_v$}
	\end{subfigure}
    
        \begin{subfigure}{0.46\linewidth}
		\centering
		\includegraphics[width=6.2cm,height=4.8cm]{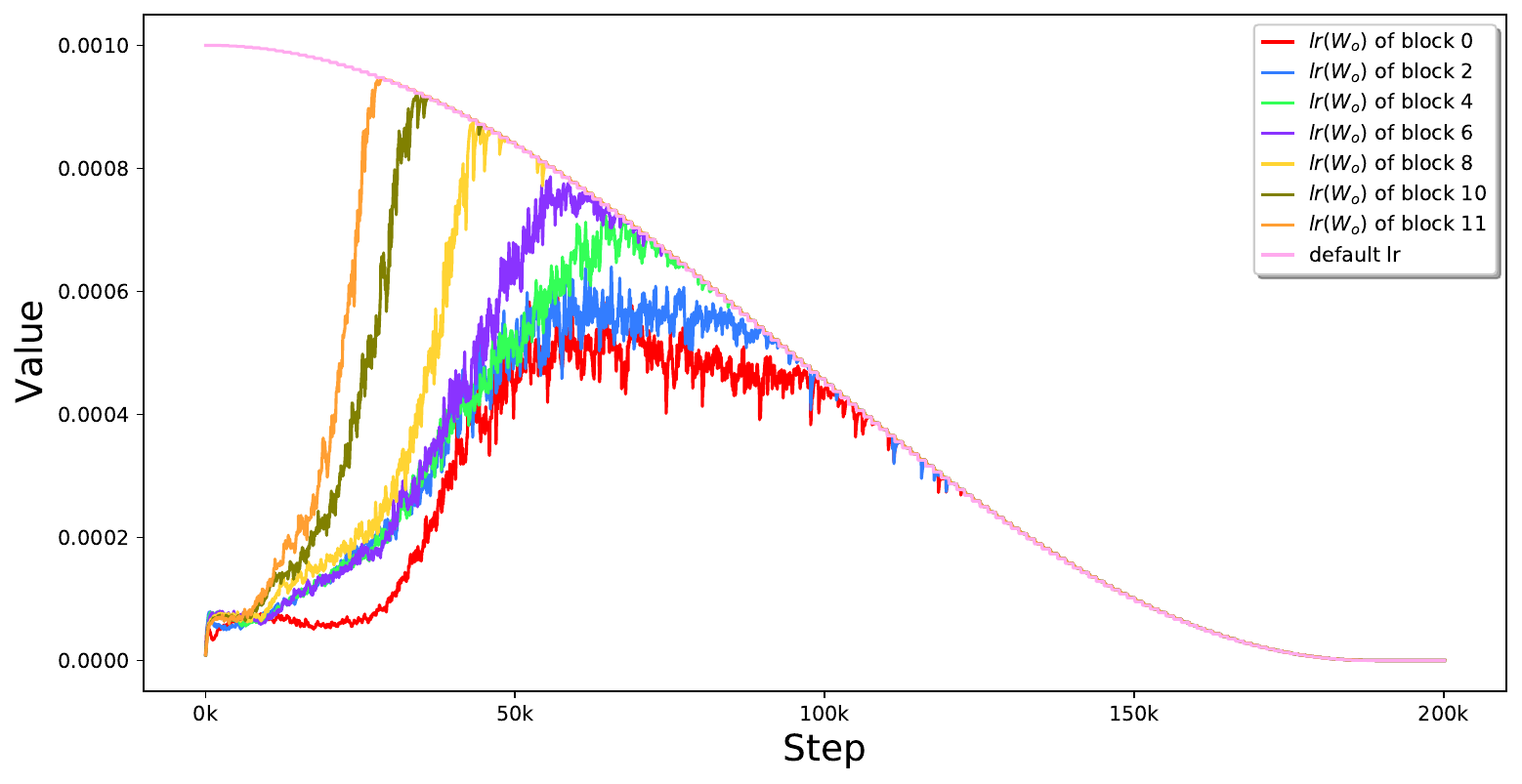}
		\caption*{(c) learning rate $\alpha_{t}$ of $\bW_o$}
	\end{subfigure}
	\begin{subfigure}{0.46\linewidth}
		\centering
		\includegraphics[width=6.2cm,height=4.8cm]{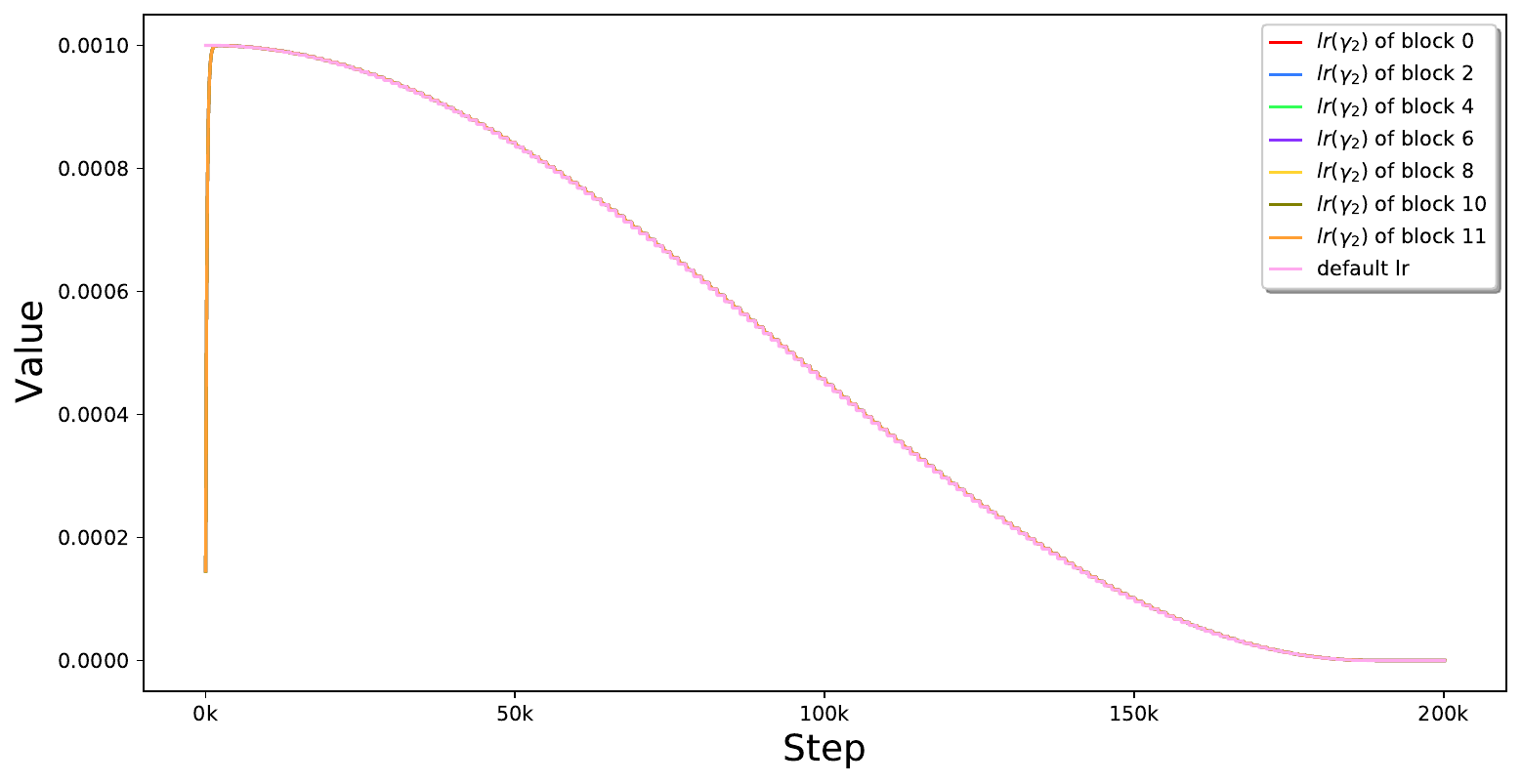}
		\caption*{(d) learning rate $\alpha_{t}$ of ${\boldsymbol{\gamma_2}}$}
		\label{fig:vit_sub_success_wqwk}
	\end{subfigure}
    
	\begin{subfigure}{0.46\linewidth}
		\centering
		\includegraphics[width=6.2cm,height=4.8cm]{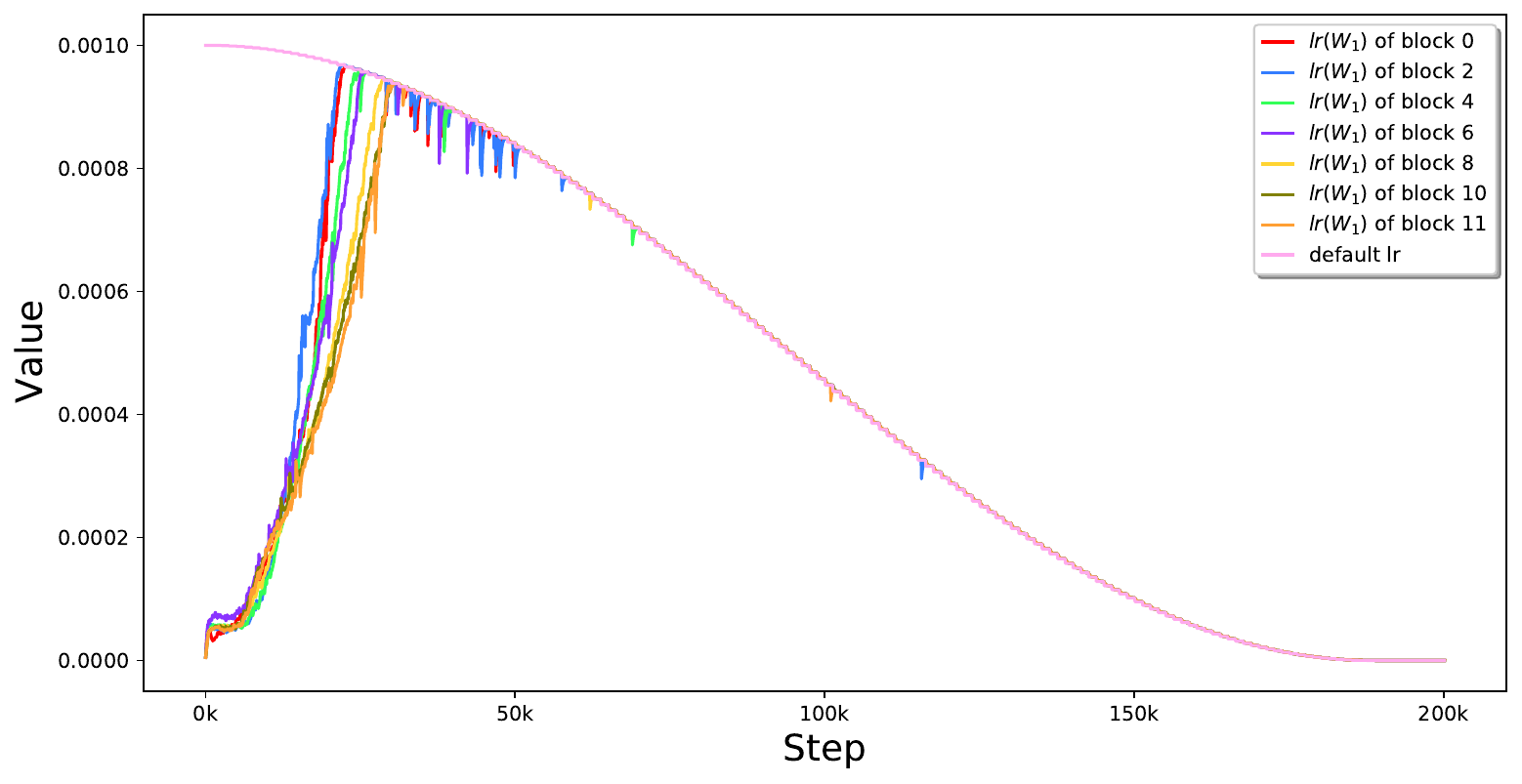}
		\caption*{(e) learning rate $\alpha_{t}$ of $\bW_1$}	
	\end{subfigure}
        \begin{subfigure}{0.46\linewidth}
		\centering
		\includegraphics[width=6.2cm,height=4.8cm]{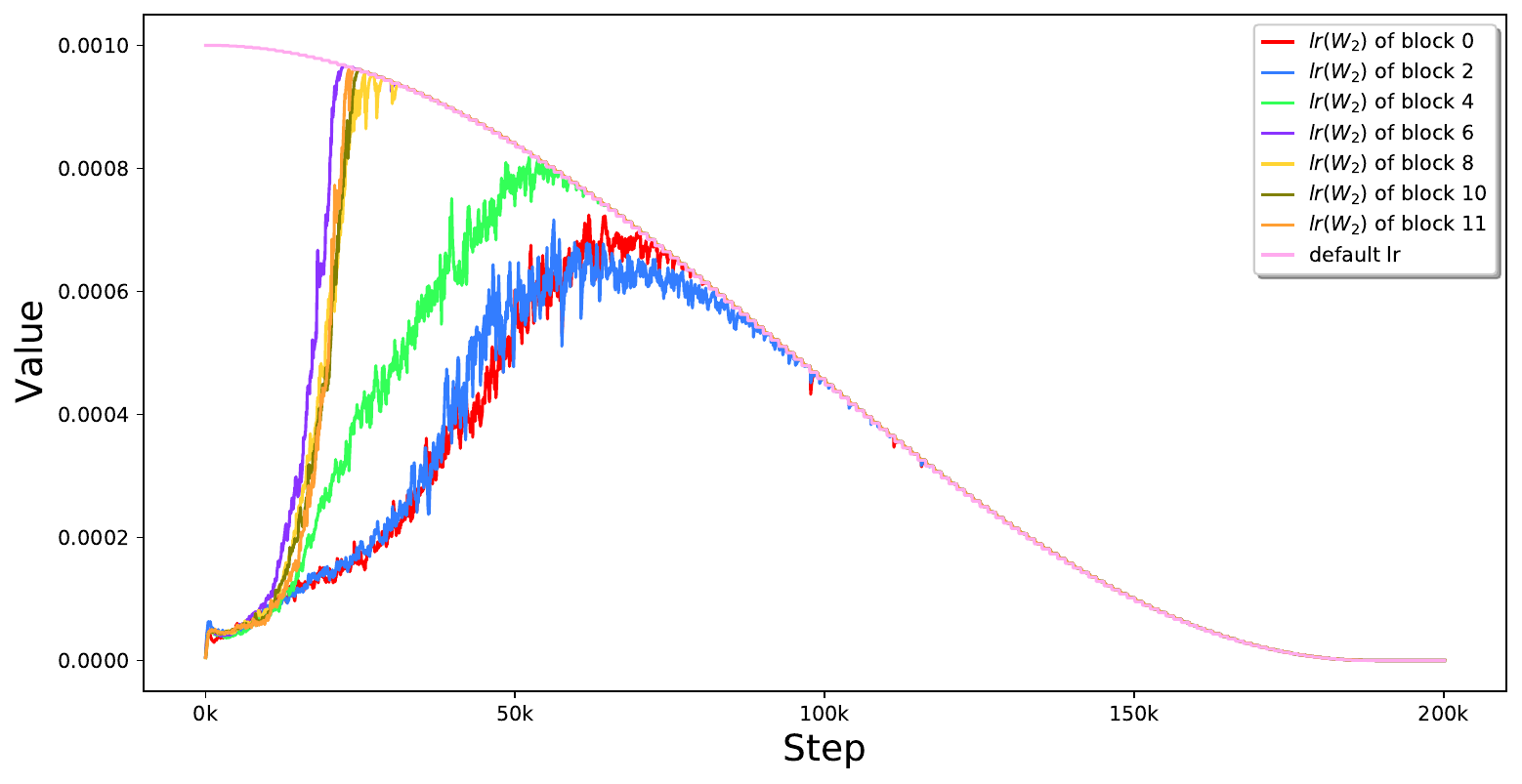}
		\caption*{(f) learning rate $\alpha_{t}$ of $\bW_2$}
		
	\end{subfigure}
 \caption{{Actual Learning Rate Curve along with Training steps.}}
 \label{fig:why_gpt_fail}
\end{figure}

\newpage

\section{Training Configurations}
\label{appendix:training_configurations}
{\textbf{Training Configurations.} We list the training configurations of  ViT, GPT, Swin-Transformer and Flatten-Swin in Table~\ref{tab:training_configuration}. For ViT, GPT, Swin-Transformer and Flatten-Swin, we do not use learning rate warmup.
For GPT, we follow the experimental configurations of nanoGPT~\citep{nanogpt_Karpathy2022}, all parameters are same as GPT2~\citep{gpt2_radford2019language}.  For ViT, we use Timm~\citep{timm_rw2019timm}. For Swin-Transformer, we use the original code provided by~\cite{liu2021swin}. For Flatten-Swin, we use the original code provided by~\citep{flatten_transformer_han2023flatten}.}

\begin{table}[H]
    \centering
\caption{Training configurations for ViT, GPT and Swin-Transformer.}
    \begin{subtable}[t]{0.495\linewidth}
    \caption*{(a) Training configurations for ViT.}
        \begin{tabular}{l|c} 
        training config & ViT-B/L/g ($224^{2}$)   \\ 
\hline
optimizer & $\text{AdamW}^2$ \\  
$\tau$ (In default)  & 0.004 or 0.003 \\
warmup epochs & 0 \\ 
weight init & Truncated Xavier \\ 
base learning rate & 1e-3 \\ 
weight decay & 0.05/0.1  \\ 
optimizer momentum & \(\beta_{1}, \beta_{2}=0.9,0.99\)  \\ 
batch size & 1024 \\ 
training epochs & 150  \\ 
learning rate schedule & cosine decay \\
randaugment  & \((9,0.5)\) \\ 
mixup  & 0.8 \\ 
cutmix & 1.0 \\
random erasing  & 0 \\ 
label smoothing  & 0.1 \\ 
stochastic depth  & \(0.1 / 0.5\) \\ 
gradient clip & None \\ 
exp. mov. avg. (EMA)  & no\\ 
\hline
\end{tabular}
    \end{subtable}
\begin{subtable}[t]{0.495\linewidth}
\centering
\caption*{(b) Training configurations for GPT.}
        \begin{tabular}{l|c} 
            training config & GPT-S/L   \\ 
\hline
optimizer & $\text{AdamW}^2$ \\  
$\tau$ & 0.01 \\
warmup epochs & 0 \\ 
weight init & Xavier  \\ 
baseline learning rate & 0.0006 or 0.00025\\ 
weight decay & 0.1  \\ 
optimizer momentum & \(\beta_{1}, \beta_{2}=0.9,0.95\) \\
tokens seen each update & 500,000 \\
max iters & 600K or 100K \\
batch size & 480 \\ 
sequence length & 1024 \\ 
dropout & 0.0 \\
bfloat16 & True  \\
gradient clipping & 1.0 \\
\hline
\end{tabular}     
\end{subtable} 

\begin{subtable}[t]{0.49\linewidth}
\vspace{10pt}
\centering
\caption*{(c) Training configurations for Swin-Transformer.}
\begin{tabular}{l|c} 
training config & Swin S/B ($224^{2}$)   \\ \hline
optimizer & $\text{AdamW}^2$ \\  
$\tau$ (In default) & 0.004 \\
warmup epochs & 0 \\ 
training epochs & 300  \\ 
others & same as~\cite{liu2021swin} \\
\hline
\end{tabular}
\end{subtable}
\begin{subtable}[t]{0.44\linewidth}
\vspace{10pt}
\centering
    \caption*{(d) Training configurations for Flatten-Swin.}
        \begin{tabular}{l|c} 
training config & Flatten-Swin S ($224^{2}$)   \\ 
\hline
optimizer & $\text{AdamW}^2$ \\  
$\tau$ (In default) & 0.004 \\
warmup epochs & 0 \\ 
training epochs & 150 or 300  \\ 
others & same as~\cite{flatten_transformer_han2023flatten} \\
\hline
\end{tabular}
\end{subtable}

\label{tab:training_configuration}
\end{table}

}

\

{
\section{non-symmetric positive quasi-definite square matrix}
\label{appendix:non_symmetric}
When we mention a non-symmetric positive quasi-definite square matrix, we mean it has the following three properties,
\begin{enumerate}[leftmargin=*]
    \item $\boldsymbol{W}_q^{\top} \boldsymbol{W}_k$ is not symmetric because generally,  $\boldsymbol{W}_q^{\top} \boldsymbol{W}_k \neq {\boldsymbol{W}_k^{\top} \boldsymbol{W}_q}$,
    \item $\boldsymbol{W}_q^{\top} \boldsymbol{W}_k$ is a square matrix and most of its eigenvalues are larger than 0, and only very few are less than 0.0. So we call it positive quasi-definite matrix. 
    \item if we assume $\boldsymbol{W}=\boldsymbol{W}_q^{\top} \boldsymbol{W}_k$ is positive definite matrix, if for each element in $\boldsymbol{x}$ is sampled from a standard Gaussian distribution, we can prove $\mathbb{E}\left[{\boldsymbol{x}_i}^{\top} \boldsymbol{W} \boldsymbol{x}_i\right] \gg \mathbb{E}\left[{\boldsymbol{x}_i}^{\top} \boldsymbol{W} \boldsymbol{x}_j\right]$ when $i\neq j$, see Appendix~\ref{appendix:benign_entropy_collapse} for the proof.
\end{enumerate}
}

{
\section{Discussion about Rank collapse, Entropy collapse and Sparse yet low-rank entropy matrix}

Before we start our discussion, let us see three matrices,

$\boldsymbol{A} = \begin{pmatrix} \frac{1}{5} & \frac{1}{5} & \frac{1}{5} & \frac{1}{5}& \frac{1}{5}\\ \frac{1}{5} & \frac{1}{5} & \frac{1}{5} & \frac{1}{5}& \frac{1}{5}\\ \frac{1}{5} & \frac{1}{5} & \frac{1}{5} & \frac{1}{5}& \frac{1}{5}\\ \frac{1}{5} & \frac{1}{5} & \frac{1}{5} & \frac{1}{5}& \frac{1}{5}\\ \frac{1}{5} & \frac{1}{5} & \frac{1}{5} & \frac{1}{5}& \frac{1}{5} \end{pmatrix}$, $\boldsymbol{B} = \begin{pmatrix} 1 & 0 & 0 & 0& 0\\ 0 & 1 & 0& 0& 0\\0 & 0 & 1& 0& 0\\ 0 & 0 & 0& 1& 0\\0 & 0 & 0& 0& 1  \end{pmatrix}$,  $\boldsymbol{C} = \begin{pmatrix} 1 & 0 & 0 & 0& 0\\ 1 & 0 & 0& 0& 0\\ 1 & 0 & 0& 0& 0\\ 1 & 0 & 0& 0& 0\\1 & 0 & 0& 0& 0 \end{pmatrix}$

We can see that $\boldsymbol{A}$ is low-rank, $\boldsymbol{B}$ is sparse but not low-rank, $\boldsymbol{C}$ is sparse and low-rank. 

In previous papers~\citep{rank_collapse_dong2021attention, stabilizing_transformer_zhai2023stabilizing}, researchers have analyzed the problem of model crash via \textit{rank collapse of activations and entropy collapse of attention map}. Dong et al.~\citep{rank_collapse_dong2021attention} attributes the model crash into \textit{rank collapse of the activations}, but \citet{stabilizing_transformer_zhai2023stabilizing} think it is the \textit{entropy collapse of the attention map} leading to the model crash.

However, based on our analysis, we can find a counterexamples  for entropy collapse, and meanwhile the rank collapse of the activation cannot fully describe the inner reason of the model crash (the weight matrix instead of activations).  When the state of $\boldsymbol{C}$ usually happens, the model crashed,
\begin{itemize}[leftmargin=*]
    \item Rank collapse of the activations cannot reveal the underlying cause that exists in the weight matrix. Weight matrix is the inner key ingredient of the model instead of activations.
    \item $\boldsymbol{B}$ is a counterexample of entropy collapse. we observe that in some successful cases, state $\boldsymbol{B}$ occurs. According to the definition of entropy collapse, state $\boldsymbol{B}$ should lead to model crash; however, our experiments show that the model remains stable in this state.
    \item Sparse yet low-rank attention matrix is the state of the attention map when a model crashs. We believe rank collapse of activations and entropy collapse of attention map are not enough to describe the state of the model crash precisely. According to our analysis, the Spectral Energy Concentration (SEC) of the $\boldsymbol{W}_q^{\top} \boldsymbol{W}_k$ is the inner reason the model crash, and the sparse yet low-rank attention matrix is  the phenomena observed on the attention matrix.
\end{itemize}

In summary, our paper, via a rigid theoritical analysis, our paper reveals the Spectral Energy Concentration (SEC) of the $\boldsymbol{W}_q^{\top} \boldsymbol{W}_k$ is the inner reason the model crash, and the sparse yet low-rank attention matrix is the phenomena that is observed on the attention matrix.
}

\

\ 

\ 

\section{Illustration of Figure 5}
\label{sec:illustration_of_figure5}
\myparagraph{Illustration of Arrow 1}

According to the property of Kronecker Product, we have

$$
\operatorname{rank}(\boldsymbol{X} \otimes \boldsymbol{X}) = \operatorname{rank}(\boldsymbol{X}) \cdot \operatorname{rank}(\boldsymbol{X}).
$$

Since $\boldsymbol{X}$ is low-rank, then $\boldsymbol{X} \otimes \boldsymbol{X}$ is also low-rank. In the following, we will also prove the singular values of $\boldsymbol{X} \otimes \boldsymbol{X}$ will also strengthen the concertration of spectral energy into some directions with large singular values.

\ 

\myparagraph{Illustration of Arrow 2}

According to the computation of Jacobian matrix, we have
$$
\frac{\partial \operatorname{vec}(\boldsymbol{P})}{\partial \operatorname{vec}({\boldsymbol{W}_q}^{\top}{\boldsymbol{W}_k} )} =  \boldsymbol{X}^{\top} \otimes \boldsymbol{X}^{\top}
$$
where $\boldsymbol{P} = \boldsymbol{X}^{\top} \boldsymbol{W}_q^{\top} \boldsymbol{W}_k \boldsymbol{X}^{\top}.$

\ 

\myparagraph{Illustration of Arrow 3}

Let $\boldsymbol{X} \in \mathbb{R}^{m \times n}$ be a matrix with rank $r \leq \min(m,n)$. Denote the singular values of $\boldsymbol{X}$ as $\sigma_1 \geq \sigma_2 \geq \cdots \geq \sigma_r > 0$ and $\sigma_{r+1} = \cdots = \sigma_{\min(m,n)} = 0$.

Definition. (Singular Values of Kronecker Product)**
For a matrix $\boldsymbol{X}$, define $\Lambda(\boldsymbol{X})$ as the set of all possible products of its singular values, i.e.,$ \Lambda(\boldsymbol{X}) = \{\sigma_i \sigma_j : 1\leq i,j \leq r\}$.

Theorem (Singular Values of Kronecker Product)
For a low-rank matrix $\boldsymbol{X} \in \mathbb{R}^{m \times n}$ with rank $r$, the singular values of $\boldsymbol{X} \otimes \boldsymbol{X}$ are precisely the elements in $\Lambda(\boldsymbol{X})$. More formally, let $\{\mu_k\}$ be the set of singular values of $\boldsymbol{X} \otimes \boldsymbol{X}$. Then $\{ \mu_k\} = \{\sigma_i \sigma_j \ \text{where} 1 \leq i,j \leq r \}.$

Proof. Consider the singular value decomposition (SVD) of $\boldsymbol{X} = \boldsymbol{U}\boldsymbol{\Sigma} \boldsymbol{V}^T$, where
 $\boldsymbol{U} \in \mathbb{R}^{m \times m}$ is an orthogonal matrix,
 $\boldsymbol{V} \in \mathbb{R}^{n \times n}$ is an orthogonal matrix, and 
 $\boldsymbol{\Sigma} = \text{diag}(\sigma_1, \ldots, \sigma_r, 0, \ldots, 0)$. Then, the Kronecker product $\boldsymbol{X} \otimes \boldsymbol{X}$ can be expanded as:
$\boldsymbol{X} \otimes \boldsymbol{X} = (\boldsymbol{U}\boldsymbol{\Sigma} \boldsymbol{V}^T) \otimes (\boldsymbol{U}\boldsymbol{\Sigma} \boldsymbol{V}^T) = (\boldsymbol{U} \otimes \boldsymbol{U})(\boldsymbol{\Sigma} \otimes \boldsymbol{\Sigma})(\boldsymbol{V}^T \otimes \boldsymbol{V}^T)$. 
The singular values of $\boldsymbol{U} \otimes \boldsymbol{U}$ and $\boldsymbol{V}^T \otimes \boldsymbol{V}^T$ are all 1, as they are composed of orthogonal matrices.

Therefore, the singular values of $\boldsymbol{X} \otimes \boldsymbol{X}$ are precisely the elements of $\boldsymbol{\Sigma} \otimes \boldsymbol{\Sigma}$, which are exactly $\{\sigma_i \sigma_j : 1 \leq i,j \leq r\}$.

In summary, the Singular Values of Kronecker Product $\boldsymbol{X} \otimes \boldsymbol{X}$ will also strengthen the concertration of spectral energy into some directions with large singular values. In this way, the singular values of  $\frac{\partial \operatorname{vec}(\boldsymbol{P})}{\partial \operatorname{vec}({\boldsymbol{W}_q}^{\top}{\boldsymbol{W}_k} )}$ will also concentrate to a few directions.

When we update $\boldsymbol{W}_q^{\top} \boldsymbol{W}_k$ according to the following equation:
$$
\operatorname{vec}(\boldsymbol{W}_q^{\top} \boldsymbol{W}_k)_{new} =  \operatorname{vec}(\boldsymbol{W}_q^\top \boldsymbol{W}_k)_{old} - \alpha \frac{\partial \mathcal{L}}{\partial \operatorname{vec}(\boldsymbol{P})} \frac{\partial \operatorname{vec}(\boldsymbol{P})}{\partial \operatorname{vec}({\boldsymbol{W}_q}^{\top}{\boldsymbol{W}_k} )}
$$

where $\mathcal{L}$ is the loss function, and $\alpha$ is the step size. Since the singular values of $\frac{\partial \operatorname{vec}(\boldsymbol{P})}{\partial \operatorname{vec}({\boldsymbol{W}_q}^{\top}{\boldsymbol{W}_k} )}$ will also concentrate to a few directions, the update will lead to the singular values of $\boldsymbol{W}_q^{\top} \boldsymbol{W}_k$ tends to concentrate.

\ 

\myparagraph{Illustration of Arrow 4}

Our Theorem 1 in the paper is to demonstrate that the spectral energy concentration of $\boldsymbol{W}_q^{\top} \boldsymbol{W}_k$ and the associated dominant large singular values will lead to $\boldsymbol{A}$ to be a sparse yet low-rank matrix.

\

\myparagraph{Illustration of Arrow 5}

Since $\boldsymbol{X}^{l+1} = \boldsymbol{V}\boldsymbol{A} $, where $\boldsymbol{V}$ is the value matrix after projection in attention module and $\boldsymbol{A}$ is the attention matrix, according to linear algebra, we have, 

$$
\text{rank}(\boldsymbol{X}^{l+1}) < \min\{ \text{rank}(\boldsymbol{V}), \text{rank}(\boldsymbol{A}) \}.
$$

Thus, we have that $\boldsymbol{X}^{l+1}$ is also a low-rank matrix.

\end{document}